\icmltitlerunning{Beyond $0.5$-Approximation for Submodular Maximization on Massive Data Streams}
\newtheorem{theorem}{Theorem}[section]
\newtheorem{lemma}[theorem]{Lemma}
\newtheorem{observation}[theorem]{Observation}
\newtheorem{definition}[theorem]{Definition}
\newtheorem{claim}{Claim} % separate numbering for claims
\newtheorem{fact}[theorem]{Fact}
\DeclareMathOperator{\poly}{poly}
\newcommand{\cO}{\mathcal{O}}
\newcommand{\opt}{\cO}
\newcommand{\fopt}{\mathrm{OPT}\xspace}
\newcommand{\cT}{\mathcal{T}}
\newcommand{\eps}{\varepsilon}
\newcommand{\cA}{\mathcal{A}}
\newcommand{\cG}{\mathcal{G}}
\newcommand{\cE}{\mathcal{E}}
\newcommand{\cM}{\mathcal{M}}
\newcommand{\bR}{\mathbb{R}}
\newcommand{\bRplus}{\mathbb{R}_+}
\newcommand{\one}{\mathbbm{1}}
\newcommand{\ceil}[1]{\lceil #1 \rceil}
\newcommand{\ab}[1]{\left<{#1}\right>} % angle brackets
\newcommand{\rb}[1]{\left( #1 \right)} % round Brackets
\renewcommand{\sb}[1]{\left[ #1 \right]} % square Brackets
\newcommand{\abs}[1]{\left| #1 \right|} % | ... |
\renewcommand{\Pr}[1]{\mathbb{P}\left[{#1}\right]}
\newcommand{\E}[1]{\mathbb{E}\left[{#1}\right]}
\newcommand{\marginal}[2]{f\rb{#1|#2}}
\newcommand{\SEXYP}{\textsc{SubMax}\xspace}
\newcommand{\greedy}{\textsc{Greedy}\xspace}
\newcommand{\sieve}{\textsc{Sieve-Streaming}\xspace}
\newcommand{\Sieve}{\sieve}
\newcommand{\ouralgo}{\textsc{Salsa}\xspace}
\newcommand{\twopass}{\textsc{Two-Pass}\xspace}
\newcommand{\ppass}{\textsc{P-Pass}\xspace}
\newcommand{\dense}{\textsc{Dense}\xspace}
\newcommand{\fixed}{\textsc{Fixed Threshold}\xspace}
\newcommand{\highlow}{\textsc{High-Low Threshold}\xspace}
\begin{document}
 
\twocolumn[
\icmltitle{Beyond $1/2$-Approximation for Submodular Maximization \\on Massive Data Streams}

\icmlsetsymbol{equal}{*}

\begin{icmlauthorlist}
\icmlauthor{Ashkan Norouzi-Fard}{equal,to}
\icmlauthor{Jakub Tarnawski}{equal,to}
\icmlauthor{Slobodan Mitrovi\'c}{equal,to}
\icmlauthor{Amir Zandieh}{equal,to}
\icmlauthor{Aida Mousavifar}{to}
\icmlauthor{Ola Svensson}{to}
\end{icmlauthorlist}

\icmlaffiliation{to}{Theory of Computation Laboratory, EPFL, Lausanne, Vaud, Switzerland}

\icmlcorrespondingauthor{Ashkan Norouzi-Fard}{ashkan.afn@gmail.com}

% You may provide any keywords that you
% find helpful for describing your paper; these are used to populate
% the "keywords" metadata in the PDF but will not be shown in the document
\icmlkeywords{Submodular Maximization, Streaming, Optimization, Monotone, ICML}

\vskip 0.3in
]

% this must go after the closing bracket ] following \twocolumn[ ...

% This command actually creates the footnote in the first column
% listing the affiliations and the copyright notice.
% The command takes one argument, which is text to display at the start of the footnote.
% The \icmlEqualContribution command is standard text for equal contribution.
% Remove it (just {}) if you do not need this facility.

%\printAffiliationsAndNotice{}  % leave blank if no need to mention equal contribution
\printAffiliationsAndNotice{\icmlEqualContribution} % otherwise use the standard text.

% !TEX root = proofdrafts.tex
\begin{abstract}
Many tasks in machine learning and data mining,
such as
data diversification,
non-parametric learning,
kernel machines,
clustering
etc.,
require
extracting a small but representative
summary
from a
massive
dataset.
Often, such problems
can be
posed as
maximizing a \emph{submodular} set function
subject to a cardinality constraint.
We consider this question in the \emph{streaming setting},
where elements arrive
over time at a fast pace
and
thus
we need to design an efficient, low-memory algorithm.
One such method,
proposed by
Badanidiyuru et al.~(2014),
always finds
a $0.5$-approximate solution.
Can this approximation factor be improved?
We answer this question affirmatively by
designing a new algorithm \ouralgo
for streaming submodular maximization.
It is the first low-memory, single-pass algorithm
that improves the factor $0.5$,
under the natural assumption that elements arrive in a random order.
We also show that this assumption is necessary,
i.e.,
that there is no such algorithm
with better than $0.5$-approximation
when elements arrive in arbitrary order.
Our experiments demonstrate that \ouralgo
significantly outperforms the state of the art
in applications related to
exemplar-based clustering,
social graph analysis,
and
recommender systems.
\end{abstract} 

% !TEX root = main.tex
\section{Introduction}

We are experiencing
an unprecedented growth
in the sizes of
modern
%current
datasets.
Streams of data
of massive volume
are generated every second,
coming from many
different
sources
in industry and science
%and applications
such as:
image and video streams,
sensor data,
social networks,
stock markets,
and many others.
Sometimes,
such data is produced so rapidly
that most of it cannot even be stored
in any way.
In this context,
%\emph{data summarization}
%emerges as a
a
%natural and
%compelling approach
critical task
%emerging in machine learning and data mining
is
\emph{data summarization}:
one of
extracting a representative subset of manageable size
from
rich,
large-scale data streams.
A central topic in machine learning and data mining today,
its
main
challenge is to produce
such a concise yet high-value summary
while doing so
efficiently
and
on-the-fly.
%and in a computationally tractable way.
%enables reliable, real-time analytics

In many applications,
this challenge can be viewed
as optimizing a \emph{submodular function}
subject to a cardinality constraint.
Indeed, submodularity
-- an intuitive notion of diminishing returns,
which postulates that an element should contribute more to a smaller set than to a larger one --
plays a similar role in this setting
as convexity does in continuous optimization.
Namely,
it is general enough to model a multitude of practical scenarios,
such as
viral marketing \cite{kempe2003maximizing},
recommender systems \cite{elarini2011beyond},
search result diversification \cite{agrawal2009diversifying}
or active learning \cite{golovin2011adaptive},
while allowing for both theoretically and practically sound and efficient algorithms.
In particular, a celebrated result by Nemhauser et al.~\yrcite{nemhauser1978analysis}
shows that the \greedy algorithm
-- one that iteratively picks the element with the largest marginal contribution to the current summary --
is a $\rb{1 - \nicefrac{1}{e}}$-approximation
%algorithm
for maximizing a monotone submodular function
subject to a cardinality constraint.
That is,
the objective value that it attains
is at least
a $\rb{1-\nicefrac{1}{e}}$-fraction
of the optimum.
This approximation factor is NP-hard to improve~\cite{feige1998threshold}.
Unfortunately,
\greedy requires repeated
%random access to the data,
access to the complete dataset,
which precludes it from
use in
large-scale applications
in terms of both memory and running time.

%This implies
The sheer bulk of large datasets
%as well as
and
the infeasibility of \greedy
together imply
a growing need for faster
and memory-efficient algorithms,
ideally ones that can work
in the \emph{streaming setting}:
one where input arrives one element at a time,
rather than being available all at once,
and only a small portion of the data
can be kept in memory at any point.
The first such algorithm
was given by Chakrabarti and Kale~\yrcite{chakrabarti2014submodular},
%\Jnote{cite journal version from 2015 or conference version from 2013? in the former case it's weird that 2015 < 2014}
%obtaining
yielding
a $0.25$-approximation
%with
while requiring only a single pass over the data,
in arbitrary order,
and using
$O(k)$ function evaluations per element
and
$O(k)$ memory.\footnote{
	We make the usual assumption that one can store any element,
	or the value of any set,
	using $O(1)$ memory.
	The memory usage calculation in \cite{chakrabarti2014submodular}
	is
	%more low-level,
	lower-level,
	which results in an extra $\log n$ factor.
	Furthermore,
	their algorithm can be implemented using a priority queue,
	which would result in a runtime of $O(\log k)$ per element.
	%\Jnote{do we want to say that last thing?}
}
A more accurate and efficient method
%The most accurate and efficient such algorithm known so far, \Jnote{is that ok... wanted to say ``the first'' but apparently Chakrabarti-Kale were first and also are ``such an algorithm''}
\sieve
was proposed by Badanidiyuru et al.~\yrcite{badanidiyuru2014streaming}.
%For the problem of maximizing a monotone submodular function
%subject to a cardinality constraint of $k$ elements,
For any $\eps > 0$,
it
provides a $(0.5 - \eps)$-approximation
and uses
$O\rb{\nicefrac{\log k}{\eps}}$ function evaluations per element
and
$O\rb{\nicefrac{k \log k}{\eps}}$ memory.
%\sieve
%works by inspecting each element
%on the stream
%and picking it if
%its
%marginal contribution to the current solution
%%is greater or equal to the threshold
%is at least the threshold value
%of $\frac{\opt}{2k}$.
%(Here $\opt$ is the optimum value,
%which can be
%%efficiently
%%approximately
%%guessed.)
%estimated by guessing.)
While well-suited for use
in big data stream scenarios,
its approximation guarantee is nevertheless
still
inferior
to that of \greedy.
It is natural to wonder:
can the
ratio $0.5$
be improved upon
by a more accurate algorithm?

It turns out that
in general,
the answer is no (modulo the natural assumption that the submodular function is only evaluated on feasible sets).
%-- at least in the case of arbitrary-order streams!
%We show that there does not exist any
As one of our results,
we show that:
\begin{theorem} \label{thm:hardness}
Any algorithm for streaming submodular maximization
that only queries the value of the submodular function on feasible sets (i.e.,  sets of cardinality at most $k$) and is an $\alpha$-approximation
for a constant $\alpha > 0.5$
must  use $\Omega(n/k)$ memory,
where $n$ is the length of the stream.
\end{theorem}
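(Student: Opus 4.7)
The plan is to invoke Yao's minimax principle: I will exhibit a distribution over monotone submodular instances together with a stream ordering such that every deterministic algorithm using $o(n/k)$ memory achieves expected approximation ratio at most $1/2 + o(1)$. Partition the stream into $N = n/k$ consecutive blocks $C_1, \ldots, C_N$, each of size $k$, and draw a hidden ``special'' block index $b^\star \in [N]$ uniformly at random. The associated function $f_{b^\star}$ is to be designed so that the optimum is (essentially) $C_{b^\star}$, with value $V$, whereas any feasible set $S$ containing only $o(k)$ elements of $C_{b^\star}$ satisfies $f_{b^\star}(S) \le V/2 + o(V)$.

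A concrete candidate is $f_{b^\star}(S) = \alpha \cdot \min(|S|, k) + \beta \cdot \phi(|S \cap C_{b^\star}|)$ for a concave $\phi$ with $\phi(j) = 0$ for small $j$, obtained for instance from the rank function of a suitable partition matroid or from a coverage function. The function must be monotone submodular, must realize the claimed $V$ versus $V/2$ gap, and must be statistically indistinguishable from $f_{b^{\star\star}}$ on every feasible set whose intersection with $C_{b^\star} \cup C_{b^{\star\star}}$ is small; in particular, singleton queries should carry no information about $b^\star$. Given such an $f_{b^\star}$, I would make the hiding quantitative via a reduction from one-way communication complexity (for example, from the INDEX problem or a multi-instance variant): the adversary encodes $b^\star$, or a richer string $x \in \{0,1\}^N$, into the stream, while the algorithm's memory contents play the role of Alice's message to a receiver that must recover enough information to produce an output containing $\Omega(k)$ elements of $C_{b^\star}$.

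Combining the pieces, any algorithm whose memory is $o(n/k)$ cannot, by the reduction, retain enough information about $b^\star$ to output $\Omega(k)$ elements of $C_{b^\star}$; by symmetry, its output intersects $C_{b^\star}$ in only $o(k)$ elements in expectation, capping the expected value at $V/2 + o(V)$ and ruling out any fixed approximation ratio $\alpha > 1/2$. The main obstacle I anticipate is the joint design of $f_{b^\star}$: one must simultaneously secure monotone submodularity, the precise $1/2$ optimum gap, and genuine hiding of $b^\star$ against an algorithm that may adaptively query $f$ on arbitrary feasible sets, not merely in a one-shot manner. Naive modular or linear-plus-bonus constructions leak $b^\star$ through singleton queries, so the ``bonus'' for $C_{b^\star}$ must be activated only in aggregate, via a concave threshold or a matroid/coverage mechanism; one must then verify that this hiding survives an adaptive sequence of feasible-set queries, which is the subtle technical step and the place where the restriction to sets of cardinality at most $k$ in the theorem statement is crucially used.
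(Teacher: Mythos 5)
Your high-level plan is the right one and matches the paper's: reduce from the one-way communication complexity of \textsc{INDEX}, with the stream's memory state serving as Alice's message to Bob, and design a monotone submodular $f$ that hides the special index against feasible-set queries. The paper indeed constructs a universe of $(2k+1)m$ elements, encodes $x\in\{0,1\}^m$ into Alice's portion of the stream ($k$ elements per coordinate), appends a single element $w_i$ of Bob's at the end, and uses the $\Omega(m)$ lower bound for \textsc{INDEX} (with public randomness).

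However, your candidate function design has a genuine gap. You propose
$f_{b^\star}(S) = \alpha \cdot \min(|S|, k) + \beta \cdot \phi(|S \cap C_{b^\star}|)$
with a concave $\phi$ satisfying $\phi(j)=0$ for small $j$, so that the bonus is ``activated only in aggregate.'' This is impossible: if $\phi$ is concave, nonnegative, and $\phi(0)=\phi(j_0)=0$ for some $j_0>0$, then concavity forces $\phi\equiv 0$. More generally, any mechanism where a ``bonus'' turns on only after enough $C_{b^\star}$ elements accumulate is in tension with submodularity (diminishing returns), which is exactly why naive threshold constructions fail here. The paper resolves this with a different trigger: the bonus is not an aggregate threshold over Alice's block but a \emph{single designated element} $w_i$ that Bob appends at the very end. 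Concretely, the function gives $\min(k,|V_i\cap S|)+|U_i\cap S|$ when $w_i\notin S$ and $k+|U_i\cap S|$ when $w_i\in S$, where $U_i$ is the ``special'' block. This is monotone submodular (it is essentially a sum of a uniform-matroid rank and a modular term, with $w_i$ able to saturate the matroid), and crucially it collapses to $f(S)=|S|$ on every feasible set of Alice's elements, so adaptive queries reveal nothing about $i$. Without this single-element trigger, your hiding argument does not go through; with it, the indistinguishability you flag as ``the subtle technical step'' becomes immediate, and the optimum gap ($2k-1$ versus $k$, i.e., ratio $k/(2k-1)\to 1/2$) follows directly.
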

This hardness includes randomized algorithms,
and applies even for \emph{estimating} the optimum value to within this factor,
without necessarily returning a solution (see \cref{sec:hardness} for the proof).\footnote{
	Moreover,
	note that \cref{thm:hardness} does \emph{not} follow
	from the work of Buchbinder et al.~\yrcite{buchbinder2015online},
	who proved an approximation hardness of $0.5$
	for \emph{online} algorithms
	%that must always store a solution
	%consisting of at most $k$ elements.
	whose memory state must always be a feasible solution
	(consisting of at most $k$ elements).
	%\Jnote{was that what we wanted to say?}
}
Note that usually $n/k \gg k$;
such an algorithm therefore cannot run
in a large-scale streaming setting.

However,
%the aforementioned
this
bound pertains to
%the case of
\emph{arbitrary-order} streams.
An immediate question, then,
is whether 
inherent randomness present in the stream can be helpful
%to obtain
in obtaining
higher accuracy.
Namely,
%let us assume that the data is arbitrary, but presented in a random order on the stream.
in many real-world scenarios
the data arrives
% (or can be made to arrive)
in random order,
or can be processed in random order.
%\Jnote{Motivate this more believably...}
%\Jnote{do we know of ANY CONCRETE application where they really do?}
This can be seen as a sweet spot between
assuming that the data is drawn randomly from an underlying prior distribution
-- which is usually unrealistic --
and needing to allow for instances whose contents and order are both adversarially designed
-- which also do not appear in applications.
Is it possible to obtain a better approximation ratio
under this natural assumption?
%on the order of elements on the stream?

Again, we begin with a negative result:
%On the negative side,
%we show that
%it is not possible to do so
%with [a simple modification of]
%using
%by just running
%\sieve.
the performance of
the state-of-the-art
\sieve algorithm
remains capped at $0.5$
in this setting.
%still does not surpass the $0.5$ barrier. % greetings to Aleksander!
\begin{theorem} \label{thm:sieve_half}
There exists a family of instances
on which the approximation ratio of \sieve
is at most $0.5 + o(1)$
even if elements arrive in random order.
\end{theorem}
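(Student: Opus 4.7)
\textbf{Proof proposal for \cref{thm:sieve_half}.}
My plan is to exhibit an explicit family of monotone submodular instances on which, under random-order arrivals, the expected approximation ratio achieved by \sieve tends to $\tfrac{1}{2}$ as $k\to\infty$.

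A natural construction to try takes $V = O \sqcup B$ with $|O|=k$ and $|B| = N = \mathrm{poly}(k)$ (for concreteness, $N=k^3$), together with a monotone submodular function $f$ tailored so that: (a) every ``decoy'' $b \in B$ has singleton value $k-1$ and all decoys collide on a common heavy item, so that after the first decoy is selected no further decoy contributes anything; (b) every ``good'' element $o \in O$ has singleton value $1$ and covers a single unique unit-weight item; and (c) the optimum is attained by one decoy together with $k-1$ good elements, giving $\fopt \approx 2(k-1)$.

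The high-level analysis then breaks into three parts. First, since $|B|\gg |O|$, with probability $1-o(1)$ the stream begins with a decoy and \sieve's running singleton-max $m$ jumps immediately to $k-1$; hence for the rest of the stream \sieve maintains only thresholds in the geometric grid $\{(1{+}\eps)^i\} \cap [k-1,\,2k(k-1)]$. Second, for each maintained $v$ I would trace the evolution of the corresponding set $A_v$ and split the analysis by the post-first-decoy residual $(v/2 - (k-1))/(k-1)$: for sufficiently large $v$ the residual exceeds every $o$'s marginal, so $A_v$ freezes at value $k-1$; for sufficiently small $v$ the residual is essentially zero and $A_v$ simply fills with the first $k$ stream arrivals, of which only $O(k^2/N) = o(k)$ are good in random order. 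In both regimes $f(A_v) \le (k-1) + o(k)$, and a union bound over the $O(\log k/\eps)$ grid points lifts this to $\max_v f(A_v)$. Dividing by $\fopt \approx 2(k-1)$ yields the claimed $\tfrac12 + o(1)$ bound.

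The main obstacle, and where the real care is needed, is the regime of \emph{intermediate} thresholds---those for which the post-decoy residual lies in $(0,\,1]$. In the naive construction every $o$ is admitted as it arrives, and since the stream eventually contains all $k$ good elements, $A_v$ can be filled up to $\{b\}\cup O'$ with $|O'|=k-1$, reaching $\fopt$ exactly. Closing this gap requires an extra twist---most naturally a small ``leakage'' in which each decoy partially covers every good element's unique item, reducing each $o$'s post-decoy marginal to $1-\delta$ for a carefully tuned $\delta=\Theta(\eps)$. The delicate part is choosing $\delta$, and more generally the marginal profile of the good elements, so that the acceptance window is strictly narrower than a single multiplicative step of the geometric grid and hence avoided by every maintained threshold, while still keeping $\fopt$ close to $2(k-1)$. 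A secondary technical subtlety is the low-probability event that several good elements precede the first decoy: here \sieve temporarily maintains thresholds below $k-1$ and populates them with good elements, but as soon as the first decoy arrives, $m$ jumps and those bags are discarded for falling outside the new admissible range, so the surviving $A_v$'s have absorbed only $o(k)$ early good elements---a contribution that fits inside the slack of the main bound.
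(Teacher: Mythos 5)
Your construction has a genuine gap at the intermediate thresholds, and the proposed ``leakage'' does not close it. After the first decoy enters $A_v$, \sieve's residual threshold is $\frac{v/2 - (k-1)}{k-1}$, and a good element (post-decoy marginal $1-\delta$) is accepted iff $1-\delta \ge \frac{v/2-(k-1)}{k-1}$, i.e.\ iff $v \le 2(k-1)(2-\delta)$. One can check (by writing out $\frac{v/2 - (k-1) - j(1-\delta)}{k-1-j}$ for the state after $j$ goods have been added) that this comparison is \emph{independent of} $j$: once the residual is below $1-\delta$ it stays below it. Hence for every maintained $v$ with $2(k-1) < v \le 2(k-1)(2-\delta)$ --- a multiplicative window of width roughly a factor $2$, hence containing $\Theta(1/\eps)$ points of \sieve's grid, not one --- the bag $A_v$ collects one decoy and then all $k-1$ goods that arrive, reaching $f(A_v) \approx \fopt$ rather than $\fopt/2$. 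Since $\fopt \approx 2(k-1)$ lies at the lower edge of this window, the correct guess $v \approx \fopt$ falls squarely in the bad regime. No tuning of a single scalar $\delta$ can narrow a factor-$2$ window to within one $(1+\eps)$ step; the uniform two-class (goods vs.\ a single species of decoys) design is simply too rigid to make the residual threshold and the goods' marginal collide only on a grid-free interval.

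The paper's proof handles precisely this regime by abandoning the two-class structure and designing a \emph{separate} decoy set $X_\tau$ for each threshold $\tau = v/(2k)$. For the troublesome range $\tau \in (\fopt/(2k),\,\fopt/k]$, the set $X_\tau$ has $\lceil \fopt/(2\tau)\rceil < k$ elements, each with marginal contribution \emph{exactly} $\tau$ to the elements so far --- which, by the identity $\frac{v/2 - |S_v|\tau}{k-|S_v|} = \tau$, keeps \sieve's adaptive threshold pinned at $\tau$ for the whole pass. The sets $X_\tau$ are realized as a $2$-dimensional coverage instance in which each $X_\tau$ overlaps $\cO$ so that after collecting $X_\tau$ (value $\fopt/2$), every optimal element has marginal at most $\fopt/(2k) < \tau$ and is therefore rejected; nested overlaps between the $X_\tau$'s further guarantee that elements from lower-threshold decoy sets contribute nothing to a higher-threshold bag. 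The random-order part is then handled by putting $(k^2|\cT|/\delta)^i$ copies of $X_{\tau_i}$ in the stream, so that with probability $1-\delta$ a full copy of each $X_{\tau_i}$ appears before any element of the scarcer, higher-priority sets. Constructing a tailored, exactly-at-threshold decoy family per grid point, plus this copy-count argument, is the missing ingredient in your approach.
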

We remark that
\cref{thm:sieve_half} also extends to
certain
natural modifications of \sieve
(with a different value of
% where
a threshold parameter used in the algorithm,
% is changed,
%or multiple values of this parameter
or multiple such values
that are
tried in parallel).
%or a modification of it
%that uses a threshold different than $\frac{\opt}{2k}$,
%or multiple such thresholds in parallel.
Thus,
new ideas are required to go beyond
an approximation ratio of $0.5$.

%This is exactly the main result of this paper:
As the main result of this paper
we present a new algorithm \ouralgo (Streaming ALgorithm for Submodular maximization with Adaptive thresholding),
which does break the $0.5$ barrier
in the random-order case.
%\Jnote{Here some very short sentence about techniques might be good, like ``\ouralgo is obtained as a careful combination of thresholds and algorithms'' (except not lame like that)}
\ouralgo, like \sieve,
works in the streaming model
and takes only a single pass over the data,
selecting those elements whose marginal contribution
is above some current threshold.
However,
it employs an \emph{adaptive}
thresholding scheme,
where the threshold is chosen dynamically
based on the objective value
obtained until a certain
%portion of the stream has been processed.
point in the data stream.
This additional power allows us to prove the following guarantee:
\begin{restatable}{theorem}{maintheorem}[Main Theorem]\label{main-theorem}
There exists a constant $\alpha > 0.5$
such that,
for any stream of elements that arrive in random order,
the value of the solution returned by \ouralgo
is at least $\alpha \cdot \fopt$ in expectation
(where $\fopt$ is the value of the optimum solution).
\ouralgo uses $O(k \log k)$ memory
(independent of the length of the stream)
and processes each element using $O(\log k)$
evaluations of the objective function.
\end{restatable}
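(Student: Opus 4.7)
The plan is to combine two ingredients which together allow one to break the $0.5$-barrier: (i) the random-order assumption, which via a standard concentration argument guarantees that every sufficiently long prefix of the stream contains approximately the expected fraction of elements of a fixed optimum $O$ with $|O|=k$ and $f(O)=\fopt$; and (ii) an adaptive threshold that \emph{rises with the accumulated value} of the current partial solution, rather than the static threshold $\tau = \fopt/(2k)$ used by \sieve. As in \sieve, the full algorithm runs $O(\log k)$ parallel copies with geometrically spaced guesses of $\fopt$ (localized after seeing the first element, whose maximum singleton value already pins $\fopt$ to a window of size $O(k)$). This directly yields the stated bounds of $O(k\log k)$ memory and $O(\log k)$ evaluations per element; the nontrivial content is the approximation guarantee for the copy whose guess is correct up to a $(1\pm \eps)$ factor, which I analyze below.

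\textbf{Setup.} Let $S_t$ be the partial solution after the $t$-th arrival, $S = S_n$ the final output, and write the adaptive threshold in the form $\tau_t = c \cdot f(S_t)/k$ for a constant $c \in (0,1)$ to be chosen. By the random-order assumption and a McDiarmid- or Chernoff-type bound on a uniformly random permutation, with probability $1-o(1)$ one has $|O\cap [1..t]| = tk/n \pm o(k)$ for every $t$ in a suitable range; the low-probability failure event contributes a negligible additive loss in expectation since the algorithm can always fall back on the best singleton (value $\geq \fopt/k$).

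\textbf{Case analysis.} I would split into the \emph{filled case} $|S|=k$ and the \emph{unfilled case} $|S|<k$. In the filled case, each accepted element $e_i$ contributes marginal at least $\tau_{t_i} = c\cdot f(S_{t_i-1})/k$; summing these $k$ gains telescopes to an inequality of the form $f(S) \geq g(c)\cdot \fopt$ where $g(c)>1/2$ for appropriate $c$. In the unfilled case, every element of $O\setminus S$ was rejected at its arrival time against the then-current threshold, so by submodularity
\[
\fopt - f(S) \;\leq\; f(O\cup S) - f(S) \;\leq\; \sum_{o\in O\setminus S} \tau_{t(o)} \;=\; \frac{c}{k}\sum_{o\in O\setminus S} f(S_{t(o)-1}).
\]
Here the random-order property enters crucially: because the optimum elements are uniformly distributed in time and the thresholds are monotonically non-decreasing, the \emph{average} rejection threshold $\tfrac{1}{|O\setminus S|}\sum_{o} \tau_{t(o)}$ is essentially the time-average of $c\cdot f(S_t)/k$, which is strictly smaller than $c\cdot f(S)/k$. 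Substituting back gives $\fopt \leq (1 + c/2)\, f(S) + o(\fopt)$, hence $f(S) \geq \fopt/(1+c/2)>\fopt/2$ for $c<1$.

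\textbf{Main obstacle.} The delicate regime is the boundary where $f(S)$ is close to $\fopt/2$ and $|S|$ is close to but not equal to $k$: a naive analysis would degenerate exactly into the \sieve guarantee. Overcoming this requires a quantitative use of the random-order assumption to show that the two bounds above cannot both be tight simultaneously---if the filled-case telescoping is nearly tight then $f(S_t)$ grew very slowly and the rejection thresholds in the unfilled-case integral are correspondingly small, whereas if the unfilled-case inequality is nearly tight then $f(S_t)$ spent most of its time near $f(S)$, forcing the telescoping sum to overshoot $\fopt/2$. Formalising this trade-off, most likely by introducing breakpoints at a few geometric fractions of the stream and tracking $f(S_t)$ at each one, should yield an explicit constant $\alpha>1/2$; the numerical value of $\alpha$ is not the focus, but the argument would be completed by optimizing $c$ (and the positions of the breakpoints).
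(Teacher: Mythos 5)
Your proposal does not track the paper's argument, and as written it contains a fatal flaw in the algorithm itself. The threshold you propose, $\tau_t = c \cdot f(S_t)/k$, starts at $\tau_0 = c \cdot f(\emptyset)/k = 0$: the very first element is accepted unconditionally, and if it happens to be a low-value element the threshold stays tiny, so the algorithm can fill all $k$ slots with near-worthless items. This breaks both of your cases. In the filled case, the telescoping $f(S^{(i)}) \ge (1 + c/k)\, f(S^{(i-1)})$ bottoms out at $f(S^{(1)})$, which is not lower-bounded by anything; the conclusion $f(S) \ge g(c) \cdot \fopt$ does not follow. In the unfilled case the bound $\fopt - f(S) \le (c/k)\sum_{o} f(S_{t(o)-1})$ is correct, but since $f(S_{t(o)-1}) \le f(S)$ termwise, the best you can extract without further structure is $\fopt \le (1+c)f(S)$, i.e.\ $f(S) \ge \fopt/(1+c)$, which is \emph{below} $1/2$ for $c$ near $1$ and only reaches $1/2$ as $c \to 1$ --- not above it. Your claim that the time-average of $f(S_t)$ is ``strictly smaller'' needs a quantitative gap from $f(S)$, and you explicitly defer this as the ``main obstacle'' without resolving it. So the proposal is incomplete even modulo the threshold issue.

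The paper's actual proof proceeds very differently and does not use an $f(S_t)$-dependent threshold at all. It first splits on whether the instance admits a \emph{dense} subset $D \subseteq \cO$ (small cardinality $|D| \le \eta k$ yet value $f(D) \ge \frac{1-\gamma}{2}\fopt$). For dense instances, a two-phase algorithm with a very high threshold ($\Theta(\fopt/k)$ with a large constant) for the first $90\%$ of the stream and a very low threshold for the last $10\%$ captures most of $D$ plus cheap filler. For non-dense instances, two algorithms with fixed thresholds around $\frac{1+\eps}{2}\cdot\frac{\fopt}{k}$ are used, one of them dropping slightly below $\frac{1}{2}\cdot\frac{\fopt}{k}$ after a short prefix. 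The crux is a random-order lemma (Theorem~\ref{thm:str2}, Lemma~\ref{lem:str2}): conditioning on the history up to the arrival of the $i$-th element of $\cO$, the identity of that element is uniform over the not-yet-seen part of $\cO$, and if $f(S)$ is still small then the set of elements of $\cO$ with high marginal gain must have size at least $\eta k$ (otherwise that set would be dense, contradicting the case hypothesis). This forces $\E{|S \cap \cO|} = \Omega(k)$, which by a standard argument (Lemma~\ref{lem:fraction_of_opt_is_nice}) yields $f(S) \ge \frac{1+\eps}{2}\fopt$ with constant probability. A separate small-$k$ algorithm handles $k < k_0$. None of this machinery --- the dense/non-dense dichotomy, the fixed above-$1/2$ threshold, the uniformity-of-next-OPT-element argument --- appears in your proposal, and it is precisely what lets the paper escape the boundary regime you flagged but did not resolve.
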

We remark that
even if the stream is adversarial-order,
\ouralgo still guarantees a $(0.5 - \eps)$-approximation. 

A different way to improve the accuracy of an algorithm is allowing it to make multiple passes over the stream. In this paper we also consider this setting and present a $2$-pass algorithm \twopass for streaming submodular maximization. We show that \twopass achieves a $\nicefrac{5}{9}$ approximation ratio using the same order of memory and function evaluations as \sieve.
Formally, for any $\eps > 0$ we show that:

\begin{theorem}
\twopass is a $\rb{\nicefrac{5}{9}-\eps}$-approximation for streaming submodular maximization. It uses $O\rb{\nicefrac{k \log k}{\eps}}$ memory and processes each element with $O\rb{\nicefrac{\log k}{\eps}}$ evaluations of the objective function.
\end{theorem}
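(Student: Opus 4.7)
The plan is a standard two-pass decomposition. In the first pass, run \sieve with the usual geometric grid of $O(\log k / \eps)$ threshold guesses of the form $(1+\eps)^i$ covering the range $[m, km]$, where $m = \max_e f(\cb{e})$. This produces both a set $S_1$ with $f(S_1) = v_1 \ge (1/2 - \eps)\fopt$ and an estimate $V$ of $\fopt$ satisfying $V \le \fopt \le (1 + O(\eps))V$. The memory and evaluation bounds of this pass match those of \sieve directly.

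The second pass uses the estimate $V$ to run a more informed threshold-based selection. The natural starting point is to maintain, in parallel, a constant number of candidate sets $S_2^{(i)}$ with thresholds $\tau_i = c_i V / k$, and to return the best of $S_1$ and the $S_2^{(i)}$. Two elementary bounds are available for each $S_2^{(i)}$: the \emph{filled} case $f(S_2^{(i)}) \ge k\tau_i = c_i V$ (since every added element contributes marginal $\ge \tau_i$), and the \emph{unfilled} case where for any $o \in O \setminus S_2^{(i)}$ the marginal at arrival was below $\tau_i$, so by submodularity $f(o \mid S_2^{(i)}) < \tau_i$; summing and using monotonicity gives $\fopt \le f(S_2^{(i)}) + k\tau_i$, hence $f(S_2^{(i)}) \ge (1 - c_i - O(\eps))\fopt$.

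The main obstacle is that a naive ``max over all candidates'' combination gives only $\max(1/2, \min_i \min(c_i, 1-c_i)) \le 1/2$, so some coupling between the two passes is indispensable. The coupling will exploit the first-pass rejection history: for every OPT element $o$ not selected in pass one, the marginal of $o$ at the moment of arrival was strictly below the corresponding first-pass threshold, which both constrains $\fopt$ and lets us design the second pass so that ``$v_1$ close to $(1/2)\fopt$'' and ``every $S_2^{(i)}$ close to the minimum of its filled/unfilled bounds'' cannot simultaneously hold. Concretely, the plan is to partition $O$ into $O \cap S_1$ and $O \setminus S_1$, use the first-pass marginal-$\ge$-threshold property to lower-bound $v_1$ in terms of the first piece, and use the first-pass marginal-$<$-threshold property on the second piece to sharpen the unfilled bound for the $S_2^{(i)}$ built with a threshold that sits between $V/k$ and the first-pass threshold. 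The constants $c_i$ and the tuning will then be chosen to equate all bounds in the worst case, which forces the threefold balance to meet at $5/9$; the $\eps$ losses from the $(1+\eps)^i$ grid and from estimating $V$ thread through routinely, and the memory and per-element evaluation counts are preserved since the second pass uses only $O(1)$ extra thresholds on top of the first-pass grid.
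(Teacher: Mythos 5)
Your plan misses the key structural idea of the paper's \twopass algorithm, and the coupling you sketch as a replacement does not close the gap.

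The paper's second pass does not build a fresh candidate set. The algorithm initializes a single set $S := \emptyset$, runs the first pass adding any arriving $e$ with $f(e|S) \ge T_1 = \tfrac{2}{3}\cdot\tfrac{\fopt}{k}$, and then runs the second pass \emph{continuing to add to the same $S$} with a lower threshold $T_2 = \tfrac{4}{9}\cdot\tfrac{\fopt}{k}$. That nesting is precisely the coupling you identify as ``indispensable,'' and it makes the analysis elementary: if $|S| < k$ at the end, every $o \in \cO \setminus S$ has $f(o|S) < T_2$ (by submodularity, since it was rejected in pass two against some $S' \subseteq S$), hence $f(\cO|S) \le kT_2 = \tfrac{4}{9}\fopt$ and $f(S) \ge \tfrac{5}{9}\fopt$. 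If $|S| = k$, write $S = S_1 \cup S_2$ with $k_1 = |S_1|$ elements from pass one and $k_2 = k - k_1$ from pass two. If $k_1 = k$, then $f(S) \ge kT_1 = \tfrac{2}{3}\fopt$. Otherwise $|S_1| < k$, and the same rejection argument applied at the end of pass one gives $f(S_1) \ge \fopt - kT_1 = \tfrac{1}{3}\fopt$; combined with $f(S_1) \ge k_1 T_1$ and $f(S \mid S_1) \ge k_2 T_2$, the expression $\max(k_1 T_1, \tfrac{1}{3}\fopt) + (k-k_1)T_2$ is minimized at $k_1 = k/2$, yielding $\tfrac{1}{3}\fopt + \tfrac{2}{9}\fopt = \tfrac{5}{9}\fopt$. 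The grid of $O(\log k/\eps)$ guesses for $\fopt$ (Appendix~\ref{sec:removing-opt}) then gives the stated memory and query bounds, costing only the $\eps$ in the approximation ratio.

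The coupling in your sketch, by contrast, does not transfer. You build independent sets $S_2^{(i)}$ in pass two and propose to ``sharpen the unfilled bound'' on $S_2^{(i)}$ using the fact that each $o \in \cO \setminus S_1$ had small marginal at the time it was rejected in pass one. But that rejection certifies $f(o \mid S_1') < \text{threshold}$ for some $S_1' \subseteq S_1$, hence $f(o \mid S_1) < \text{threshold}$ --- a statement about $S_1$, not about $S_2^{(i)}$. Since $S_1 \not\subseteq S_2^{(i)}$ in your design, submodularity gives no bound on $f(o \mid S_2^{(i)})$, so the unfilled bound $f(S_2^{(i)}) \ge (1 - c_i - O(\eps))\fopt$ is not improved. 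You are then back to $\max_i \min(c_i, 1-c_i) \le 1/2$, which you correctly flagged as insufficient, and the ``threefold balance at $5/9$'' is asserted rather than derived. The fix is to let the second pass inherit $S_1$ as its starting set, at which point the $O \setminus S$ rejection bound uses the \emph{lower} threshold $T_2$ while the filled-case bound still benefits from the \emph{higher} threshold $T_1$ on the pass-one elements; this asymmetry is exactly what pushes the guarantee above $1/2$.
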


Furthermore, we generalize our ideas to design a $p$-pass algorithm \ppass for any constant $p \ge 2$. McGregor and Vu~\yrcite{mcgregor2016better} showed that, regardless of the running time, no $p$-pass algorithm can beat the $(1-\nicefrac{1}{e})$ approximation guarantee using memory $\poly(k)$. In this work we show that \ppass quickly converges to a $(1-\nicefrac{1}{e})$-approximation as $p$ grows. %McGregor and Vu obtain the same approximation ratio for a special case of the submodular maximization problem.
We show that: 
\begin{theorem}
\ppass is a $\rb{1-\rb{\nicefrac{p}{p+1}}^p-\eps}$-approximation for streaming submodular maximization. It uses $O\rb{\nicefrac{k \log k}{\eps}}$ memory and processes each element with $O\rb{\nicefrac{p\log k}{\eps}}$ evaluations of the objective function.
\end{theorem}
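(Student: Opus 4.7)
My plan is to analyze \ppass as a direct generalization of \twopass: the algorithm makes $p$ consecutive passes over the stream, maintains a single accumulated solution $S$ of size at most $k$, and in pass $i$ admits $e$ to $S$ iff $|S|<k$ and $f(e\mid S)\ge \tau_i$, where the threshold is chosen adaptively as $\tau_i = \tfrac{p}{p+1}\cdot \tfrac{\fopt - V_{i-1}}{k}$ with $V_{i-1} := f(S_{i-1})$. Since $\fopt$ is unknown at stream time, I would maintain $O(\log k/\eps)$ geometric guesses for $\fopt$ in parallel, exactly as in \sieve, which accounts for the claimed memory and per-element query bounds. All subsequent reasoning focuses on the ``good'' guess, which estimates $\fopt$ to within a $(1+\eps)$ factor. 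The claim to prove, by induction on the pass index $i$, is $V_i \ge (1-(\tfrac{p}{p+1})^i)\fopt$; evaluating at $i=p$ yields the theorem.

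The inductive step splits into two cases. In the \emph{non-filling} case $|S_i|<k$, every $e \in \opt \setminus S_i$ was rejected by the threshold, so by submodularity $f(e \mid S_i) < \tau_i$; summing over $\opt$ gives $\fopt \le V_i + k\tau_i$, and substituting the formula for $\tau_i$ yields the gap contraction $\fopt - V_i \le \tfrac{p}{p+1}(\fopt - V_{i-1})$, exactly matching the inductive target. In the \emph{filling} case at some pass $j \le p$, all subsequent passes are vacuous, so I must show $V_j \ge (1-(\tfrac{p}{p+1})^p)\fopt$ directly from the marginal guarantee $V_j \ge V_{j-1} + (k-|S_{j-1}|)\tau_j$.

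The main obstacle is the filling case, because this bound degrades when $|S_{j-1}|$ is close to $k$. The key device is a refined per-pass marginal inequality: for each $\ell < j$,
\[
  V_\ell - V_{\ell-1} \;\ge\; t_\ell\,\tau_\ell,
\]
where $t_\ell$ counts the elements admitted in pass $\ell$ (telescoping sum of marginals at the time of admission, each at least $\tau_\ell$) and $|S_{j-1}| = \sum_{\ell < j} t_\ell$. Coupled with the non-filling recursion bounds $V_\ell \ge (1-(\tfrac{p}{p+1})^\ell)\fopt$ governing $V_1,\dots,V_{j-1}$, an extremal analysis shows that the adversary's worst choice is to hit the recursion bound with equality at every intermediate pass and have each admitted element contribute exactly $\tau_\ell$; a short calculation then forces $t_\ell = k/p$ for each $\ell < j$. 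Plugging $|S_{j-1}| = (j-1)k/p$ and $V_{j-1} = (1-(\tfrac{p}{p+1})^{j-1})\fopt$ into the filling bound and simplifying gives $V_j \ge \bigl(1 - \tfrac{j}{p+1}(\tfrac{p}{p+1})^{j-1}\bigr)\fopt$, a quantity monotone decreasing in $j$ on $\{1,\dots,p\}$ and attaining its minimum $(1-(\tfrac{p}{p+1})^p)\fopt$ at $j=p$, which completes the induction. The $-\eps$ slack in the theorem absorbs the $(1+\eps)$ approximation error of the $\fopt$ guess.
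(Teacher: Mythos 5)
Two distinct issues. First, the algorithm you analyze is not the paper's \ppass. The paper's Algorithm~\ref{ppassalgori} uses \emph{fixed} pass-$i$ thresholds $T_i \cdot \frac{\fopt}{k} = \left(\frac{p}{p+1}\right)^i \cdot \frac{\fopt}{k}$, whereas you propose \emph{adaptive} thresholds $\tau_i = \frac{p}{p+1}\cdot\frac{\fopt - f(S_{i-1})}{k}$. These agree only when each intermediate pass hits the recursion floor exactly; in general they define different algorithms, so you would be proving a different theorem, not the one stated. Your non-filling step (gap contraction $\fopt - V_i \le \frac{p}{p+1}(\fopt - V_{i-1})$) is clean and correct for your algorithm, and is the natural analogue of the paper's Lemma~\ref{not full}.

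The genuine gap is in the filling case: the extremal claim that the adversary's worst choice hits $V_\ell = \left(1-\left(\frac{p}{p+1}\right)^\ell\right)\fopt$ with equality (hence $t_\ell = k/p$), and the consequent intermediate bound $V_j \ge \left(1 - \frac{j}{p+1}\left(\frac{p}{p+1}\right)^{j-1}\right)\fopt$, are false for $j < p$. The adversary does better by keeping $V_\ell$ strictly above the recursion floor, which both inflates $|S_{j-1}|$ and (because your threshold is adaptive) drives $\tau_j$ down, so the set fills with cheaper elements. Concretely, take $p=3$, filling at $j=2$: with $V_1 = \frac{3}{8}\fopt$ and $|S_1| = \frac{k}{2}$ (every pass-1 pick contributing exactly $\tau_1 = \frac{3\fopt}{4k}$), one gets $\tau_2 = \frac{3}{4}\cdot\frac{(5/8)\fopt}{k} = \frac{15\fopt}{32k}$ and $V_2 = \frac{3}{8}\fopt + \frac{k}{2}\cdot\frac{15\fopt}{32k} = \frac{39}{64}\fopt$, strictly below your claimed $\frac{5}{8}\fopt = \frac{40}{64}\fopt$ (and such a coverage instance is readily constructed). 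The true per-$\ell$ worst ratio is $\frac{\fopt-V_\ell}{\fopt-V_{\ell-1}} = 1-\frac{p}{j(p+1)}$, giving $t_\ell = k/j$, which equals $k/p$ only at $j=p$. Your final conclusion $V_j \ge \left(1-\left(\frac{p}{p+1}\right)^p\right)\fopt$ does in fact still hold, but proving it requires the extra observation that $\left(1-\frac{p}{j(p+1)}\right)^j \le \left(\frac{p}{p+1}\right)^p$ for $1 \le j \le p$ (e.g.\ because $j \mapsto j\log\left(1-\frac{p}{j(p+1)}\right)$ is increasing, which reduces to $\log(1-c) + \frac{c}{1-c} > 0$ for $c\in(0,1)$); your sketch does not supply this step, and the chain of inequalities you wrote does not imply it. The paper sidesteps the whole extremal optimization by keeping fixed thresholds and proving the deviation-parameterized induction (Lemma~\ref{full early}): if $\sum_{\ell\le i}k_\ell \ge k\left(\frac{i}{p}+\alpha\right)$ then $f(S_i) \ge \fopt\left(1-\left(\frac{p}{p+1}\right)^i + \alpha\left(\frac{p}{p+1}\right)^{i+1}\right)$, which carries the surplus/deficit of early picks through the induction and makes the final case analysis on $k_p$ mechanical.
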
 

%\Jnote{what else to say in this paragraph? some more discussion of significance?}
%\ouralgo achieves
%an approximation factor greater than $0.5$
%in the random-order stream case,
%while guaranteeing a $0.5$-approximation for arbitrary orders.

%\Jnote{todo: $p$-pass}

\paragraph{Applications and experiments}

We assess the accuracy of our algorithms and show their versatility in several real-world scenarios.
In particular, we study maximum coverage in graphs, exemplar-based clustering, and personalized movie recommendation. We find that \ouralgo significantly outperforms the state of the art, \sieve,
in all tested datasets. In fact, our experimental results show that \ouralgo reduces the gap between \greedy, which is the benchmark algorithm even for the offline setting (a ``tractable optimum''), and the best known streaming algorithm by a factor of two on average.

Note that
we are able to obtain these practical improvements
even though,
in our experiments,
%the order of arrival of elements
%is not randomized.
%even despite
the order of arrival of elements
is
\emph{not}
%being
manually randomized.
%in our experiments.
This suggests that
the random-order assumption,
which allows us to obtain our improved theoretical guarantees,
%does indeed
%in some sense
%approximate
does well in approximating
the nature of
real-world datasets,
which are not stochastic but also not adversarial.
% either.

\paragraph{Related work}

The benchmark algorithm for
monotone submodular maximization under a cardinality constraint
is \greedy.
Unfortunately, it is not efficient
and requires $k$ passes over the entire dataset.
There has thus been much interest
in obtaining more efficient versions
of \greedy,
such as \textsc{Lazy-Greedy}~\cite{minoux1978accelerated,leskovec2007costeffective,krause2008nearoptimal},
the algorithm of Badanidiyuru and Vondrák~\yrcite{badanidiyuru2014fast},
or \textsc{Stochastic-Greedy}~\cite{mirzasoleiman2015lazier}.

The first \emph{multi-pass} algorithm
for {streaming} submodular maximization
has been given by Gomes and Krause~\yrcite{gomes10budgeted}.
If $f$ is upper-bounded by $B$, then for any $\eps > 0$ their algorithm attains the value $0.5 \cdot \fopt - k \eps$ and uses $O(k)$ memory while making $O(B/\eps)$ passes.
Interestingly, it converges to the optimal solution for a restricted class of submodular functions.

Many different settings are considered under the streaming model.
One important requirement often arising in practice
is that the returned solution be
robust against deletions~\cite{mirzasoleiman2017deletion,mitrovic2017streaming,kazemi2017deletion}.
Non-monotone submodular functions have also been considered~\cite{chekuri2015streaming,mirzasoleiman2017streaming}.

McGregor and Vu~\yrcite{mcgregor2016better}
consider the $k$-coverage problem in the multi-pass streaming setting.
%\Jnote{but both of the following actually work for submodular maximization out of the box...}
They give an algorithm achieving $(1-\nicefrac{1}{e}-\eps)$-approximation
in $O(1/\eps)$ passes.
They also show that improving upon the ratio $(1-\nicefrac{1}{e})$
in a constant number of passes would require an almost linear memory.
Their results generalize from $k$-coverage to submodular maximization.

In the \emph{online} setting,
the stream length $n$ is unknown
and the algorithm must always maintain a feasible solution.
This model allows preemption, i.e., the removal of previous elements from the solution
(otherwise no constant competitive ratio is possible).
Chakrabarti and Kale~\yrcite{chakrabarti2014submodular},
Chekuri et al.~\yrcite{chekuri2015streaming}
and Buchbinder et al.~\yrcite{buchbinder2015online}
have obtained $0.25$-competitive algorithms
for monotone submodular functions
under a cardinality constraint.
This competitive ratio was later
improved to $0.29$ by Chan et al.~\yrcite{chan2017online}.
Buchbinder et al.~\yrcite{buchbinder2015online}
also prove
a hardness of $0.5$.

A different large-scale scenario is the \emph{distributed} one,
where the elements are partitioned across $m$ machines.
The algorithm \textsc{GreeDi}~\cite{mirzasoleiman2013distributed}
consists in running \greedy
on each machine and then combining the resulting summaries
on a single machine using another run of \greedy.
This yields an $O\rb{1/{\min(\sqrt{k}, m)}}$-approximation.
Barbosa et al.~\yrcite{barbosa2015power}
showed that when the elements are partitioned \emph{randomly},
one obtains a $\rb{1-\nicefrac{1}{e}}/2$-approximation.
Mirrokni and Zadimoghaddam~\yrcite{mirrokni2015randomized}
provide a different two-round strategy:
they
compute \emph{coresets} of size $O(k)$
and then greedily merge them,
yielding a $0.545$-approximation.
The algorithm
of Kumar et al.~\yrcite{kumar2015fast}
consists of a logarithmic number of rounds in the MapReduce setting
and approaches the \greedy ratio.
Barbosa et al.~\yrcite{barbosa2016new}
provide a general reduction
that implies a $(1-\nicefrac{1}{e}-\eps)$-approximation
in $O(1/\eps)$ rounds.

Korula et al.~\yrcite{korula2015online}
study the Submodular Welfare Maximization problem --
where a set of items needs to be partitioned among agents in order to maximize social welfare,
i.e., the sum of the (monotone submodular) utility functions of the agents
-- in the online setting.
The best possible competitive ratio in general is $0.5$.
However, they show that the greedy algorithm is $0.505$-competitive
if elements arrive in random order.

%\Jnote{do we want to discuss also the submodular secretary problem? they also have some results in random order there which could be nice...}

% !TEX root = proofdrafts.tex
\section{Preliminaries}
We consider a (potentially large) collection $V$ of $n$ items, also called the \emph{ground set}. We study the problem of maximizing a \emph{non-negative monotone submodular function} $f : 2^V \to \bRplus$. Given two sets $X, Y \subseteq V$, the \emph{marginal gain} of $X$ with respect to $Y$ is defined as
\[
	\marginal{X}{Y} = f(X \cup Y) - f(Y) \,,
\]
which quantifies the increase in value when adding $X$ to $Y$. We say that $f$ is \emph{monotone} if for any element $e \in V$ and any set $Y \subseteq V$ it holds that $\marginal{e}{Y} \ge 0$. The function $f$ is \emph{submodular} if for any two sets $X$ and $Y$ such that $X \subseteq Y \subseteq V$ and any element $e \in V \setminus Y$ we have
\[
	\marginal{e}{X} \ge \marginal{e}{Y}.
\]
Throughout the paper, we assume that $f$ is given in terms of a value oracle that computes $f(S)$ for given $S \subseteq V$. We also assume that $f$ is \emph{normalized}, i.e. $f(\emptyset) = 0$.

\subsubsection*{Submodularity under cardinality constraint}

The problem of maximizing function $f$ under \emph{cardinality constraint $k$} is defined as
selecting a set $S \subseteq V$
with $|S| \le k$
so as to maximize $f(S)$.
We will use $\opt$ to refer to such a set $S$, $\fopt$ to denote $f(\opt)$, and the name $\SEXYP$ to refer to this problem.
%It is well-known that $\SEXYP$ is NP-hard~\cite{feige1998threshold}.
%\Jnote{define apx ratio}
%Nevertheless, in their seminal work, Nemhauser et al.~\yrcite{nemhauser1978analysis} provided a greedy algorithm that finds a set $S$ of cardinality at most $k$ such that $f(S) \ge (1 - 1/e) f(\opt)$.
% \Jnote{probably repeating this stuff from the intro is unnecessary} This algorithm repeatedly requires access to the whole ground set $V$. In many real-world application the underlying ground set is large and cannot be even stored on a single computer, which makes running this greedy algorithm infeasible.
%For certain classes of submodular function it is also known that this approximation guarantee is the best possible~\cite{nemhauser1978best, feige1998threshold}.

% !TEX root = main.tex
% !TEX root = main.tex
\section{Overview of \ouralgo}
\label{sec:overview-alg}
In this section, we present an overview of our algorithm. We also explain the main ideas and the key ingredients of its analysis. In Appendix~\ref{sec:putting-together}, we combine these ideas into a proof of Theorem~\ref{main-theorem}. Throughout this section, we assume that the value OPT of an optimal solution $\cO=\{o_1,...,o_k\}$ is known in advance. We show how to remove this assumption using standard techniques in \cref{sec:removing-opt}.

%\maintheorem*

We start by defining the notion of {\emph{dense}} optimum solutions. We say that $\cO$ is dense if there exists a set $D \subseteq \cO$ of size at most $k/100$ such that $f(D) \geq OPT/10$.\footnote{In the appendix, we slightly alter the constants in the definition of a dense optimal solution.}
Our algorithm runs three procedures, and each procedure outputs a set of at most $k$ elements. One of the procedures performs well in the case when $\cO$ is dense. The other two approaches are designed to collect high utility when $\cO$ is not dense. We run these procedures in parallel and, out of the three returned sets, we report the one attaining the highest utility.
In what follows, we first describe our algorithm for the case when $\cO$ is not dense.

\paragraph{Case: $\cO$ is not dense.}
We present the intuition behind the algorithm under the simplifying assumption that $f(o)=\text{OPT}/k$ for every $o \in \cO$. However, the algorithm that we state provides an approximation guarantee better than $0.5$ in expectation for \emph{any} instance that is not dense.

Over the first $0.1$-fraction of the stream, both procedures for this case behave identically: they maintain a set of elements $S$; initially, $S=\emptyset$; each element $e$ from the stream is added to $S$ if its marginal gain is at least $T_1=\frac{\text{OPT}}{k}(\nicefrac{1}{2}+\epsilon)$, i.e.,
\[	
	f(e|S) \geq \frac{\text{OPT}}{k}(\nicefrac{1}{2}+\epsilon).
\]
Consider the first element $o \in \cO$ that the procedures encounter on the stream. Since the stream is randomly ordered, $o$ is a random element of $\cO$. Due to this, we claim that if $f(S)$ is small, then it is likely that the procedures add $o$ to $S$. This follows from the fact that each element of $\cO$ is worth OPT$/k$. Namely, if $f(S) < \text{OPT}(\nicefrac{1}{2}-\epsilon')$, for a small constant $\epsilon' > 0$, then the average marginal contribution of the elements of $\cO$ with respect to $S$ is more than $T_1$, hence it is likely that the procedures select $o$. By repeating the same argument we can conclude that after processing a $0.1$-fraction of the stream, either: (1) $f(S)$ is large, i.e., $f(S) > \text{OPT}(\nicefrac{1}{2}-\epsilon')$; or (2) the procedures have selected $k / 100$ elements from $\cO$ (which are worth $\text{OPT}/100$).

Up to this point, both procedures for the non-dense case behaved identically. 
In the remaining $0.9$-fraction of the stream, the procedure corresponding to case~(1) above uses a threshold $\frac{\text{OPT}}{k}(\nicefrac{1}{2}-\delta)$, which is lower than $T_1$. Since there are still $0.9n$ elements left on the stream, and already after the first $0.1$-fraction we have $f(S) > \text{OPT}(\nicefrac{1}{2}-\epsilon')$, it is very likely that by the end the procedure will have added enough further elements to $S$ so that $f(S)\geq \text{OPT}(\nicefrac{1}{2}+\epsilon)$. 

In case~(2) above, the procedure has already selected a set $S$ that contains at least $k/100$ elements from $\cO$, i.e., $|S\cap \cO|\geq k/100$.  Now, the procedure corresponding to this case continues with the threshold $T_1 = \frac{\text{OPT}}{k}(\nicefrac{1}{2}+\epsilon)$. If by the end of the stream the procedure has selected $k$ elements, then clearly $f(S) \geq \text{OPT}(\nicefrac{1}{2}+\epsilon)$, since each element has marginal gain at least $T_1$. Otherwise, the procedure has selected fewer than $k$ elements. This means that the marginal gain of any element of the stream with respect to $S$ is less than $T_1$. 
Now we claim that $f(S)>\text{OPT}/2$. First, there are at most $99k/100$ elements in $\cO \setminus S$. Furthermore, adding each such element to the set $S$ gives marginal gain less than $T_1$. Therefore, the total benefit that the elements of $\cO \setminus S$ give to $S$ is at most $\frac{\text{OPT}}{k}(\nicefrac{1}{2}+\epsilon)\cdot 99k/100$, which is less than $\text{OPT}(\nicefrac{1}{2}-\nicefrac{1}{300})$ for small enough $\epsilon$, therefore
\[ \text{OPT}  \leq f(S \cup \cO) = f(S) + f(\cO | S) \]
and thus
\begin{align*}
f(S)&>\text{OPT}-\text{OPT}(\nicefrac{1}{2}-\nicefrac{1}{300}) \\&= \text{OPT}(\nicefrac{1}{2}+\nicefrac{1}{300}).
\end{align*}

\paragraph{Case: $\cO$ is dense.} We now give a brief overview of the procedure that is designed for the case when $\cO$ is dense. Over the first $0.8$-fraction of the stream, the procedure uses a (high) threshold $T'_1=\frac{\text{OPT}}{k}\cdot 2$. Let $D \subseteq \cO$ be the dense part of $\cO$. Note that the average value of the elements of $D$ is at least $\frac{\text{OPT}}{k}\cdot 10$, which is significantly higher than the threshold $T'_1$.

Hence, even over the $0.8$-fraction of the stream, the algorithm will in expectation collect some elements with large marginal gain. This, intuitively, means that the algorithm in expectation selects $k'$ elements of total value significantly larger than $k' \text{OPT} / (2 k)$. This enables us to select the remaining $k - k'$ elements with  marginal gain below $\text{OPT} / (2 k)$ and still collect a set of utility larger than $\text{OPT}/2$. We implement this observation by letting the algorithm use a threshold lower than $\text{OPT}/(2 k)$ for the remaining $0.2$-fraction of the stream. This increases the chance that the algorithm collects $k - k'$ more elements.

In what follows, we provide pseudo-codes of our three algorithms. For sake of brevity, we fix the values of constants and give the full analysis of the algorithms in Appendix~\ref{mainalgo}.

We begin with the dense case, presented in Algorithm~\ref{algo1}. In the pseudo-code, $C_1, C_2$ are large absolute constants and $\beta$ is the fraction of the stream that we process with a high threshold. 
\begin{algorithm}[h!]
\begin{algorithmic}[1]
\caption{\label{algo1} \dense}

	\State $S:=\emptyset$
	\For{the $i$-th element $e_i$ on the stream}
		\If  {$i\leq \beta n$ and $f(e_i|S) \geq \frac{C_1}{k} \text{OPT}$ and $|S|<k$}
			\State $S:=S\cup \{e_i\}$
		\ElsIf {$i> \beta  n$ and $f(e_i|S) \geq \frac{1}{C_2\cdot k} \text{OPT}$ and $|S|<k$}
			\State $S:=S\cup \{e_i\}$
		\EndIf  
    \EndFor\\
    \Return $S$

\end{algorithmic}
\end{algorithm}

For the case when $\cO$ is not dense, we use two algorithms as described above. The first algorithm (Algorithm~\ref{algo2}) goes over the stream and selects any element whose marginal gain to the currently selected elements is at least $\frac{\text{OPT}}{k}(\nicefrac{1}{2}+\epsilon)$. The second algorithm (Algorithm~\ref{algo3}) starts with the same threshold, but after passing over $\beta n$ elements it decreases the threshold to $\frac{\text{OPT}}{k}(\nicefrac{1}{2}-\delta)$.
\begin{algorithm}[h!]
\begin{algorithmic}[1]
\caption{\label{algo2} \fixed}
	\State $S:=\emptyset$
	\For{the $i$-th element $e_i$ on the stream }
		\If  {$f(e_i|S) \geq \frac{\text{OPT}}{k}(\nicefrac{1}{2}+\epsilon)$ and $|S|<k$}
			\State $S:=S\cup \{e_i\}$
		\EndIf  
    \EndFor\\
    \Return $S$
\end{algorithmic}
\end{algorithm}

\begin{algorithm}[h!]
\caption{\label{algo3} \highlow}
\begin{algorithmic}[1]
	\State $S:=\emptyset$
	\For{the $i$-th element $e_i$ on the stream }
		\If  {$i\leq \beta  n$ and $f(e_i|S) \geq \frac{\text{OPT}}{k} (\nicefrac{1}{2}+\epsilon)$ and $|S|<k$}
			\State $S:=S\cup \{e_i\}$
		\ElsIf {$i> \beta  n$ and $f(e_i|S) \geq \frac{\text{OPT}}{k} (\nicefrac{1}{2}-\delta)$ and $|S|<k$}
			\State $S:=S\cup \{e_i\}$
		\EndIf  
    \EndFor\\
    \Return $S$

\end{algorithmic}
\end{algorithm}
Since we do not know in advance whether the input is dense or not, we run these three algorithms in parallel and output the best solution at the end. 

% !TEX root = main.tex
\section{\twopass Algorithm}
In this section, we describe our \twopass algorithm. Recall that we denote the optimum solution by $\cO=\{o_1,\ldots,o_k\}$ and we let $\text{OPT}=f(\cO)$. Throughout this section, we assume that OPT is known. We show how to remove this assumption in \cref{sec:removing-opt}. Also, in Appendix~\ref{sec:multi-pass} we present a (more general) $p$-pass algorithm.

Our \twopass algorithm (Algorithm~\ref{algo2pass}) is simple: in the first pass we pick any element whose marginal gain with respect to the currently picked elements is higher than the threshold $T_1=\frac{2}{3} \cdot \frac{\text{OPT}}{k}$. In the second pass we do the same using the threshold $T_2=\frac{4}{9} \cdot \frac{\text{OPT}}{k}$. 

\begin{algorithm}[h!]
\caption{\label{algo2pass} \twopass Algorithm}
\begin{algorithmic}[1]
	\State $S:=\emptyset$
	\For{the $i$-th element $e_i$ on the stream}
		\If  {$f(e_i|S) \geq \frac{\text{2OPT}}{3k}$ and $|S|<k$}
			\State $S:=S\cup \{e_i\}$
		\EndIf  
    \EndFor
    	\For{the $i$-th element $e_i$ on the stream}
		\If  {$f(e_i|S) \geq \frac{\text{4OPT}}{9k}$ and $|S|<k$}
			\State $S:=S\cup \{e_i\}$
		\EndIf  
    \EndFor\\
    \Return $S$
\end{algorithmic}
\end{algorithm}

\begin{theorem}
\twopass is a $\nicefrac{5}{9}$-approximation for $\SEXYP$.
\end{theorem}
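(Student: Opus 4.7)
The plan is to let $S_1$ denote the set held by the algorithm after the first pass, write $k_1 := |S_1|$, and let $S$ denote the final set returned. I would analyze the approximation guarantee by splitting into two cases according to whether $|S| < k$ or $|S| = k$ at termination.

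Case 1: $|S| < k$. Then the size constraint was never active during the second pass, so for every $o \in \cO$ either $o$ already lies in $S$ (and $f(o \mid S) = 0$) or $o$ was rejected in the second pass because $f(o \mid S_{\text{current}}) < T_2$; by submodularity the latter inequality also holds with respect to the final set $S$. Thus $f(o \mid S) < T_2$ for every $o \in \cO$. Submodularity and monotonicity then give
\[
\fopt \;\leq\; f(\cO \cup S) \;=\; f(S) + f(\cO \mid S) \;\leq\; f(S) + \sum_{o \in \cO} f(o \mid S) \;<\; f(S) + k T_2,
\]
i.e. $f(S) > \fopt - \tfrac{4}{9}\fopt = \tfrac{5}{9}\fopt$.

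Case 2: $|S| = k$. Here I would use two lower bounds on $f(S_1)$. The volumetric one is immediate: each element inserted during the first pass contributed marginal at least $T_1$, so $f(S_1) \geq k_1 T_1$. If additionally $k_1 < k$, then repeating the argument of Case 1 for the first pass (with threshold $T_1$) yields $f(o \mid S_1) < T_1$ for every $o \in \cO$ and hence $f(S_1) \geq \fopt - k T_1 = \tfrac{1}{3}\fopt$. Since the second pass inserts exactly $k - k_1$ elements, each of marginal at least $T_2$,
\[
f(S) \;\geq\; f(S_1) + (k - k_1) T_2 \;\geq\; \max\bigl(k_1 T_1,\; \tfrac{1}{3}\fopt\bigr) + (k - k_1) T_2.
\]
For $k_1 \geq k/2$, the first term inside the max dominates and the right-hand side equals $\tfrac{2k_1 + 4k}{9k}\fopt \geq \tfrac{5}{9}\fopt$ (also covering the edge case $k_1 = k$). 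For $k_1 \leq k/2$, the second term dominates and the right-hand side equals $\tfrac{1}{3}\fopt + \tfrac{4(k-k_1)}{9k}\fopt \geq \tfrac{5}{9}\fopt$.

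The main obstacle, and the reason for the specific thresholds $T_1 = \tfrac{2}{3}\fopt/k$ and $T_2 = \tfrac{4}{9}\fopt/k$, is that the two lower bounds on $f(S_1)$ pull in opposite directions as $k_1$ varies: the volumetric bound $k_1 T_1$ is weak when the first pass picks few elements, while the constant covering bound $\tfrac{1}{3}\fopt$ is weak relative to what we need when many are picked. The thresholds are tuned precisely so that, after combining with the second-pass contribution $(k-k_1)T_2$, both branches meet exactly at $\tfrac{5}{9}\fopt$ when $k_1 = k/2$ and are strictly larger elsewhere.
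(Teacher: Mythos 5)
Your proof is correct and follows the same two-case decomposition (on $|S|<k$ vs.\ $|S|=k$) as the paper. The one place where the paper's main-text proof hand-waves --- the ``worst-case scenario is $k/2$ elements per pass'' claim, which it defers to the appendix where it is handled via the general $p$-pass induction --- you replace with a clean, self-contained argument: combining the volumetric bound $f(S_1) \ge k_1 T_1$ and the covering bound $f(S_1) \ge \tfrac13\fopt$ via a $\max$, then splitting at $k_1 = k/2$. This is exactly what the paper's appendix lemmas reduce to when specialized to $p=2$, so the route is essentially the same, but your version is tighter and avoids the forward reference.
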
 

\begin{proof}
We prove this theorem in two cases depending on $|S|$.
First we consider the case $|S|<k$. For any element $ o \in \cO \setminus S$  we have $f(o|S_i) \leq T_2$ since we have not picked it in the second pass. 
Therefore \[f(\cO|S) \le \sum_{o \in \cO} f(o|S) \leq k\cdot  T_2 = \nicefrac{4}{9} \cdot \text{OPT}.\] Thus
\[\text{OPT}  \leq f(S \cup \cO) = f(S) + f(\cO |S)\]
and so
\[
f(S) \geq \text{OPT}\cdot(1- \nicefrac{4}{9}) = \nicefrac{5}{9} \cdot \text{OPT}.
\]
Therefore in this case we get the desired approximation ratio.

Now we consider the second case, i.e., $|S|=k$. It is clear that if we have picked $k$ elements in the first round, then we get a $\nicefrac{2}{3}$-approximation guarantee. Therefore assume that we picked fewer than $k$ elements in the first round, and let $S_1$ denote these elements. With a similar argument as in the previous case we get that $f(S_1) \geq \text{OPT}/3$. One can see that in the worst-case scenario, in the first pass we have picked $k/2$ elements with marginal gain exactly $T_1$ each and in the second pass we have picked $k/2$ elements with marginal gain exactly $T_2$ each (we present a formal proof of this statement in the appendix). Therefore we have:
\begin{align*}
f(S) &\geq k/2 \cdot T_1 +k/2 \cdot T_2 \\
&= k/2 \cdot \frac{2}{3} \cdot \frac{\text{OPT}}{k} +k/2 \cdot \frac{4}{9} \cdot \frac{\text{OPT}}{k}  \\
&\geq \frac{\text{OPT}}{2} \cdot (\nicefrac{2}{3} + \nicefrac{4}{9}) \\
&\geq \nicefrac{5}{9} \cdot \text{OPT}\,.
\end{align*}
\end{proof}

% !TEX root = proofdrafts.tex
\section{Empirical Evaluation}
In this section, we numerically validate our theoretical findings. Namely, we compare our algorithms, $\ouralgo$ and $\twopass$, with two baselines, $\greedy$ and $\sieve$. For this purpose, we consider three applications: (i)~dominating sets on graphs, (ii)~exemplar-based clustering, and (iii)~personalized movie recommendation. In each of the experiments we find that \ouralgo outperforms \sieve.

It is natural to consider the utility obtained by \greedy as a proxy for an optimum, as it is theoretically tight and difficult to beat in practice. The majority of our evaluations demonstrate that the gap between the solutions constructed by \ouralgo and \greedy is more than two times smaller than the gap between the solutions constructed by \sieve and \greedy. 

For each of the experiments we invoke our algorithms with the following values of the parameters: Algorithm~\ref{algo1} with $C_1 = 10$, $C_2 = 0.2$, $\beta = 0.8$; Algorithm~\ref{algo2} with $\eps = 1/6$; Algorithm~\ref{algo3} with $\beta = 0.1$, $\eps = 0.05$, $\delta = 0.025$.

%\begin{itemize}
%	\item Dominating sets on graphs -- we consider three large graphs representing social networks. 
%	\item Exemplar based clustering -- our experiments are run on a data set consisting of images and a data set containing e-mails.
%	\item Movie recommendation -- for a given user preferences and a collection of movies, we select a set popular movies matching
%\end{itemize}

\begin{figure*}[t!]
\newcommand{\spacebeforeincludegraphics}{-25pt}
\newcommand{\spaceafterincludegraphics}{-30pt}
\newcommand{\figurewidth}{1.13\linewidth}
\vspace{25pt}
\hspace{-30pt}
\minipage{0.34\textwidth}%
  \vspace*{\spacebeforeincludegraphics}
  \includegraphics[width=\figurewidth]{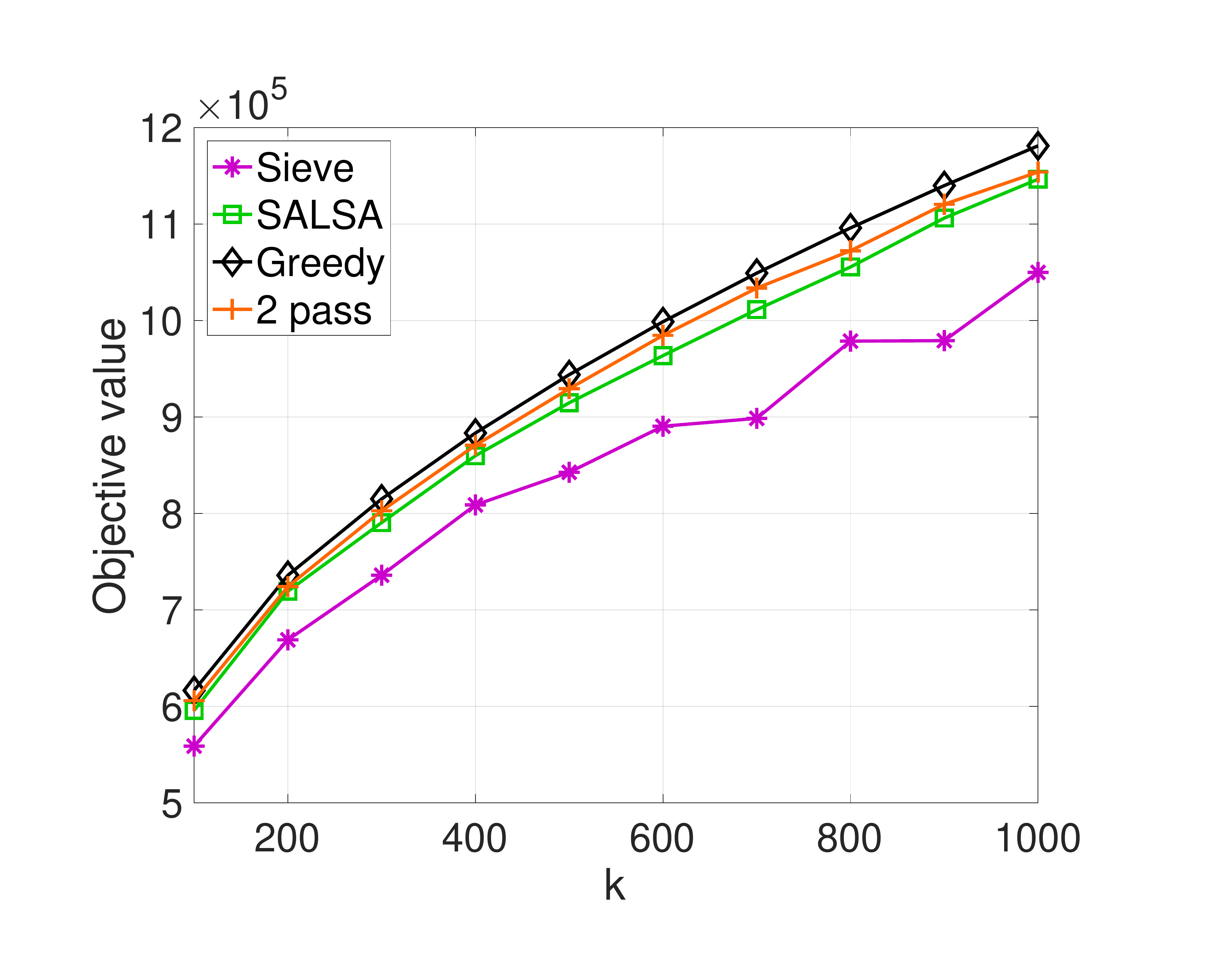}
  \vspace*{\spaceafterincludegraphics}
  \caption*{(a) Orkut}
\endminipage
\minipage{0.34\textwidth}%
  \vspace*{\spacebeforeincludegraphics}
  \includegraphics[width=\figurewidth]{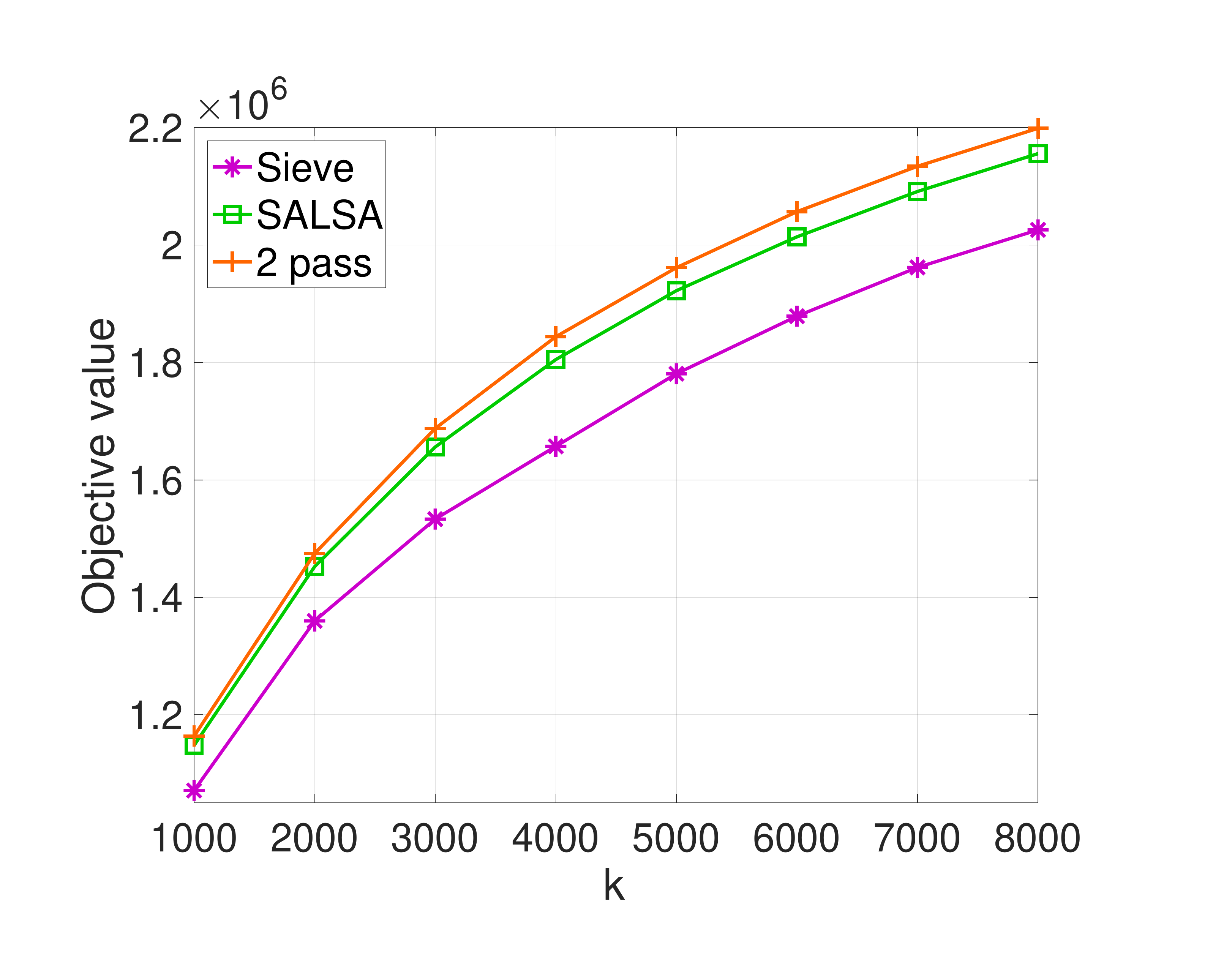}
  \vspace*{\spaceafterincludegraphics}
  \caption*{(b) Orkut}
\endminipage
\minipage{0.34\textwidth}
  \vspace*{\spacebeforeincludegraphics}
  \includegraphics[width=\figurewidth]{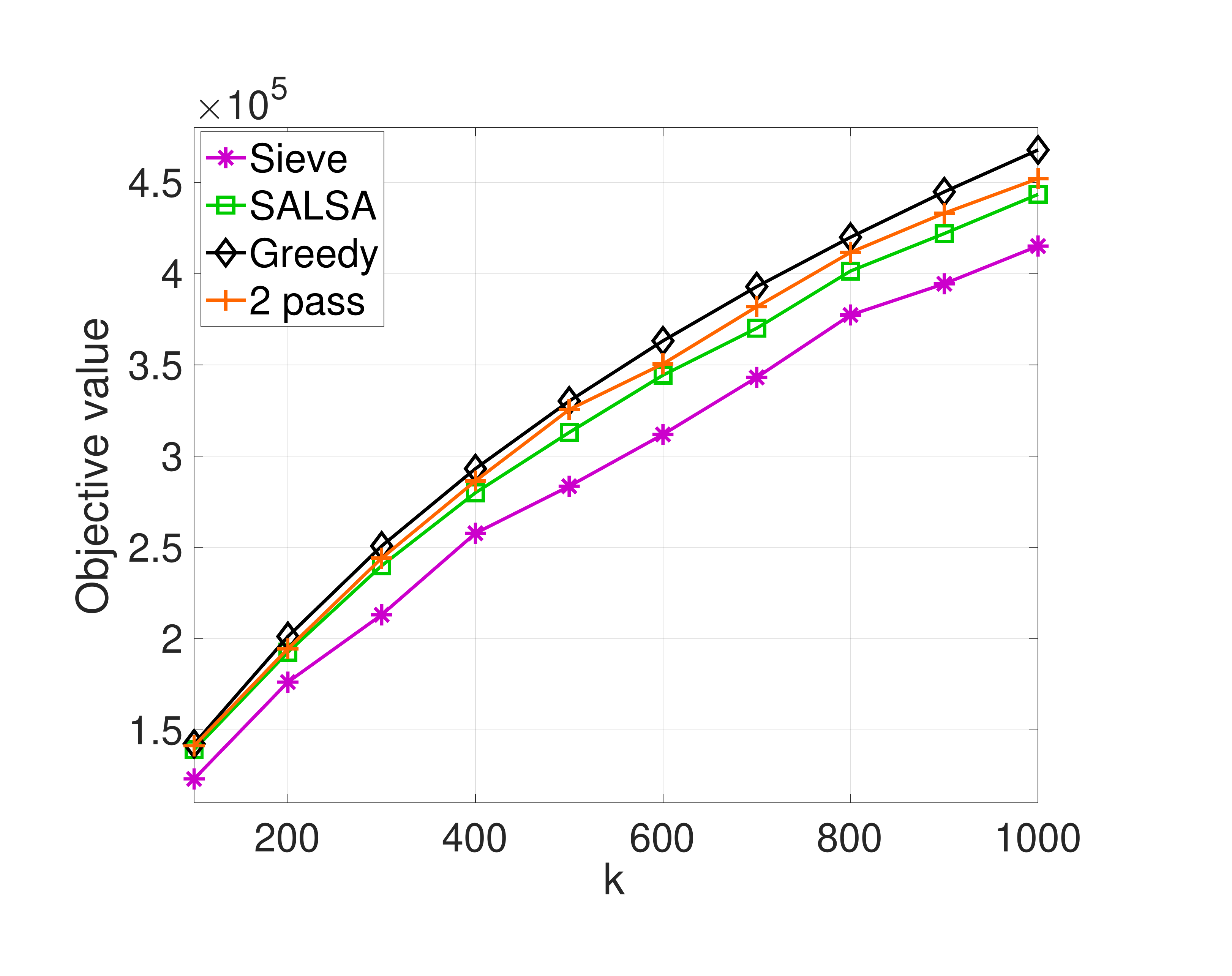}
  \vspace*{\spaceafterincludegraphics}
  \caption*{(c) LiveJournal}
\endminipage
\vspace{30pt}
\vspace{2mm}
\hspace{-30pt}
\minipage{0.34\textwidth}
  \vspace*{\spacebeforeincludegraphics}
  \includegraphics[width=\figurewidth]{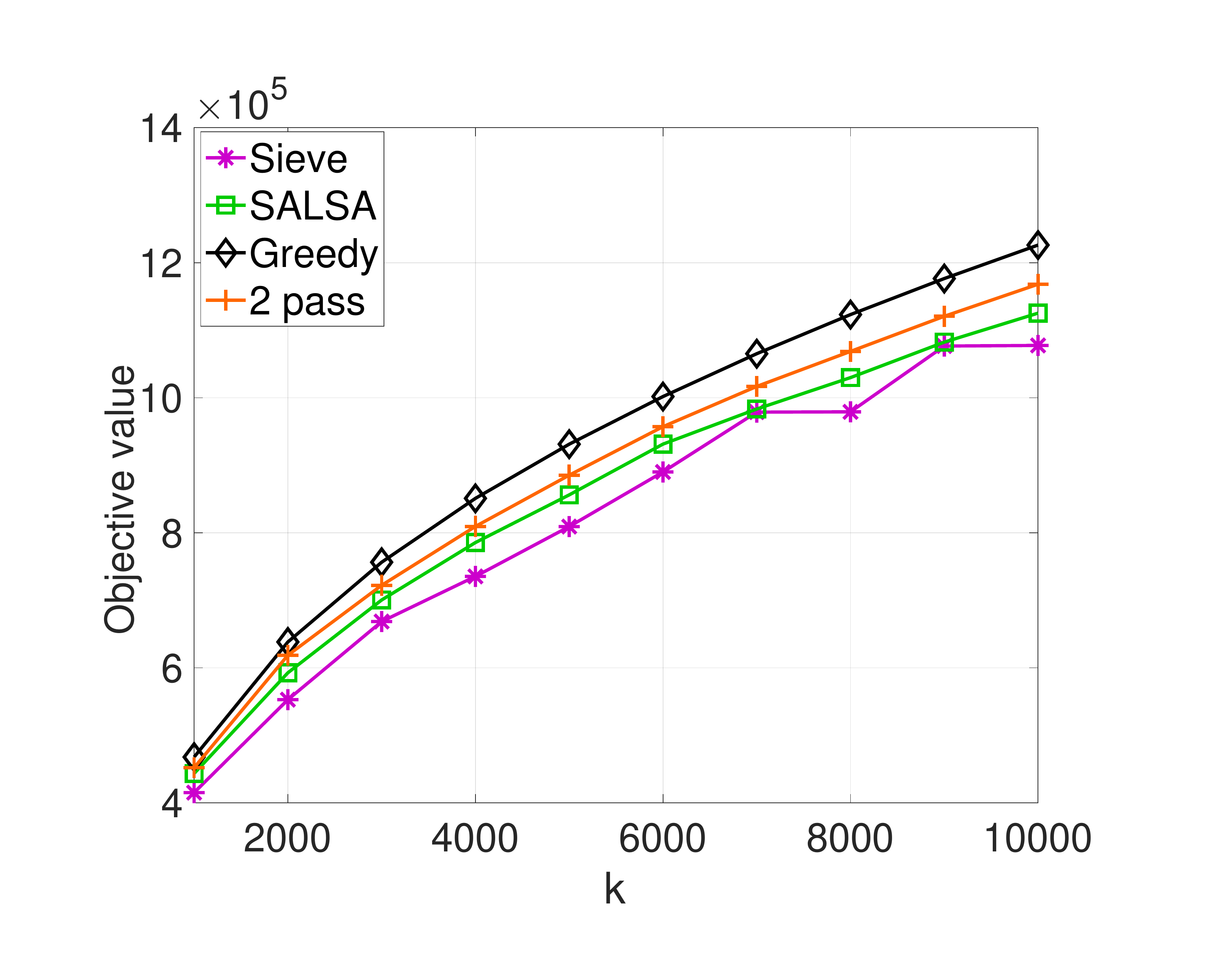}
  \vspace*{\spaceafterincludegraphics}
  \caption*{(d) LiveJournal}
\endminipage
\minipage{0.34\textwidth}%
  \vspace*{\spacebeforeincludegraphics}
  \includegraphics[width=\figurewidth]{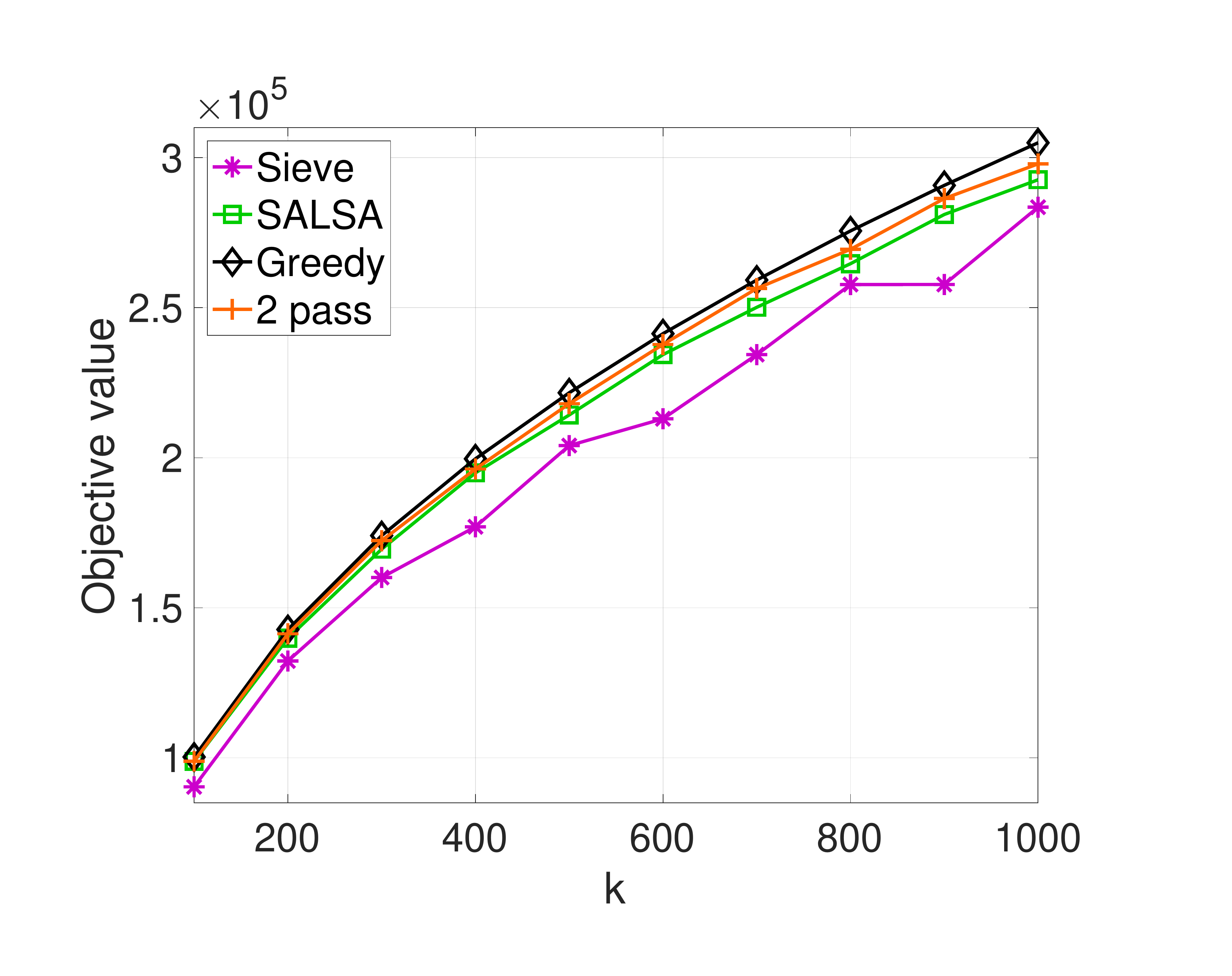}
  \vspace*{\spaceafterincludegraphics}
  \caption*{(e) Pokec}
\endminipage
\minipage{0.34\textwidth}%
  \vspace*{\spacebeforeincludegraphics}
  \includegraphics[width=\figurewidth]{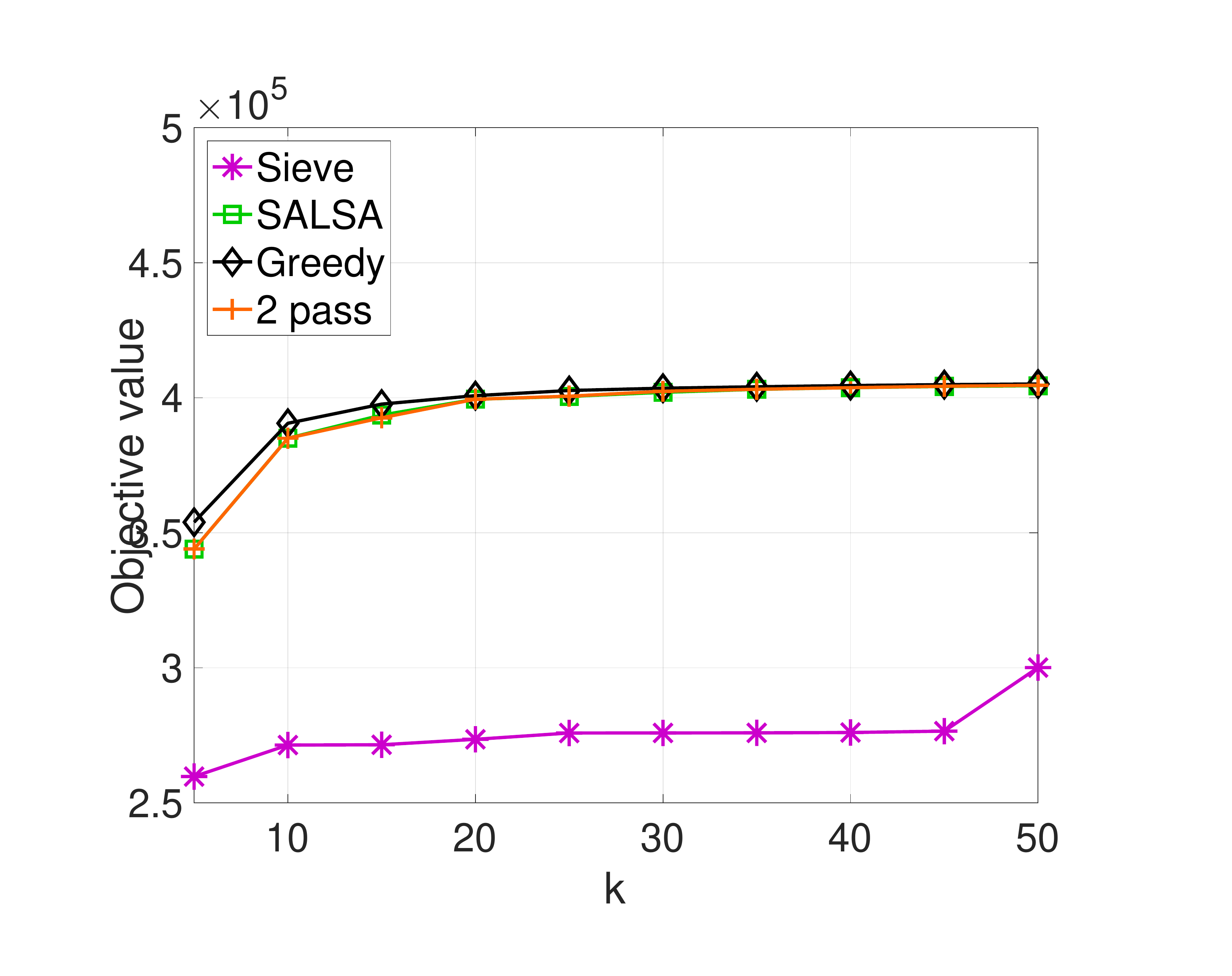}
  \vspace*{\spaceafterincludegraphics}
  \caption*{(f) Spambase}
\endminipage
\vspace{20pt}
\vspace{2mm}
\hspace{-12pt}
\minipage{0.34\textwidth}%
  \vspace*{\spacebeforeincludegraphics}
  \includegraphics[width=\figurewidth]{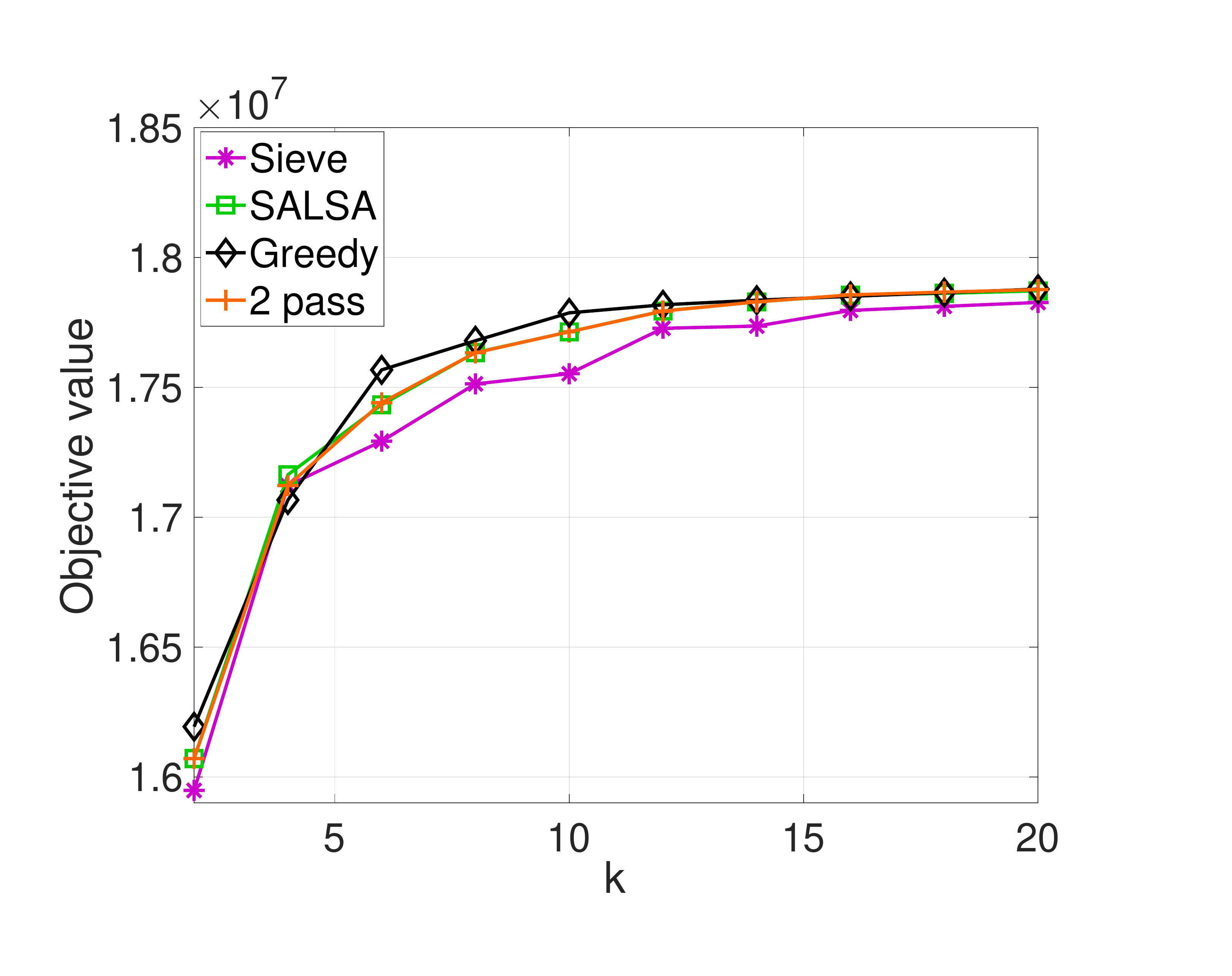}
  \vspace*{\spaceafterincludegraphics}
  \caption*{(g) CIFAR-10}
\endminipage
\minipage{0.34\textwidth}%
  \vspace*{\spacebeforeincludegraphics}
  \includegraphics[width=\figurewidth]{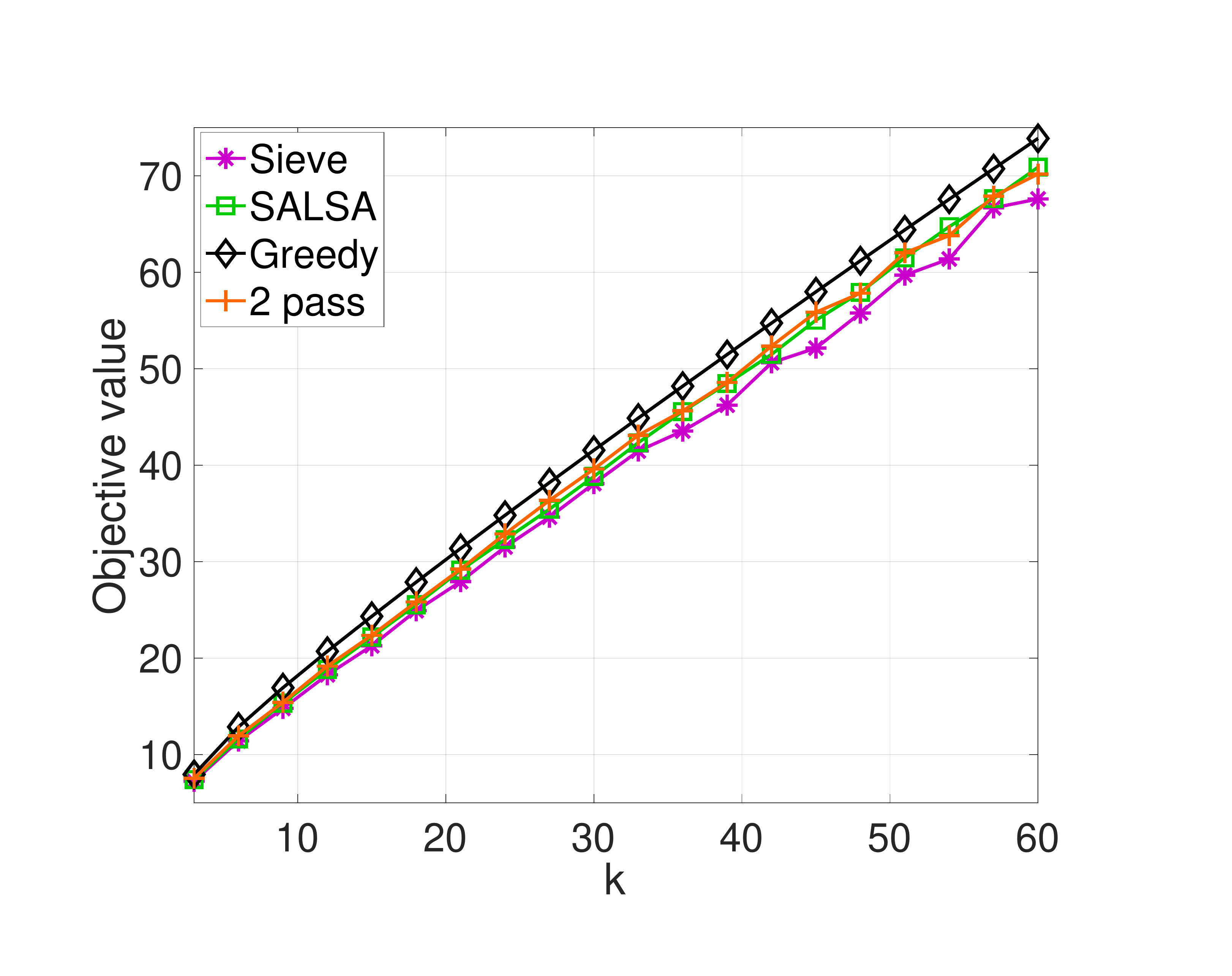}
  \vspace*{\spaceafterincludegraphics}
  \caption*{(h) Movies, $\alpha = 0.75$}
\endminipage
\minipage{0.34\textwidth}%
  \vspace*{\spacebeforeincludegraphics}
  \includegraphics[width=\figurewidth]{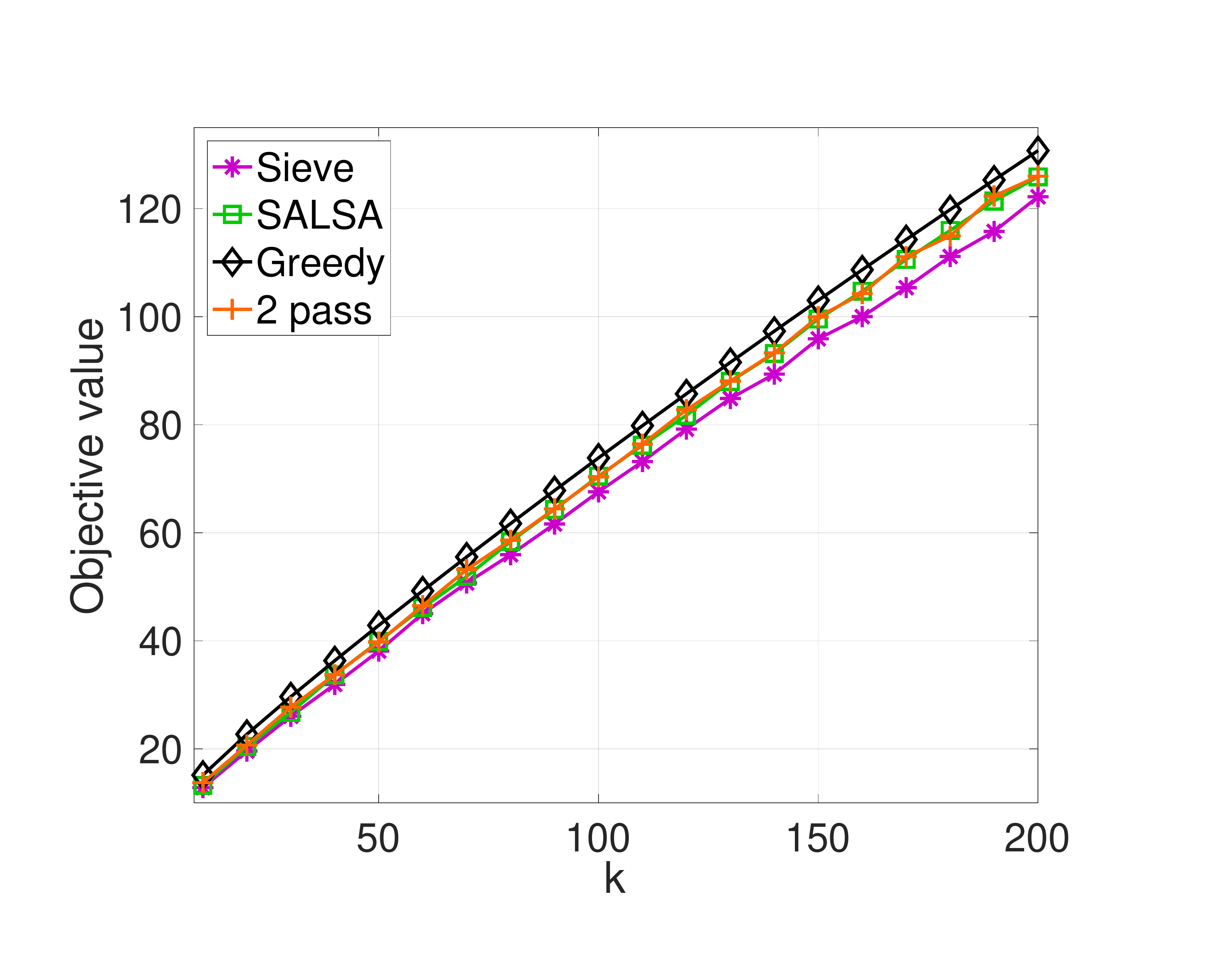}
  \vspace*{\spaceafterincludegraphics}
  \caption*{(i) Movies, $\alpha = 0.85$}
\endminipage

\caption{Numerical comparisons of our two algorithms (\ouralgo and \twopass) and baselines (\greedy and \sieve). In plot (b) we could not run \greedy on the underlying dataset due to its prohibitively slow running time on this instance. Each plot demonstrates the performance of the algorithms for varying values of the cardinality $k$. The datasets used for plots (a)-(e) are described in Section~\ref{sec:exp-coverage}, for plots (f) and (g) in Section~\ref{sec:exp-exemplar}, and for plots (h) and (i) in Section~\ref{sec:exp-movies}.} 
\label{figure:plots}
\end{figure*}

% !TEX root = proofdrafts.tex
\subsection{Maximum coverage in big graphs}
\label{sec:exp-coverage}

Maximum coverage is a classic graph theory problem with many practical applications, including influence maximization in social networks~\cite{DBLP:journals/toc/KempeKT15} and community detection in graphs~\cite{DBLP:conf/valuetools/FortunatoL09}.
The goal in this problem is to find a small subset of vertices of a graph that is connected to a large fraction of the vertices.

Maximum coverage can be cast as maximization of a submodular function subject to a cardinality constraint. More formally, we are given a graph $G=(V,E)$, where $n=|V|$ denotes the number of vertices and $m=|E|$ denotes the number of edges. The goal is to find a set $S \subseteq V$ of size $k$ that maximizes the number of vertices in the neighborhood of $S$.\footnote{This problem has been also referred to as the dominating set problem in the literature.} We consider three graphs for this problem from the SNAP data library~\cite{snapnets}.
\begin{description}
\item[Pokec social network] Pokec is the most popular online social network in Slovakia. This graph has $n=1,632,803$ and $m=30,622,564$.
\item[LiveJournal social network] LiveJournal~\cite{Backstrom:2006:GFL:1150402.1150412} is a free online community that enables members to maintain journals and individual and/or group blogs. This graph has  $n=4,847,571$ and $m=68,993,773$.
\item[Orkut social network] Similar to Pokec, Orkut \cite{DBLP:journals/kais/YangL15} is also an online social network. This graph has $n=3,072,441$ vertices and $m=117,185,083$ edges.
\end{description}

We compare our algorithms, \ouralgo and \twopass, with both baselines on these datasets for different values of $k$ -- from $100$ to $10,000$. The results show that \ouralgo always outperforms \sieve by around $10\%$, and also reduces the gap between \greedy and the best streaming algorithm by a factor of two. Furthermore, the performance of our \twopass algorithm is very close to that of \greedy. The results can be found in \cref{figure:plots}, where (a) and (b) correspond to the Orkut dataset, (c) and (d) correspond to LiveJournal, and (e) to Pokec.

% !TEX root = main.tex
\subsection{Exemplar-based clustering}
\label{sec:exp-exemplar}

Imagine that we are given a collection of emails labeled as spam or non-spam and asked to design a spam classifier. In addition, every email is equipped with an $m$-dimensional vector corresponding to the features of that email. One possible approach is to view these $m$-dimensional vectors as points in the Euclidean space, decompose them into $k$ clusters and fix a representative point for each cluster. Then, whenever a new email arrives, it is assigned the same label as the cluster representative closest to it. Let $V$ denote the set of all the labeled emails. To obtain the described set of cluster representatives, we maximize the following submodular function:
\[
	f(S) = L(\{e_0\}) - L(S \cup \{e_0\}),
\]
where $e_0$ is the all-zero vector, and $L(S)$ is defined as follows~\cite{gomes10budgeted}:
\[
	L(S) = \frac{1}{|V|} \sum_{e \in V} \min_{v \in S} d(e, v).
\]
In the definition of the function $L(S)$, $d(x, y) = \|x - y\|^2$ denotes the squared Euclidean distance.\footnote{Notice that we turn a minimization problem over $L(S)$ into a maximization problem over $f(S)$. The approximation guarantee for maximizing $f(S)$ does not transfer to an approximation guarantee for minimizing $L(S)$. Nevertheless, maximizing $f(S)$ gives very good practical performance, and hence we use it in place of $L(S)$.}

Similarly to spam classification, and among many other applications, the exemplar submodular function can also be used for image clustering. In light of these applications, we perform experiments on two datasets:
\begin{description}
	\item[Spambase] This dataset consists of $4,601$ emails, each email described by $57$ attributes~\cite{Lichman:2013}. We do not consider mail-label as one of the attributes.
	
	\item[CIFAR-10] This dataset consists of $50,000$ color images, each of size $32 \times 32$, divided into $10$ classes. Each image is represented as a $3,072$-dimensional vector -- three coordinates corresponding to the red, green and blue channels of each pixel~\cite{krizhevsky2014cifar}.
\end{description}
Before running these experiments, we subtract the mean of the corresponding dataset from each data point.

The results for the Spambase dataset are shown in \cref{figure:plots}(f). We can observe that both of our algorithms attain a significantly higher utility than $\sieve$. Also, at their point of saturation, our algorithms equalize with \greedy. We can also observe that \sieve saturates at a much lower value than our algorithms, which suggests that the strategy we develop filters elements from the stream more carefully than $\sieve$ does.

Our results for the CIFAR-10 dataset, depicted in \cref{figure:plots}(g), show that, before the point of saturation our algorithms select elements of around $5\%$ higher utility than $\sieve$.
After the point of saturation our algorithms achieve the same utility as $\greedy$, while $\sieve$ approaches that value slowly.
Saturation happens around $k = 10$, which is expected since the images in CIFAR-10 are decomposed into $10$ classes.

% !TEX root = main.tex
\subsection{Personalized movie recommendation}
\label{sec:exp-movies}

%For this application,
We use the Movielens 1M dataset~\cite{harper2016movielens}
to build a recommender system for movies.
The dataset contains over a million ratings
for 3,900 movies
by 6,040 users.
For a given user $u$
and a number $k$,
the system should
recommend a collection of $k$ movies
personalized for user $u$.

We use the scoring function proposed by Mitrović et al.~\yrcite{mitrovic2017streaming}.
We first compute low-rank feature vectors $w_u \in \bR^{20}$ for each user $u$
and $v_m \in \bR^{20}$ for each movie $m$.
These are obtained via low-rank matrix completion
\cite{troyanskaya2001missing}
so as to make each inner product $\ab{w_u, v_m}$
approximate the rating of $m$ by $u$,
if known.
Now we define the submodular function
\[ f_{u, \alpha}(S) = 
\alpha \cdot \sum_{m' \in M}
%\max\rb{0,
\max_{m \in S} \ab{v_{m'}, v_{m}}
%}
+
(1 - \alpha)
\cdot
\sum_{m \in S} \ab{w_u, v_m}
. \]
The first term is a
%coverage score
facility-location objective
\cite{lindgren2016leveraging}
that measures how well $S$ covers the space $M$ of all movies
(thus promoting diversity).
The second term aggregates the user-dependent scores
of items in $S$.
%that are
%predicted for user $u$.
The parameter $\alpha$ can be adjusted depending on
the user's preferences.

Our experiments consist in
recommending collections of movies
for $\alpha = 0.75$ and values of $k$ up to $60$
(see \cref{figure:plots}(h)),
as well as
for $\alpha = 0.85$ and values of $k$ up to $200$ 
(see \cref{figure:plots}(i)).
We do this for 8 randomly selected users and report the averages.
%\Jnote{could someone whine that we should have stddev bars on the plots?}
We find that
the performance of
both \ouralgo and \twopass
falls at around 40\%
of the gap between \sieve and \greedy.
This quantity improves as $k$ increases.
%\Jnote{anything else to be said?}

%\Jnote{some example of what movies are returned?}

% !TEX root = main.tex
\section{Conclusion}
In this paper, we consider the monotone submodular maximization problem subject to a cardinality constraint. For the case of adversarial-order streams, we show that a $\nicefrac{1}{2}$ approximation guarantee is tight. Motivated by real-world applications, we also study this problem in random-order streams. We show that the previously known techniques are not sufficient to improve upon $\nicefrac{1}{2}$ even in this setting. We design a novel approach that exploits randomness of the stream and achieves a better-than-$\nicefrac{1}{2}$ approximation guarantee. We also present a multi-pass algorithm that approaches $\rb{1-\nicefrac{1}{e}}$-approximation using only a constant number of passes, even in adversarial-order streams. We validate the performance of our algorithm on real-world data. Our evaluations demonstrate that we outperform the state of the art \sieve algorithm by a considerable margin. In fact, our results are closer to \greedy than to \sieve.
%\begin{itemize} 
%\item 
Although we make a substantial progress in the context of streaming submodular maximization, there is still a gap between our approximation guarantee and the currently best known lower bound. It would be very interesting to reduce (or close) this gap, and we hope that our techniques will provide insight in this direction.

%Designing a tight algorithm for random-order streams: The gap between our algorithm and the best known impossibility result is quite big, and one might be able to reduce it by introducing new techniques.
%\item Adaptive algorithms in multi-pass setting: We believe that one might be able to improve the accuracy of our algorithm based on adaptive thresholding, e.g., an algorithm that chooses the thresholds based on the structure of the stream and not independent from it.
%\end{itemize}
% !TEX root = main.tex
\subsection*{Acknowledgements}
We thank the anonymous reviewers for their valuable feedback.
Ola Svensson and Jakub Tarnawski were supported by ERC Starting Grant 335288-OptApprox.

\bibliographystyle{icml2018}
\bibliography{ref}

\appendix

\onecolumn

% !TEX root = main.tex

\section{Analysis of the Algorithm}\label{mainalgo}

In this section we analyze our algorithms and present the proof of Theorem \ref{main-theorem}. Throughout this section, we assume that the value $\fopt$ of the optimum solution is known. We remove this assumption in Appendix \ref{sec:removing-opt}.

We run three procedures in parallel and return the best of those as our solution. Algorithm \ref{algo1} works well for instances containing a dense set (see Definition~\ref{def:dense}). The other two algorithms (\cref{algo2,algo3}) guarantee that we attain a high-utility solution in the absence of the density assumption. We prove the correctness of the stated algorithms under the assumption that  $k>k_0= 2 \cdot 10^8$, where $k_0$ is a constant. In \cref{smallkcase} we introduce \cref{alg0}, which completes the proof for the case when $k$ is small.

\maintheorem*

%In this section we will assume that $k \ge k_0$.
\subsection{The dense case}
In this section we analyse the correctness of Algorithm \ref{algo1} under the assumption that  $k>k_0 = 2 \cdot 10^8$. Let us first define a dense set.

\begin{definition} \label{def:dense}
We say that a set $D$ of elements
is \emph{dense}
if it has
$|D| \le \eta k$
and
$f(D) \ge \frac{1 - \gamma}{2} \fopt$,
where we set $\gamma = 10^{-2}$ and $\eta = 5 \cdot 10^{-5}$.
\end{definition}
%\Jnote{We used to have $C$, I changed that to $\eta = 1/C_{old}$}
%\Jnote{And I changed $\eps'$ to $\gamma$}

This section is devoted to the proof of the following theorem.
\begin{theorem} \label{thm:dense}
There is an algorithm giving a $0.50025$-approximation with probability at least $0.01$
for instances containing a dense set $D$
and having $k \ge k_0$.
\end{theorem}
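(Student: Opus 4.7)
I would analyze Algorithm~\ref{algo1} on dense instances, i.e.\ those with $|D|\le\eta k$ and $f(D)\ge\frac{1-\gamma}{2}\,\text{OPT}$. Write $S_1$ for $S$ at the end of phase~1 and $S=S_1\cup S_2$ for the final output, with $k_1=|S_1|$ and $k_2=|S_2|$. A preliminary observation is that every phase-1 addition contributes at least $C_1\,\text{OPT}/k$ to $f(S_1)$, and since $f(S_1)\le\text{OPT}$ this forces $k_1\le k/C_1$. The heart of the argument is a ``dense-capture lemma'' establishing, with constant probability over the random stream order, a lower bound $f(S_1)\ge \alpha\,\text{OPT}$ for an absolute constant $\alpha$ close to $\beta(1-\gamma)/2$.

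For the lemma I would combine three ingredients. First, the average marginal of a $D$-element, $\frac{(1-\gamma)/2}{\eta}\cdot\frac{\text{OPT}}{k}\approx 10^4\cdot\frac{\text{OPT}}{k}$, vastly exceeds the phase-1 threshold $C_1\,\text{OPT}/k$. Second, a random-order sampling inequality for monotone submodular functions yields $\E{f(D_1)}\ge\beta f(D)$ for $D_1=D\cap(\text{first }\beta n\text{ arrivals})$, and a hypergeometric concentration (enabled by the assumption $k\ge k_0=2\cdot 10^8$, which makes $|D|$ large enough for $|D_1|$ to concentrate) upgrades this into a constant-probability lower bound on $f(D_1)$. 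Third, I transfer the bound from $D_1$ to $S_1$ via a charging argument: each $d\in D_1\setminus S_1$ was rejected because $f(d\mid S_\text{arrival})<C_1\,\text{OPT}/k$, and submodularity gives the same bound with respect to $S_1$. Summing and using monotonicity,
\[
f(S_1)\;\ge\;f(D_1)-|D_1|\cdot\frac{C_1\,\text{OPT}}{k}\;\ge\;f(D_1)-\eta C_1\,\text{OPT},
\]
and with the chosen constants $\eta C_1=5\cdot 10^{-4}$ is negligible compared to $\beta f(D)\approx 0.4\,\text{OPT}$.

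Given the lemma, I would conclude by a case split on $|S|$. If $|S|=k$, the $k_2=k-k_1$ phase-2 additions each contribute at least $\text{OPT}/(C_2 k)$, so $f(S)\ge \alpha\,\text{OPT}+(1-1/C_1)\,\text{OPT}/C_2$; choosing $C_2$ just above $2$ (so the phase-2 threshold sits just below $\text{OPT}/(2k)$, as in the overview) makes this exceed $0.50025\,\text{OPT}$. If $|S|<k$, I extend the charging to $D_2=D\setminus D_1$, noting that each rejected $d\in D_2$ satisfies $f(d\mid S)<\text{OPT}/(C_2 k)$; this gives $f(D\mid S)\le\eta(C_1+1/C_2)\,\text{OPT}$, hence $f(S)\ge f(D)-\eta(C_1+1/C_2)\,\text{OPT}\ge\frac{1-\gamma}{2}\,\text{OPT}-o(1)\,\text{OPT}$. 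Since this alone does not quite cross $0.5$, the remaining $0.00025$ surplus must be extracted from the phase-2 additions that \emph{did} occur, using that in the success event of the dense-capture lemma $f(S_1)$ sits substantially above the bare threshold baseline.

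The main obstacle is the tiny numerical margin between the raw density $\frac{1-\gamma}{2}\approx 0.495$ and the target $0.50025$. The proof has essentially no slack, and the constants $\gamma,\eta,C_1,C_2,\beta$ must be tuned so that (i)~the imperfect-capture loss $\eta(C_1+1/C_2)$ is negligible, (ii)~the random-order bonus $\beta f(D)$ survives the charging transfer from $D_1$ to $S_1$, and (iii)~the phase-2 credit closes the remaining gap simultaneously in both sub-cases. The hypergeometric-concentration step is what limits the success probability to $0.01$ (rather than to $1-o(1)$) and is the reason for the largeness requirement $k\ge k_0$.
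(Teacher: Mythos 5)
Your overall shape — split on $|S|$, use a Chernoff/hypergeometric bound on $|D_L|$ and a random-order bound on $f(D_L)$, charge rejections to the threshold — matches the paper's treatment of the $|S|=k$ sub-case reasonably well. But the $|S|<k$ sub-case has a genuine gap, and it is the hard one: as you compute, $f(S)\ge f(D)-\eta(C_1+1/C_2)\,\fopt\approx \frac{1-\gamma}{2}\fopt - 5\cdot 10^{-3}\fopt\approx 0.49\,\fopt$, which is strictly below $0.5$. Your suggestion to ``extract the remaining $0.00025$ from phase-2 additions that did occur'' does not close this: when $|S|<k$ the algorithm may simply not accept enough phase-2 elements, and nothing in the dense-capture lemma (which lower-bounds $f(S_1)$, a quantity already absorbed into the $f(S)\ge f(D)-f(D|S)$ computation) provides additional credit. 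The problem is structural: $f(D)$ alone can equal exactly $\frac{1-\gamma}{2}\fopt<\frac12\fopt$, so no argument that references only $D$ can cross $1/2$.

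The paper's resolution in this sub-case is a different decomposition that brings in the \emph{optimum} $\cO$, not just $D$. It conditions on the event that a constant fraction of the ``surplus'' $f(\cO\mid D)$ lands in the right (low-threshold) part of the stream, $f(\cO_R\mid D)\ge 0.05\,f(\cO\mid D)$, together with $|\cO_R|\le 0.15k$, both holding with constant probability. Since each $e\in\cO_R$ that survives to the end of the stream has $f(e\mid S)<\fopt/(C_2k)$ with $C_2=10$ (not $C_2$ near $2$ as you suggest), one gets $f(\cO_R\mid S)\le 0.015\,\fopt$, and combining this with $f(\cO_L\mid\cO_R\cup D)\le 0.95\,f(\cO\mid D)$ and $f(D\mid S)\le 100\eta\,\fopt$ yields $f(S)\ge 0.95 f(D)+0.05\bigl(f(D)+f(\cO\mid D)\bigr)-100\eta\fopt-0.015\fopt\ge 0.50025\,\fopt$. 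The crucial term is $0.05\bigl(f(D)+f(\cO\mid D)\bigr)\ge 0.05\,\fopt$: the algorithm gets rewarded for a random $10\%$ of $\cO$ falling into the cheap phase, an idea absent from your sketch. Relatedly, you attribute the $0.01$ success probability to hypergeometric concentration of $|D_1|$, but the paper's $0.01$ comes chiefly from two Markov-inequality steps, on $f(D_L)$ and on $f(\cO_R\mid D)$, with the Chernoff bounds (the only place $k\ge k_0$ is used) contributing only negligible failure mass.
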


%\Jnote{choose one of the below two descriptions...}
Consider Algorithm \ref{algo1} with the following values of the parameters: $C_1 = 100$, $C_2 = 10$, $\beta = 0.9$.
In the first $90\%$ of the stream we collect elements of marginal value larger than $\frac{100}{k} \fopt$, and in the remaining $10\%$ of the stream we collect elements of marginal value larger than $\frac{1}{10k}\fopt$. Intuitively, we expect to see $90\%$ of the elements of $D$ in the first $90\%$ of the stream, therefore, with a high enough threshold, we can pick almost all of those elements.
In the remaining $10\%$ of the stream we enhance our solution using a smaller threshold.

%\begin{itemize}
 % \item Let $S$ be the output of the algorithm. Then, while $|S| < k$ do the following for each arriving element $e$:
%    \begin{itemize}
%      \item If the arriving element $e$ is from the first $90\%$ of the stream and it satisfies $f(e|S) \geq \frac{100}{k} \fopt$, add it to $S$.
%      \item If the arriving element $e$ is from the last $10\%$ of the stream and it satisfies $f(e|S) \geq \frac{1}{10k}\fopt$, add it to $S$.			
%    \end{itemize}
%\end{itemize}

Let $S$ be the set of elements collected by this algorithm,
$D$ be a dense set,
and $\cO$ be an optimum solution. For any set of elements $T$, 
define $T_L$ and $T_R$
to be the elements of $T$ in the left part (90\%) of the stream
and in the right part (10\%), respectively.

\begin{fact} \label{fact:positive_prob}
With probability at least $0.01$
we have all of the following:
\begin{align}
|D_L| &\le 0.95 \eta k \,, \label{eq:GLetak} \\
f(D_L) &\ge 0.85 f(D) \,, \label{eq:GL0.85} \\
|\cO_R| &\le 0.15 k \,, \label{eq:OR0.15} \\
f(\cO_R | D) &\ge 0.05 f(\cO | D) \label{eq:OR0.05} \,.
\end{align}
\end{fact}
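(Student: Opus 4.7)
The plan is to verify each of the four properties separately, using concentration for the size-based events (1) and (3) and a reverse-Markov argument on top of a monotone-submodular random-subset expectation bound for the value-based events (2) and (4), and then to amplify the individual probabilities into a joint probability via a conditional-independence argument. Throughout, I will assume without loss of generality that $D \subseteq \cO$, since the density assumption is naturally witnessed by a dense subset of the optimum.

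For events (1) and (3), I would use that in a uniformly random permutation both $|D_L|$ and $|\cO_R|$ are hypergeometrically distributed, with means $0.9 |D| \le 0.9 \eta k$ and $0.1 k$ respectively. The slack one needs in each case ($0.05 \eta k$ and $0.05 k$) exceeds the corresponding standard deviations by a factor $\Omega(\sqrt{k})$ for $k \ge k_0 = 2\cdot 10^8$, so Hoeffding/Chernoff tail bounds for sampling without replacement make the failure probabilities exponentially small in $k$. (If $|D|$ is small, event (1) is trivial since $|D_L| \le |D| < 0.95 \eta k$ deterministically.) Hence $E_1 \cap E_3$ holds with probability $1 - o(1)$.

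For events (2) and (4), I would invoke the standard fact that for monotone submodular $f$ on a ground set of size $n$ and a uniformly random size-$s$ subset $R$, $\E{f(R)} \ge (s/n) f(\text{ground set})$ (which follows from the concavity in $s$ of $s \mapsto \E{f(R_s)}$, itself immediate from submodularity). Applied to $f$ on $D$, this gives $\E{f(D_L)} \ge 0.9\, f(D)$. Applied to the monotone submodular function $g(T) := f(D \cup T) - f(D)$ on $\cO \setminus D$, it gives $\E{f(\cO_R | D)} \ge 0.1\, f(\cO | D)$, using $f(\cO_R | D) = g(\cO_R \setminus D)$ and $D \subseteq \cO$. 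Since $f(D_L) \le f(D)$ and $f(\cO_R | D) \le f(\cO | D)$, the reverse Markov inequality then yields $\Pr{E_2} \ge \frac{0.9-0.85}{1-0.85} = \frac{1}{3}$ and $\Pr{E_4} \ge \frac{0.1-0.05}{1-0.05} = \frac{1}{19}$.

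To turn these individual bounds into a joint probability above $0.01$, I would exploit the following: once we condition on the counts $(|\cO_L|, |D_L|)$, the sets $D_L \subseteq D$ and $(\cO \setminus D)_R \subseteq \cO \setminus D$ become two conditionally independent uniformly random subsets of their respective ground sets. Thus $E_2$ (a function of $D_L$ only) and $E_4$ (a function of $(\cO \setminus D)_R$ only) are conditionally independent. Restricting to the high-probability ``typical counts'' event where $|D_L|/|D| \approx 0.9$ and $|(\cO \setminus D)_R|/|\cO \setminus D| \approx 0.1$, the conditional expectations stay within $o(1)$ of their unconditional values, so reverse Markov still delivers the constants $1/3 - o(1)$ and $1/19 - o(1)$ conditionally. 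Multiplying, $\Pr{E_2 \cap E_4 \mid \text{typical}} \ge 1/57 - o(1)$, and combining with $\Pr{E_1 \cap E_3} \ge 1 - o(1)$ gives the required bound $\ge 0.01$ for $k \ge k_0$. The main obstacle is precisely this last step: carefully setting up the typical-counts window so that it is simultaneously high-probability and still preserves the expectation bounds by enough for reverse Markov to work, while keeping the final product above $0.01$ with room to spare (the comfortable gap $1/57 \approx 0.0175 > 0.01$ absorbs the $o(1)$ losses).
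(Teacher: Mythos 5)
Your per-event analysis matches the paper's: Chernoff for \eqref{eq:GLetak} and \eqref{eq:OR0.15}; and for \eqref{eq:GL0.85}, \eqref{eq:OR0.05}, an expectation lower bound followed by reverse Markov (the paper establishes $\E{f(D_L)} \ge 0.9 f(D)$ and $\E{f(\cO_R \mid D) \mid I} \ge 0.09\, f(\cO \mid D)$ via a telescoping argument over indicator variables, whereas you cite the random-subset expectation bound $\E{f(R)} \ge \frac{|R|}{|N|} f(N)$; these are equivalent routes to the same estimates). The combination step is where you diverge, and that is where your write-up has a genuine gap.

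You propose to condition on the counts $(|\cO_L|, |D_L|)$, observe that $D_L$ and $(\cO \setminus D)_R$ become conditionally independent, restrict to a ``typical counts'' window where $|D_L|/|D| \approx 0.9$, and multiply the two conditional reverse-Markov bounds. The problem is that the typicality of $|D_L|/|D|$ is \emph{not} a high-probability event uniformly: $|D|$ can be as small as $1$, $20$, or a few hundred, and then the relative fluctuation of $|D_L|$ is a constant (for $|D|=20$, say, the probability that $|D_L|/|D| \ge 0.85$ is far from $1-o(1)$). Your chain ``$\Pr{\text{typical}} \ge 1-o(1)$, hence $\Pr{E_2 \cap E_4} \ge \Pr{\text{typical}} \cdot (1/57 - o(1)) \ge 0.01$'' breaks for small $|D|$. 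The paper sidesteps this entirely: it conditions on $I$ (all positional information about $D$), proves the \emph{uniform} bound $\E{f(\cO_R \mid D) \mid I} \ge 0.09\, f(\cO \mid D)$ for every realization of $I$ (here the crucial estimate $|D_L| \le |D| \le \eta k \le n/100$ is used deterministically, with no typicality needed), takes a conditional expectation over $I$ restricted to event \eqref{eq:GL0.85}, and then applies reverse Markov to get $\Pr{\eqref{eq:OR0.05} \mid \eqref{eq:GL0.85}} \ge 0.04$. The analogous fix within your framework is to drop typicality on the $D$ side entirely: since $|D| \le \eta k \ll n$, the conditional law of $(\cO \setminus D)_R$ given \emph{any} fixed placement of $D$ is within $o(1)$ of its unconditional law, so $\Pr{E_4 \mid \text{positions of } D} \ge 1/19 - o(1)$ holds \emph{uniformly}; then $\Pr{E_2 \cap E_4} \ge (1/19 - o(1)) \Pr{E_2} \ge 1/57 - o(1)$, and you subtract the Chernoff failure probabilities of $E_1, E_3$ exactly as you do. (The two small peripheral issues: you assume ``WLOG $D \subseteq \cO$,'' but Definition~\ref{def:dense} does not require this and the paper's proof explicitly handles $\cO \cap D$ and $\cO \setminus D$ separately; and when applying the random-subset expectation bound you should note that $|D_L|$ has random size, which is handled by linearity after conditioning on that size.)
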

\begin{proof}
First let us show that
\[ \Pr{\text{not } \eqref{eq:GLetak}} = \Pr{|D_L| > 0.95 \eta k} < 0.001 \,. \]
We apply a standard Chernoff bound on the
indicator variables of the elements of $D$ being in the left part,
which are negatively correlated.
We have $0 \le |D| \le \eta k$,
with $|D| = \eta k$ being the worst case for this bound.
We obtain that \[ \Pr{|D_L| > 0.95 \eta k} < \exp\rb{{-\frac{\rb{0.05/0.9}^2 0.9 \eta k}{3}}}
%\le \exp\rb{{-\frac{\rb{0.05/0.9}^2 0.9 \cdot 5 \cdot 10^{-5} \cdot 2 \cdot 10^8}{3}}}
< 0.001 \,, \]
where we used that $\eta = 5 \cdot 10^{-5}$ and $k \ge k_0 = 2 \cdot 10^8$.

For \eqref{eq:OR0.15},
we use a Chernoff bound again to show that
\[ \Pr{\text{not } \eqref{eq:OR0.15}} = \Pr{|\cO_R| > 0.15 k} < \exp\rb{-\frac{\rb{1/2}^2 0.1 k}{3}} < 0.001 \,. \]

For \eqref{eq:GL0.85},
we will first prove that
$\E{f(D_L)} \ge 0.9 f(D)$.
To that end,
let $D = \{ e_1, ..., e_{|D|} \}$
%and define $Z_i$ to be the indicator variable
%of the event that $e_i$ is in the left part of the stream,
%i.e., 
and define
$Z_i = \one_{e_i \in D_L}$.
Note that
\begin{align*}
\E{f(D_L)}
&= \E{ \sum_{i=1}^{|D|} f(e_i | \{ e_j \in L : 1 \le j \le i-1 \}) Z_i} \\
&\ge \E{ \sum_{i=1}^{|D|} f(e_i | e_1, ..., e_{i-1}) Z_i} \\
&= \sum_{i=1}^{|D|} f(e_i | e_1, ..., e_{i-1}) \E{Z_i} \\
&= 0.9 \sum_{i=1}^{|D|} f(e_i | e_1, ..., e_{i-1}) \\
&= 0.9 f(D) \,.
\end{align*}

Now applying Markov's inequality
(and using that $f(D_L) \le f(D)$)
yields
$\Pr{f(D_L) < 0.85 f(D)} \le \frac23$.

For \eqref{eq:OR0.05},
define the submodular function $g(\cdot) = f(\cdot | D)$
for brevity.
We are interested in lower-bounding the quantity
$\Pr{\eqref{eq:OR0.05} | \eqref{eq:GL0.85}} = \Pr{g(\cO_R) \ge 0.05 g(\cO) | f(D_L) \ge 0.85 f(D)}$.
For this, notice that events \eqref{eq:GL0.85} and \eqref{eq:OR0.05}
are almost independent (they would be independent in the limit $n \to \infty$),
and so we can essentially repeat the analysis for~\eqref{eq:GL0.85}.
Formally, let $I$ be a random variable holding all information about the locations of elements of $D$ in the stream.
(The event \eqref{eq:GL0.85}, as well as the variable $|D_L|$, are known given $I$.)
We would like to prove a lower bound on $\E{ g(\cO_R) | I }$.
This is done as above,
%(with $g$ in place of $f$ and $\cO_R$ in place of $G_L$),
with the difference that we set
$\cO = \{e_1, ..., e_k\}$
and
$Z_i = \one_{e_i \in \cO_R}$.
Now, for any $e_i \in \cO \setminus D$
we can bound
\[
\E{Z_i | I}
= \frac{0.1n - |D_L|}{n - |D|}
\ge \frac{0.1n - \eta k}{n}
= 0.1 - \frac{0.9 \eta k}{n}
\ge 0.1 - \frac{0.9 \frac{n}{100}}{n}
> 0.09
\]
using that $k \le n$ and $\eta \le 10^{-2}$.
On the other hand, for any $e_i \in \cO \cap D$ we have $g(e_i | e_1, ..., e_{i-1}) = g(e_i) = 0$.
Thus we get that $\E{g(\cO_R) | I} \ge 0.09 g(\cO)$ and
\[ \E{g(\cO_R) | \eqref{eq:GL0.85}} = \E{ \E{g(\cO_R) | I} | \eqref{eq:GL0.85}} \ge 0.09 g(\cO) \,. \]
Applying Markov's inequality yields that
$\Pr{g(\cO_R) < 0.05 g(\cO) | \eqref{eq:GL0.85}} \le \frac{0.91}{0.95} < 0.96$.
Finally, we get
\begin{align*}
\Pr{\eqref{eq:GLetak}, \eqref{eq:GL0.85}, \eqref{eq:OR0.15}, \eqref{eq:OR0.05}}
&\ge
\Pr{\eqref{eq:GL0.85}, \eqref{eq:OR0.05}} - \Pr{\text{not \eqref{eq:GLetak}}} - \Pr{\text{not \eqref{eq:OR0.15}}} \\
&\ge
\Pr{\eqref{eq:GL0.85}} \Pr{\eqref{eq:OR0.05} | \eqref{eq:GL0.85}} - \Pr{\text{not \eqref{eq:GLetak}}} - \Pr{\text{not \eqref{eq:OR0.15}}} \\
&\ge
\frac13 \cdot 0.04 - 0.001 - 0.001 \\
&>
0.01
\,.
\end{align*}
\end{proof}

\begin{lemma}\label{lem:dense_set}
Assume %the events of \cref{fact:positive_prob} happen.
that the events
\eqref{eq:GLetak}, \eqref{eq:GL0.85}, \eqref{eq:OR0.15}, \eqref{eq:OR0.05}
happen.
Then we have
%$f(S) \ge \frac{1 + \Omega(1)}{2} \fopt$.
$f(S) \ge 0.50025  \cdot \fopt$.
\end{lemma}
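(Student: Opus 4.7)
The plan is a case analysis on whether $|S|=k$, using the four events of Fact~\ref{fact:positive_prob}. Write $T_1 = \frac{100}{k}\fopt$, $T_2 = \frac{1}{10k}\fopt$ for the thresholds and $\Delta = \fopt - f(D)$; the density hypothesis gives $\Delta \le \frac{1+\gamma}{2}\fopt = 0.505\,\fopt$. First I would record the bookkeeping estimates. Every $d \in D_L \setminus S_L$ was rejected with marginal below $T_1$ at its arrival, so ordering these elements in stream order and using submodularity yields $f(D_L \mid S_L) \le |D_L|\,T_1 \le 0.95\,\eta k\cdot\tfrac{100}{k}\fopt = 0.00475\,\fopt$ (via \eqref{eq:GLetak}). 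Combined with \eqref{eq:GL0.85} this gives $f(S_L) \ge f(D_L) - 0.00475\,\fopt \ge 0.416\,\fopt$, and $|S_L|\,T_1 \le f(S_L) \le \fopt$ forces $|S_L| \le k/100$. If $|S|=k$, the right phase contributes at least $(k - k/100)\,T_2 = 0.099\,\fopt$, so $f(S) \ge 0.515\,\fopt > 0.50025\,\fopt$ and the lemma is settled.

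The hard case is $|S|<k$. Since the budget is never exhausted in the right phase, every element of $V_R\setminus S$ is rejected on threshold alone, and the stream-order argument yields $f(\cO_R \mid S) \le |\cO_R|\,T_2 \le 0.015\,\fopt$ (via \eqref{eq:OR0.15}) and $f(D_R \mid S) \le |D_R|\,T_2 \le 5\cdot 10^{-6}\,\fopt$. With the earlier estimate on $f(D_L \mid S_L)$ and submodularity this implies $f(D \mid S) \le 0.005\,\fopt$. I would then write
\[ \fopt \le f(S\cup\cO) = f(S) + f(D\mid S) + f(\cO\setminus D \mid S\cup D) \]
and reduce to upper-bounding the last term. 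The crucial choice is to expand it by taking the right-stream part first:
\[ f(\cO\setminus D \mid S\cup D) = f(\cO_R\setminus D \mid S\cup D) + f(\cO_L\setminus D \mid S\cup D\cup\cO_R). \]
The first summand is at most $f(\cO_R \mid S) \le 0.015\,\fopt$ by submodularity, and for the second, dropping $S$ from the conditioning and using $D\subseteq\cO$ collapses it to
\[ f(\cO_L\setminus D \mid D\cup\cO_R) = f(\cO) - f(D\cup\cO_R) = \Delta - f(\cO_R \mid D), \]
at which point \eqref{eq:OR0.05} furnishes $f(\cO_R \mid D) \ge 0.05\,\Delta$, so the summand is at most $0.95\,\Delta$. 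Assembling everything yields
\[ f(S) \ge 0.98\,\fopt - 0.95\,\Delta \ge 0.98\,\fopt - 0.95 \cdot 0.505\,\fopt = 0.50025\,\fopt. \]

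The main obstacle is the choice of decomposition above. A left-first expansion, or the naive bound $f(\cO\setminus D\mid S\cup D) \le f(\cO\mid D) = \Delta$, only gives roughly $0.475\,\fopt$ and cannot break $1/2$. Peeling $\cO_R$ off first works because $\cO_R$ is small in cardinality by \eqref{eq:OR0.15} and weakly interacts with $S$ through the right threshold, so the remaining left-side marginal simplifies to $\Delta - f(\cO_R \mid D)$, at which point \eqref{eq:OR0.05} is exactly what converts the factor $\Delta$ into $0.95\,\Delta$ and produces the precise constant $0.50025$.
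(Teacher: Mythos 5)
Your argument follows the paper's proof very closely: the same case split on $|S|$, the same bookkeeping bounds from \eqref{eq:GLetak}--\eqref{eq:OR0.05}, and in the $|S|<k$ case the same crucial decomposition that peels $\cO_R$ off before $\cO_L$ in the telescoping of $f(\cO\setminus D\mid S\cup D)$. The $|S|=k$ case is essentially identical (you merge the paper's trivial subcase $|S_L|\ge k/100$ into a single argument, which is fine, though the bookkeeping bound $f(D_L\mid S_L)\le |D_L|T_1$ should really be stated after the observation $|S_L|\le k/100$, since that observation is what guarantees the budget is never exhausted during the left phase and hence that rejections are on threshold grounds).

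There is, however, one genuine gap in the $|S|<k$ case. You invoke ``$D\subseteq\cO$'' to collapse $f(\cO_L\setminus D\mid D\cup\cO_R)$ to $f(\cO)-f(D\cup\cO_R)=\Delta-f(\cO_R\mid D)$ and to read \eqref{eq:OR0.05} as $f(\cO_R\mid D)\ge 0.05\Delta$. But \cref{def:dense} (the one used in the appendix, which is the version relevant here) only requires $|D|\le\eta k$ and $f(D)\ge\frac{1-\gamma}{2}\fopt$; it does \emph{not} require $D\subseteq\cO$, and there are instances with a dense $D$ disjoint from $\cO$. Without $D\subseteq\cO$, $(\cO_L\setminus D)\cup D\cup\cO_R=\cO\cup D$ and $f(\cO\cup D)$ can exceed $\fopt$, so your chain $f(S)\ge 0.98\fopt-0.95\Delta$ does not follow. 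The paper sidesteps this by leaving things in terms of $f(\cO\mid D)$ rather than $\Delta$: it starts from $f(S)\ge f(\cO\cup D)-f(\cO\cup D\mid S)$, bounds $f(\cO\cup D\mid S)\le f(D\mid S)+0.95 f(\cO\mid D)+0.015\fopt$ exactly as you do, and then writes
\[
f(S)\ \ge\ f(D)+f(\cO\mid D)-0.95 f(\cO\mid D)-0.02\fopt\ =\ 0.95 f(D)+0.05\bigl(f(D)+f(\cO\mid D)\bigr)-0.02\fopt,
\]
finally using $f(D)+f(\cO\mid D)=f(\cO\cup D)\ge f(\cO)=\fopt$. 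This keeps $f(\cO\mid D)$ with a \emph{positive} coefficient so that only a lower bound on $f(\cO\cup D)$ is needed, and no relation between $D$ and $\cO$ is assumed. The arithmetic then gives the same $0.50025$ constant. In short: your proof is correct whenever $D\subseteq\cO$, but to match the stated generality of the lemma you should replace the last two displayed lines with the paper's $0.95 f(D)+0.05 f(\cO\cup D)$ manipulation.
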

\begin{proof}
We consider two cases: $|S| < k$ and $|S| = k$.

\paragraph{Case $|S|<k$:}
	First, by the design of the algorithm, for each $e \in D$ we have
	\[ f(e|S) \le \frac{100}{k}\fopt \]
	regardless of whether $e$ appears in the left or in the right part of the stream. Since $|D| \le \eta k$, this implies
	\[ f(D|S) \le \sum_{e\in D}f(e|S) \le 100 \eta \fopt \,. \]
	Moreover, we have \eqref{eq:OR0.05}, i.e., that $f(\cO_R|D) \ge 0.05 f(\cO|D)$. Hence
	\begin{align*}
f(\cO_L| \cO_R \cup D) &= f(\cO | D) - f(\cO_R | D)\\
&\le f(\cO | D) - 0.05 f(\cO | D)\\
&= 0.95 f(\cO | D) \,.
	\end{align*}
	Also, by the design of the algorithm, for every $e \in \cO_R$ it holds that
	\[ f(e| S ) \le \frac{1}{10k}\fopt \,. \]
	Hence, by \eqref{eq:OR0.15},
	\[ f(\cO_R | S) \le \sum_{e \in \cO_R} f(e| S ) \le 0.03 \frac{\fopt}{2} \]
	and therefore
	\begin{align*}
f(\cO | D \cup S) &= f(\cO_R | D \cup S ) + f( \cO_L| \cO_R \cup D \cup S ) \\
&\le f(\cO_R | S) + f(\cO_L| \cO_R \cup D)\\
&\le 0.95 f(\cO | D) + 0.03 \frac{\fopt}{2}.
	\end{align*}
	Therefore,
	\[ f(\cO \cup D | S) = f( D | S) + f(\cO | S \cup D ) \le 100 \eta \fopt + 0.95 f(\cO | D) + 0.03 \frac{\fopt}{2} \,. \]
	Hence,
	\begin{align*}
	f(S) &= f(\cO \cup D \cup S) - f(\cO \cup D | S)\\
	&\ge f(\cO \cup D) - \rb{100 \eta \fopt + 0.95 f(\cO | D) + 0.03 \frac{\fopt}{2}}\\
	&= f(D) + \marginal{\cO}{D} - \rb{100 \eta \fopt + 0.95 f(\cO | D) + 0.03 \frac{\fopt}{2}}\\
	&= 0.95f(D) + 0.05(f(D) + \marginal{\cO}{D}) - 100 \eta \fopt - 0.03 \frac{\fopt}{2}\\
%	&\ge (1-\gamma)\frac{\fopt}{2} + 0.02 \frac{\fopt}{2} -100 \eta \fopt. % true but let's do more precisely
	&\ge 0.95 \frac{1 - \gamma}{2} \fopt + 0.05 \fopt - 100 \eta \fopt - 0.03 \frac{\fopt}{2} \\
	&= \rb{\frac12 + 0.01 - 100 \eta - 0.475 \gamma} \fopt \\
	&= 0.50025 \cdot \fopt \,.
	\end{align*}
	%\Jnote{can we simplify/shorten the calculations in this case?}
	%provided that 	$\gamma < \frac{0.01 - 100 \eta}{0.475}$.

\paragraph{Case $|S| = k$:}
	Note that if $|S_L| \ge \frac{k}{100}$,
	then $f(S_L) \ge \fopt$ and we are done.
	Thus for each $e \in D_L$ we have
	\[ f(e| S_L) \le \frac{100}{k} \fopt \,. \]
	Hence, by \eqref{eq:GLetak},
	\[ f(D_L|S_L) \le \sum_{e \in D_L} f(e|S_L) \le 95 \eta \fopt \,. \]
	Therefore, using \eqref{eq:GL0.85},
	\begin{align*}
		f(S_L) &= f(D_L \cup S_L) - f(D_L|S_L) \\
		&\ge f(D_L) - f(D_L| S_L ) \\
		&\ge 0.85f(D) - 95 \eta\fopt \\
		&\ge \rb{\frac{0.85(1-\gamma)}{2} - 95 \eta}\fopt \,.
	\end{align*}

	Recall that $|S_L| < \frac{k}{100}$,
	so that $|S_R| \ge 0.99 k$. We can write
	\[ f(S_R|S_L) \ge \frac{1}{10} 0.99 \cdot \fopt \]
	and
	\begin{align*}
		f(S) &= f(S_L) + f(S_R|S_L) \\
		&\ge \rb{\frac{0.85(1-\gamma)}{2} - 95 \eta + \frac{0.99}{10}}\fopt \\
		&= \rb{\frac12 + 0.024 - 0.425 \gamma - 95 \eta} \fopt \\
		&= 0.515 \cdot \fopt \,.
	\end{align*}
	%provided that 	$0.024 - 0.425 \gamma - 95 \eta > 0$.
\end{proof}

\cref{fact:positive_prob,lem:dense_set} together imply \cref{thm:dense}.

% !TEX root = main.tex
\subsection{General case}
In this section we analyze the correctness of \cref{algo2,algo3} under the assumption that $k>k_0 = 2 \cdot 10^8$.

%\Jnote{In all the statements about algorithms, add: algorithm uses memory $O(k)$ or whatever}

We invoke \cref{algo2} with the threshold value of $\left(\frac{1+10^{-8}}{2}\right)\left(\frac{\fopt}{k}\right)$, and \cref{algo3} with the value $\beta=10^{-3}$, the threshold value of $\left(\frac{1+10^{-8}}{2}\right)\left(\frac{\fopt}{k}\right)$ for the first $\beta$-fraction of the stream, and the threshold value $\left(\frac{1-3\cdot10^{-11}}{2}\right)\left(\frac{\fopt}{k}\right)$ for the remaining fraction.

Let $\epsilon=10^{-8}$ and $\delta=3\cdot10^{-11}$. We partition the stream into two parts: the left part containing the first $\beta$-fraction
of the arriving elements and the right part containing the remaining $(1-\beta)$-fraction. As presented in \cref{sec:overview-alg}, both \cref{algo2,algo3} act in the same way on the left part of the stream (if the arriving element $e$ satisfies $f(e | S) \geq \left( \frac{1+\epsilon}{2} \right)\left( \frac{\fopt}{k} \right)$, they add it to $S$). However, for the right part of the stream, they proceed with two different strategies. \cref{algo3} works well when the elements selected in the left part carry a lot of value
--
more precisely,
when the value of the left part of the solution is at least $\alpha \fopt$,
where we select $\alpha = 3 \cdot 10^{-3}$.
\cref{algo2} works well in the converse case. 

Let $\cO = \{o_1, \ldots, o_k\}$ denote the optimal solution.
Moreover,
for any set of elements $T$, 
define $T_L$ and $T_R$
to be the elements of $T$ in the left and in the right part of the stream,
respectively.

\begin{claim} \label{claim:chernoff}
  We have
  \begin{align}
    \label{eq:optintersectsL} 
    0.9 \beta k  \leq |\cO_L| \leq 1.1 \beta k
  \end{align}
  with probability at least $\ge 0.999$.
\end{claim}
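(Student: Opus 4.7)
The plan is to recognize that $|\cO_L|$ is a sum of negatively correlated indicator variables (one per element of $\cO$), each with expectation very close to $\beta$, and then apply a standard multiplicative Chernoff bound. Specifically, for each $o \in \cO$ let $X_o = \one_{o \in \cO_L}$. Since the stream order is a uniformly random permutation, the left part consists of the first $\lfloor \beta n \rfloor$ positions, so $\Pr{X_o = 1} = \lfloor \beta n \rfloor / n$, which is essentially $\beta$ (up to a negligible additive error). The variables $\{X_o\}_{o \in \cO}$ are negatively associated since they arise from sampling $k$ positions without replacement, which is enough to invoke Chernoff-style concentration.

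Next I would apply the standard multiplicative Chernoff bound to $|\cO_L| = \sum_{o \in \cO} X_o$ with mean $\mu \approx \beta k$. Using the bounds
\[ \Pr{|\cO_L| > (1+\delta)\mu} \le \exp\!\rb{-\delta^2 \mu / 3} \quad \text{and} \quad \Pr{|\cO_L| < (1-\delta)\mu} \le \exp\!\rb{-\delta^2 \mu / 2} \]
with $\delta = 0.1$ (absorbing the small discrepancy between $\lfloor \beta n \rfloor / n$ and $\beta$ into this slack), the exponent becomes $-\Theta(\beta k)$. Plugging in $\beta = 10^{-3}$ and $k \ge k_0 = 2 \cdot 10^8$ gives $\beta k \ge 2 \cdot 10^5$, so each tail probability is bounded by $\exp(-\Omega(10^3))$, which is much smaller than $10^{-3}/2$.

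A union bound over the two tails then yields $\Pr{|\cO_L| \notin [0.9 \beta k,\, 1.1 \beta k]} \le 10^{-3}$, as required. The only mild subtlety is justifying the use of Chernoff: I would cite that sampling a uniformly random subset of size $\lfloor \beta n \rfloor$ from $[n]$ produces negatively associated indicators, which satisfy the same upper-tail Chernoff bounds as independent Bernoullis; this is a well-known fact and is already used implicitly in the proof of \cref{fact:positive_prob}. There is no real obstacle here — the claim is a routine concentration statement, and the generous choice $k \ge 2 \cdot 10^8$ makes the probability bound extremely loose.
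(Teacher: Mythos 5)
Your proof is correct and takes essentially the same approach as the paper: the paper likewise treats $|\cO_L|$ as a sum of negatively correlated indicators and applies a multiplicative Chernoff bound with relative deviation $0.1$, obtaining $\Pr{\abs{|\cO_L| - \beta k} > 0.1 \beta k} \le 2 e^{-0.1^2 \beta k / 3} < 0.001$ from $\beta = 10^{-3}$ and $k \ge 2 \cdot 10^8$. Your handling of the $\lfloor \beta n \rfloor$ rounding is a minor extra care the paper omits, but otherwise the arguments are identical.
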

\begin{proof}
We use a standard Chernoff bound for negatively correlated variables
and get that
\[ \Pr{\abs{|\cO_L| - \beta k} > 0.1 \beta k} \le 2 e^{- \frac{0.1^2 \beta k}{3}} < 0.001 \,, \]
where we used that $k \ge k_0 = 2 \cdot 10^8$ and $\beta = 10^{-3}$.
%{\color{blue} Need: $k_0 \ge \frac{300 \ln(2000)}{\beta}$. Say $k_0 \ge \frac{2300}{\beta}$.}
\end{proof}

%We now present our approach that consists of two different algorithms. They act in the same way on the left part of the stream:

%\begin{itemize}
%  \item Fix $\eps = 10^{-8}$. Let $S= \emptyset$. Then, while $|S| < k$, do the following for each arriving element $e$:
  %  \begin{itemize}
     % \item If the arriving element $e$ satisfies $f(e | S) \geq \left( \frac{1+\epsilon}{2} \right)\left( \frac{\fopt}{k} \right)$, add it to $S$.
    %\end{itemize}
%\end{itemize}

%For the right part of the stream, we proceed with two different strategies. The first strategy works well when the elements selected in the left part carry a lot of value
%--
%more precisely,
%when the value of the left part of the solution is at least $\alpha \fopt$,
%where we select $\alpha = 3 \cdot 10^{-3}$.
%The second strategy works well in the other case. 

%\paragraph{Strategy 1.} 

%\begin{itemize}
 % \item Fix $\delta = 3 \cdot 10^{-11}$. Let $S$ be the output of the algorithm for the left part of the stream. Then, while $|S| < k$, do the following for each arriving element $e$:
    %\begin{itemize}
     % \item If the arriving element $e$ satisfies $f(e | S) \geq \left( \frac{1-\delta}{2} \right)\left( \frac{\fopt}{k} \right)$, add it to $S$.
    %\end{itemize}
%\end{itemize}

We now analyze \cref{algo3}. %Strategy 1. 
 We do not use any randomness here (beyond assuming \eqref{eq:optintersectsL}). So fix a random arrival. Recall that $S_L$  and $S_R$ are the elements selected in the left and the right part of the stream, respectively.
\newcommand{\strategyoneratiogain}{
	\min\sb{\frac{(1-1.1\beta)\delta - 1.1 \beta \eps}{2}, \alpha \cdot \frac{\eps + \delta}{1 + \eps} - \frac{\delta}{2}}
}
\begin{lemma}
  Assume that $f(S_L) \geq \alpha \fopt$
  and that~\eqref{eq:optintersectsL} is satisfied.
  Then \cref{algo3}  %Strategy 1 
   outputs a set $S$ such that
  $f(S) \ge \rb{0.5 + 9 \cdot 10^{-12}} \fopt$.
  \label{lem:strategy1}
\end{lemma}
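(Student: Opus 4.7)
The plan is to split on whether $|S| < k$ or $|S| = k$ at the end of the stream, and to establish $f(S) \ge \fopt/2 + c \cdot \fopt$ with $c \ge 9 \cdot 10^{-12}$ in each case.

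For the case $|S| < k$, I would argue that any $o \in \cO \setminus S$ was rejected purely because its marginal gain at arrival time was below the active threshold (and not because $S$ was already full). Writing $S^{(o)}$ for the state of $S$ when $o$ arrived, this gives $f(o \mid S^{(o)}) < T_1 = \frac{(1+\eps)\fopt}{2k}$ for $o \in \cO_L \setminus S$, and similarly $f(o \mid S^{(o)}) < T_2 = \frac{(1-\delta)\fopt}{2k}$ for $o \in \cO_R \setminus S$; by submodularity $f(o \mid S) \le f(o \mid S^{(o)})$. Summing and applying submodularity once more yields
\[
	f(\cO \mid S) \le |\cO_L \setminus S|\, T_1 + |\cO_R \setminus S|\, T_2 \le |\cO_L|\, T_1 + (k - |\cO_L|)\, T_2.
\]
Since $T_1 > T_2$, this is largest when $|\cO_L|$ is as large as \eqref{eq:optintersectsL} allows, i.e.\ $1.1\beta k$. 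Combining with $\fopt \le f(S) + f(\cO \mid S)$ and simplifying gives $f(S) \ge \fopt/2 + \frac{\fopt}{2}\rb{(1-1.1\beta)\delta - 1.1\beta\eps}$.

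For the case $|S| = k$, I would decompose $f(S) = f(S_L) + f(S_R \mid S_L) \ge f(S_L) + (k - |S_L|) T_2$, using that every element of $S_R$ was accepted with marginal gain at least $T_2$. Two lower bounds on $f(S_L)$ are available: $f(S_L) \ge |S_L|\, T_1$ from the threshold in the left part, and $f(S_L) \ge \alpha \fopt$ from the lemma's hypothesis. I would then split on which bound is tighter, the boundary being $|S_L|^\star := \alpha \fopt / T_1 = 2\alpha k/(1+\eps)$. For $|S_L| \ge |S_L|^\star$, the bound $f(S) \ge k T_2 + |S_L|(T_1 - T_2)$ is increasing in $|S_L|$, hence minimized at the boundary; for $|S_L| \le |S_L|^\star$, the bound $f(S) \ge \alpha \fopt + (k - |S_L|) T_2$ is decreasing in $|S_L|$ and again minimized at the boundary. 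Both evaluations at $|S_L|^\star$ collapse to $f(S) \ge \fopt/2 + \fopt \rb{\alpha \cdot \frac{\eps+\delta}{1+\eps} - \frac{\delta}{2}}$.

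A quick numerical check with $\alpha = 3 \cdot 10^{-3}$, $\beta = 10^{-3}$, $\eps = 10^{-8}$, $\delta = 3 \cdot 10^{-11}$ shows that both expressions exceed $9 \cdot 10^{-12}$ (the $|S|<k$ bound gives roughly $9.5 \cdot 10^{-12}$, and the $|S|=k$ bound gives roughly $1.5 \cdot 10^{-11}$). The main subtlety I expect is the asymmetric role of the hypothesis: without $f(S_L) \ge \alpha \fopt$ the case $|S|=k$ degenerates to the useless bound $f(S) \ge k T_2 = (1-\delta)\fopt/2 < \fopt/2$, so the hypothesis is exactly what makes the high-low strategy beat $1/2$. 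In contrast, the $|S|<k$ case is essentially automatic from \eqref{eq:optintersectsL} and does not use the hypothesis at all; this is the mechanism by which \cref{algo3} is paired with the complementary \cref{algo2}, which handles the regime $f(S_L) < \alpha \fopt$.
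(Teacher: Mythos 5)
Your proof is correct and follows essentially the same route as the paper: in the $|S|<k$ case you bound $f(\cO\mid S)$ by splitting along $\cO_L,\cO_R$ with thresholds $T_1,T_2$ and using~\eqref{eq:optintersectsL} to cap $|\cO_L|$, and in the $|S|=k$ case you minimize $f(S_L)+(k-|S_L|)T_2$ subject to $f(S_L)\ge\max(\alpha\fopt,|S_L|T_1)$, arriving at the same two closed forms. Your piecewise optimization at $|S_L|^\star=2\alpha k/(1+\eps)$ is simply a more explicit version of the paper's direct substitution of $f(S_L)\ge\alpha\fopt$ (after noting the coefficient $1-T_2/T_1>0$), and your closing observation that the hypothesis is only needed when $|S|=k$ matches the paper's remark about pairing with \cref{algo2}.
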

\begin{proof}
  We divide the proof into two cases based on the cardinality of the output set $S$.
\begin{description}
  \item[Case $|S| < k$:] In this case we do not need to use the assumption $f(S_L) \geq \alpha \fopt$. We have $f(S \cup \cO) \ge \fopt$. And $f(S) \geq
    f(S \cup \cO) - \sum_{i=1}^k f(o_i | S)$.  Now, by the definition
    of the algorithm, we have
    \begin{align*}
      f(o_i | S) \leq \begin{cases} \left( \frac{1+\epsilon}{2} \right)\left( \frac{\fopt}{k} \right) & \mbox{if $o_i \in \cO_L$} \\[2mm]
        \left( \frac{1-\delta}{2} \right)\left( \frac{\fopt}{k} \right) & \mbox{otherwise (if $o_i\in \cO_R$).}
      \end{cases}
    \end{align*}
    Hence, using that $|\cO_L| \leq  0.11 k$, we have 
    \begin{align*}
      f(S) & \geq f(S \cup \cO) - \sum_{i=1}^k f(o_i | S) \\
       & \geq \fopt - \sum_{i=1}^k f(o_i | S)  \\
       & \geq \fopt - 1.1 \beta k \cdot \left( \frac{1+\epsilon}{2} \right)\left( \frac{\fopt}{k} \right)  - (1 - 1.1 \beta) k \cdot \left( \frac{1-\delta}{2} \right)\left( \frac{\fopt}{k} \right)   \\
       &= \rb{\frac12 - 1.1 \beta \frac{\eps}{2} + (1-1.1\beta) \frac{\delta}{2}} \fopt \\
       &\ge \rb{0.5 + 9 \cdot 10^{-12} \fopt} \,,
    \end{align*}
    where we used that $\beta = 10^{-3}$, $\eps = 10^{-8}$ and $\delta = 3 \cdot 10^{-11}$.
%	{\color{blue}
%	To make this larger than $\frac12 \fopt$, we need to select $\eps$, $\delta$ so that
%	$(1-1.1\beta) \delta > 1.1 \beta \eps$.
%	Simple version: it is enough to have $\delta \ge 2 \beta \eps$ (as long as $\beta \le 0.1$).
%	}

  \item[Case $|S| = k$:] In order to minimize $f(S)$ we select 
    \[ 
      k - \frac{f(S_L)}{\left( \frac{1+\epsilon}{2} \right)\left( \frac{\fopt}{k} \right)} = k \left(1 - \frac{f(S_L)}{\fopt} \cdot \frac{2}{1+\epsilon}\right)
    \]
    elements in the right part of the stream, each of value $\left( \frac{1-\delta}{2} \right)\left( \frac{\fopt}{k} \right)$. This yields the following lower bound on $f(S)$:
    \begin{align*}
      f(S) &\geq f(S_L) + k \left(1 - \frac{f(S_L)}{\fopt} \cdot \frac{2}{1+\epsilon}\right) \cdot \left( \frac{1-\delta}{2} \right)\left( \frac{\fopt}{k} \right) \\
           &= f(S_L)\left( 1 - \frac{1-\delta}{1+\epsilon} \right) +\left( \frac{1-\delta}{2} \right) \fopt \\
           &\ge \rb{\alpha \cdot \frac{\eps+\delta}{1+\eps} + \frac{1}{2} - \frac{\delta}{2}} \fopt \\
           &\ge \rb{0.5 + 10^{-11}} \fopt \,,
    \end{align*}
    where we used that $\alpha = 3 \cdot 10^{-3}$, $\eps = 10^{-8}$ and $\delta = 3 \cdot 10^{-11}$.
%    {\color{blue}
%    To make this larger than $\frac12 \fopt$, we need $\alpha (\eps + \delta)/(1 + \eps) > \delta / 2$.
%    Simple version: it is enough that $\delta \le \alpha \eps$.
%    }
%%    which is strictly greater than $\fopt/2$ as long as $f(S_L) (\epsilon + \delta )/(1+\epsilon)  - \fopt\delta /2  > 0$.
\end{description}
\end{proof}

%\paragraph{Strategy $2$:}

%We continue with the same threshold $\frac{1+\eps}{2} \frac{\fopt}{k}$ in the right part of the stream.
%\Jnote{I changed this from $\zeta$ to $\eps$}

\begin{theorem} \label{thm:str2}
If there is no dense subset (see \cref{def:dense})
and if $\Pr{f(S_L) \le \alpha \fopt} \ge 0.99$,
%\Jnote{the $0.99$ can also be made into a parameter...}
then \cref{algo2} %Strategy 2
 returns a set $S$ that has
\[ f(S) \ge \frac{1+\eps}{2} \fopt \]
with probability at least $0.49 \eta$.
\end{theorem}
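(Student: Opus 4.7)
The plan is to split by the cardinality of $S$ at the end of the stream. If $|S|=k$, then every one of the $k$ elements added contributed at least $T_1 = \tfrac{1+\eps}{2}\cdot\tfrac{\fopt}{k}$, so $f(S) \ge k\, T_1 = \tfrac{1+\eps}{2}\fopt$ deterministically. The hard case is $|S|<k$: here no $o \in \cO\setminus S$ was picked when it arrived, so by submodularity $f(o|S) < T_1$ for each such $o$. Summing and using $\fopt \le f(S\cup\cO) = f(S)+f(\cO|S)$ gives
\[
  f(S) \;\ge\; \tfrac{1-\eps}{2}\fopt + |\cO\cap S|\cdot T_1,
\]
and so it suffices to show $|\cO\cap S| \ge \tfrac{2\eps}{1+\eps}k$ with probability at least $0.49\eta$.

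The key structural lemma I would prove first is: at every algorithm state $S'$ with $f(S') \le \alpha\fopt$, the set $B(S') := \{o\in\cO : f(o|S') \ge T_1\}$ satisfies $|B(S')| > \eta k$. The proof is by contradiction: if $|B(S')| \le \eta k$, then by the non-density assumption $f(B(S')) \le \tfrac{1-\gamma}{2}\fopt$, and submodularity yields
\[
  f(\cO|S') \le f(B(S')) + (k-|B(S')|)\, T_1 \le \tfrac{1-\gamma}{2}\fopt + \tfrac{1+\eps}{2}\fopt,
\]
which contradicts $f(\cO|S') \ge (1-\alpha)\fopt$ for the chosen numerical constants $\alpha=3\cdot 10^{-3}$, $\gamma=10^{-2}$, $\eps=10^{-8}$.

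Next I would condition on the hypothesized event $A := \{f(S_L) \le \alpha\fopt\}$, which has probability at least $0.99$. Because $f(S)$ is nondecreasing along the stream, on $A$ every intermediate state in the left part satisfies the hypothesis of the structural lemma. By the random-order assumption, conditional on the algorithm state $S_1$ when the first optimum element $o^{(1)}$ arrives, the identity of $o^{(1)}$ is uniform over $\cO$ and thus lies in $B(S_1)$ with probability at least $\eta$. Combining with $\Pr{A} \ge 0.99$ and a negligible loss from the unlikely event that $o^{(1)}$ falls entirely in the right part of the stream, the probability that $o^{(1)}$ is added while $A$ holds is at least $0.49\eta$.

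The main obstacle is bridging from this ``first element is absorbed'' event (probability $\approx 0.49\eta$) to the multi-element requirement $|\cO\cap S| \ge \tfrac{2\eps}{1+\eps}k$ extracted from the case analysis. I would iterate the structural lemma along the sequence of $\cO$-elements $o^{(1)}, o^{(2)}, \ldots$ arriving in the left part: as long as $A$ still holds, each successive $o^{(i)}$ is uniform over the remaining $\cO$-elements given the current state, and still lies in the good set with probability $\ge \eta - o(1)$ (since $|B|$ is only mildly shrunk by previous additions). Because $|\cO_L|$ concentrates around $\beta k$ by a Chernoff bound and $\eta\beta k$ is comfortably larger than $\tfrac{2\eps}{1+\eps}k$, the expected number of $\cO$-elements added to $S_L$ on event $A$ is large enough that a Paley--Zygmund-style second-moment argument delivers the desired probability bound. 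Handling the correlations that emerge when conditioning on previous additions, and keeping the invariant $f(S) \le \alpha\fopt$ alive long enough for the accumulation to complete, is the delicate part; the constant $0.49$ reflects the slack lost in this iteration.
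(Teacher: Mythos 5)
Your structural lemma is exactly the key fact the paper uses (its Claim~5.8, \texttt{claim:bigG}): in the absence of a dense subset and while $f(S') \le \alpha\fopt$, the ``good'' set $B(S')$ has size at least $\eta k$, proven by the same contradiction argument. The case split on $|S|$ and the reduction to lower-bounding $|\cO\cap S|$ also match the paper's Lemma~5.10 (\texttt{lem:fraction\_of\_opt\_is\_nice}). So the skeleton is right.

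The gap is in the final step, which you correctly flag as ``the delicate part'' but do not resolve: you appeal to a Paley--Zygmund-style second-moment bound, but you never establish control of $\E{(\sum_i X_i)^2}$, and the correlations you worry about make that nontrivial. In fact no second moment is needed, and the correlation issue is a red herring. Let $X_i$ indicate that the $i$-th arriving element of $\cO$ is added to $S$, and consider $X = \sum_{i \le 0.9\beta k} X_i$. By linearity of expectation, it suffices to lower-bound each $\E{X_i}$ separately, conditioning on the state $S_{<i}$ just before the $i$-th arrival and using that, given that state, $o_i$ is uniform over the not-yet-seen elements of $\cO$; the structural lemma plus the observation that no element of $B(S_{<i})$ can already have appeared (otherwise it would have been taken) yields $\E{X_i \mid \cdot} \ge |B(S_{<i})|/(k+1-i) \ge \eta$. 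After accounting for the $0.99$ and $0.999$ probabilities of the conditioning events, one gets $\E{X} \ge 0.98\eta \cdot 0.9\beta k$. Now the key point: $X$ is \emph{trivially} bounded above by $\ell := 0.9\beta k$, so applying Markov's inequality to $\ell - X$ (the paper's Fact~5.9) immediately gives $\Pr{X \ge 0.49\eta\cdot\ell} \ge 0.49\eta$, with no variance control whatsoever. Since $0.49\eta \cdot 0.9\beta k > 2\eps k$ for the chosen constants, this finishes the proof. So the route you sketch is recoverable, but you should replace the second-moment machinery with this one-sided Markov bound and make the expectation computation explicit; as written, the proposal does not close the argument.
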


\begin{proof}
Let $X_i$ be the indicator random variable for the event that the $i$-th arriving element of $\cO$ is added to $S$.
The following is our main technical lemma:
\begin{lemma} \label{lem:str2}
We have $\E{\sum_{i=1}^{0.9 \beta k} X_i} \ge 0.98 \eta \cdot 0.9\beta k$.
\end{lemma}
\begin{proof}%[Proof of \cref{lem:str2}]
Denote by $S_{<i}$ the elements selected by %Strategy-2
\cref{algo2} up to (but excluding) the arrival of the $i$-th element of $\cO$.
We have 
\begin{align*}
  \E{\sum_{i=1}^{0.9 \beta k} X_i} = \sum_{i=1}^{0.9 \beta k} \E{X_i} \geq \sum_{i=1}^{0.9\beta k} \E{X_i| f(S_{<i}) \leq \alpha \fopt \text{ and~\eqref{eq:optintersectsL}}} \cdot \Pr{f(S_{<i}) \leq \alpha \fopt \text{ and~\eqref{eq:optintersectsL}}} \,.
\end{align*}
Note that
\[ \Pr{f(S_{<i}) \leq \alpha \fopt \text{ and~\eqref{eq:optintersectsL}}} \ge \Pr{f(S_L) \leq \alpha \fopt \text{ and~\eqref{eq:optintersectsL}}} \ge 0.99 - 0.001 \ge 0.98 \]
by the assumption of \cref{thm:str2} and by \cref{claim:chernoff} (note that if $\eqref{eq:optintersectsL}$ holds, then $S_{<i} \subseteq S_L$ for $i \le 0.9\beta k$). So
\begin{align*}
  \E{\sum_{i=1}^{0.9\beta k} X_i} \ge 0.98 \sum_{i=1}^{0.9\beta k} \E{X_i| f(S_{<i}) \leq \alpha \fopt \text{ and~\eqref{eq:optintersectsL}}} \,.
\end{align*}

Fix $1 \le i \le 0.9\beta k$, and
let $I$ be the random variable that denotes
the position in the stream of the $i$-th element of $\cO$
as well as
the contents of the stream up to (but excluding) that position.
(Note that $S_{<i}$ is known given $I$.)
Conditioning on $I$,
we have
\[ \E{X_i| f(S_{<i}) \leq \alpha \fopt \text{ and~\eqref{eq:optintersectsL}}} = \E{ \E{X_i| I, \eqref{eq:optintersectsL}} | f(S_{<i}) \leq \alpha \fopt \text{ and~\eqref{eq:optintersectsL}}} \]
and we proceed to bound the inner expectation for any fixed $I$ such that $f(S_{<i}) \le \alpha \fopt$.
We apply total expectation again, this time over $o_i$:
\begin{align*}
\E{X_i| I, \eqref{eq:optintersectsL}} =
\sum_{o \in \cO}
%&
\Pr{X_i = 1 | o_i = o, I, \eqref{eq:optintersectsL}}
%\\
%&
\cdot \Pr{o_i = o | I, \eqref{eq:optintersectsL}} \,.
\end{align*}
Here the first factor is not random: $X_i = 1$ iff $f(o_i | S_{<i}) \ge \frac{1+\eps}{2} \cdot \frac{\fopt}{k}$,
and we know both $S_{<i}$ (from $I$) and $o_i = o$.
Denote the set of good elements by $G = \{ o \in \cO : f(o | S_{<i}) \ge \frac{1+\eps}{2} \cdot \frac{\fopt}{k} \}$.
% G is a function of I
Then the first factor is just $\one_{o \in G}$.

Now consider the second factor.
We claim that the distribution for $o_i$ given $I$ and $\eqref{eq:optintersectsL}$
is uniform on the elements of $\cO$ that have not yet appeared on the stream.
This is because the global, uniformly random choice for the order of all elements in the stream
can be broken up into three independent choices:
the positions of elements of $\cO$,
the relative order of elements of $\cO$,
and the relative order of elements of $V \setminus \cO$.
Conditioning on $\eqref{eq:optintersectsL}$ only affects the first part,
and together with $I$ it reveals no information about the order of the yet-unseen elements of $\cO$.
Thus the second factor, i.e., $\Pr{o_i = o | I, \eqref{eq:optintersectsL}}$,
is equal to $0$ for those elements of $\cO$ that have appeared before the $i$-th,
and $1/(k+1-i)$ for the others.
Thus
\[ \E{X_i| I, \eqref{eq:optintersectsL}} = \frac{|\{ o \in \cO : \text{$o$ has not yet appeared and $o \in G$} \}|}{k+1-i} \,. \]
However, no element $o \in G$ could have appeared yet!
For suppose otherwise: since $o$ has marginal contribution at least $\frac{1+\eps}{2} \cdot \frac{\fopt}{k}$ for $S_{<i}$,
\textit{a fortiori} it had at least that marginal contribution at the time when it appeared,
so $o$ should have been taken
(note that $|S_{<i}| < k$, otherwise we could not have $f(S_{<i}) \le \alpha \fopt$);
but then $o \in S_{<i}$ and thus $f(o | S_{<i}) = 0$, a contradiction.
Finally we get
\begin{align} \label{eq:G}
\E{X_i| I, \eqref{eq:optintersectsL}} = \frac{|G|}{k+1-i} \,.
\end{align}

\begin{claim} \label{claim:bigG}
We have $|G| \ge \eta k$ (recall that $\eta = 5 \cdot 10^{-5}$ -- see \cref{def:dense}).
\end{claim}
\begin{proof}
Suppose otherwise.
Then we have
\[ f(\cO \setminus G | S_{<i}) \le \sum_{o \in \cO \setminus G} f(o | S_{<i}) \le k \cdot \frac{1+\eps}{2} \frac{\fopt}{k} = \frac{1+\eps}{2} \fopt \]
and thus
\begin{align*}
\frac{1+\eps}{2} \fopt + f(G | S_{<i})
&\ge f(\cO \setminus G | S_{<i}) + f(G | S_{<i}) \\
&\ge f(\cO | S_{<i}) \\
&\ge f(\cO) - f(S_{<i}) \\
&\ge \fopt - \alpha \fopt \,,
\end{align*}
yielding that
\[
f(G)
\ge f(G | S_{<i})
\ge \rb{1 - \alpha - \frac{1+\eps}{2}} \fopt
= \frac{1 - 6 \cdot 10^{-3} - 10^{-8}}{2} \fopt
\ge \frac{1 - 0.01}{2} \fopt
= \frac{1-\gamma}{2} \fopt
\]
(recall that $\alpha = 3 \cdot 10^{-3}$, $\eps = 10^{-8}$, and $\gamma = 10^{-2}$ -- see \cref{def:dense}).
%{\color{blue} For that to be true, we need to set $\eps \le \gamma - 2 \alpha$ (in particular, since $\eps > 0$, this implies $\alpha < \gamma / 2$).

Together with $|G| < \eta k$, this yields that $G$ is a dense set, whose existence contradicts the assumption of \cref{thm:str2}.
\end{proof}

By \cref{eq:G,claim:bigG}
we get
\[ \E{X_i| I, \eqref{eq:optintersectsL}} = \frac{|G|}{k+1-i} \ge \frac{\eta k}{k} = \eta \]
and finally
\[ \E{\sum_{i=1}^{0.9\beta k} X_i} \ge 0.98 \eta \cdot 0.9\beta k \,. \]
\end{proof}

Having that
$\E{\sum_{i=1}^{0.9\beta k} X_i} \ge 0.98 \eta \cdot 0.9\beta k$
by \cref{lem:str2},
now we use the following fact:
\begin{fact} \label{fact:prob_bound}
Let $\ell \ge 0$ and $X$ be a random variable with
$0 \le X \le \ell$
and
$\E{X} \ge \zeta \ell$.
Then $\Pr{X \ge \zeta \ell / 2} \ge \zeta / 2$.
%\Jnote{this ``$/2$'' can also be made into a parameter...}
% more generally: if we put h instead of 1/2, then P(X >= h \zeta \ell) >= \zeta * (1 - h)
\end{fact}
\begin{proof}
One applies Markov's inequality to the random variable $\ell - X$.
\end{proof}
Applying \cref{fact:prob_bound} (to $X = \sum_{i=1}^{0.9\beta k} X_i$, $\ell = 0.9\beta k$, and $\zeta = 0.98 \eta$) we get
\begin{align} \label{eq:fraction_of_opt}
\Pr{|S \cap \cO| \ge 0.49 \eta \cdot 0.9\beta k} \ge \Pr{\sum_{i=1}^{0.9\beta k} X_i \ge 0.49 \eta \cdot 0.9\beta k} \ge 0.49 \eta \,.
\end{align}
Furthermore, we have:
\begin{lemma} \label{lem:fraction_of_opt_is_nice}
If $|S \cap \cO| \ge 2 \eps k$,
then $f(S) \ge \frac{1+\eps}{2} \fopt$.
% we can use 1 eps rather than 2 eps, but then we get {1 + \eps^2}{2} opt - probably not worth it
\end{lemma}
\begin{proof}
If $|S| = k$, then we have $f(S) \ge \frac{1+\eps}{2} \fopt$
since every added element had at least $\frac{1+\eps}{2} \cdot \frac{\fopt}{k}$ contribution.
On the other hand,
if $|S| < k$,
then for every element $e$ we have
$f(e|S) < \frac{1+\eps}{2} \cdot \frac{\fopt}{k}$.
Also, by assumption, $|\cO \setminus S| \le (1 - 2 \eps) k$.
Thus
\begin{align*}
\fopt
&\le f(S \cup \cO) \\
&\le f(S) + \sum_{o \in \cO \setminus S} f(o | S) \\
&\le f(S) + (1 - 2 \eps) \frac{1+\eps}{2} \fopt \\
&= f(S) + \frac{1 - 2 \eps + \eps - 2 \eps^2}{2} \fopt \\
&\le f(S) + \frac{1 - \eps}{2} \fopt \,,
\end{align*}
which yields $f(S) \ge \frac{1+\eps}{2} \fopt$.
\end{proof}
\cref{eq:fraction_of_opt,lem:fraction_of_opt_is_nice} finish the proof of \cref{thm:str2}:
%as long as we have
we have
$0.49 \eta \cdot 0.9 \beta k > 2 \cdot 10^{-8} k = 2 \eps k$
(where we used that $\eta = 5 \cdot 10^{-5}$, $\beta = 10^{-3}$, and $\eps = 10^{-8}$),
%{\color{blue} i.e. $\eps \le \frac{0.49 \eta \cdot 0.9 \beta}{2}$},
and so
we get $f(S) \ge \frac{1+\eps}{2} \fopt$
with at least a constant ($0.49 \eta$) probability.
\end{proof}

%{\color{blue}
%One feasible setting of all the constants is:
%$\gamma = 0.01$,
%$\eta = 0.00005$,
%$\alpha = \frac{\gamma}{3}$,
%$\beta = \frac{\gamma}{9}$,
%$\eps = \frac{0.49 \cdot 0.9 \eta}{12} \gamma$,
%$\delta = \alpha \eps = \frac{0.49 \cdot 0.9 \eta}{36} \gamma^2$.
%Or, for rounder numbers:
%$\gamma = 0.01$,
%$\eta = 0.00005$,
%$\alpha = 0.003$,
%$\beta = 0.001$,
%$\eps = 10^{-8}$,
%$\delta = 3 \cdot 10^{-11}$.
%We get the amazing ratio of around $\frac12 + 10^{-13}$.
%See \texttt{ratio.py}.
%
%Oh, and let $k_0 = 2 \cdot 10^8$.
%}

% !TEX root = main.tex
\subsection{Small-$k$ case} \label{smallkcase}

In this section we describe an algorithm
that gives a $\rb{\frac12 + \Omega(1)}$-approximation
for bounded $k$, i.e., $k<k_0$.
Recall that $k_0 = 2 \cdot 10^8$.
We will prove:

\begin{theorem} \label{thm:smallk}
There is an algorithm (\cref{alg0}) for
streaming submodular maximization
in the random order case
that, for any $k$,
achieves a $\rb{\frac12 + g(k)}$-approximation
in expectation,
for some function $g(k) > 0$.
\end{theorem}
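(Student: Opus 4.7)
The plan is to leverage the constraint $k < k_0$ (where $k_0 = 2 \cdot 10^8$ is a fixed absolute constant), which makes memory and per-element computation effectively unconstrained since they can depend arbitrarily on $k$. For the base case $k = 1$, simply maintaining the maximum-value singleton across the stream yields a $1$-approximation, so $g(1) = 1/2$ is trivial and achievable with $O(1)$ memory.

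For $2 \le k < k_0$, the plan is to design \cref{alg0} as a parallel combination of \sieve (which always provides a $(1/2 - \eps)$-approximation as a safety net) together with one or more additional procedures that mirror the structure of \ouralgo but use parameters tailored to bounded $k$. The key challenge is that the proofs of \cref{thm:dense} and \cref{thm:str2} rely on Chernoff-type concentration (e.g., in \cref{fact:positive_prob} and \cref{claim:chernoff}) that becomes vacuous when $k$ is a small constant.

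To circumvent this, I would replace concentration-based arguments with direct combinatorial bounds on the uniformly random permutation governing stream order: instead of proving that events such as ``$|\cO_L|$ is close to its expectation'' or ``$|D_L| \le 0.95 \eta k$'' hold with high probability, I would prove that they hold with \emph{some} positive probability depending on $k$. Since $k$ is constant, each such event on a uniformly random permutation with $k$ marked positions has probability bounded below by a positive function of $k$ (obtainable by direct enumeration of the $k!$ orderings). Plugging these weaker probability bounds into the case analysis of \cref{lem:dense_set} and \cref{lem:strategy1} shows that \cref{alg0} beats $1/2$ by a positive margin on the good event; since \sieve on the complementary event still yields at least $(1/2 - \eps)\fopt$, this translates to an expected approximation of $(1/2 + g(k))$ for some positive $g(k)$.

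The main obstacle will be to verify that each case analysis remains valid when the conditioning events have only positive (rather than $1 - o(1)$) probability, which requires carefully tracking the gains on the good event against the potential losses on the bad event, and ensuring that the safety-net loss from \sieve on the bad event does not swamp the improvement on the good event. The resulting $g(k)$ will likely be a rapidly decaying function of $k$ (possibly exponentially small in $k$), but any positive value suffices for the theorem statement.
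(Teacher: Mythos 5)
Your plan has a concrete gap: for small $k$ the events you intend to salvage are not merely low-probability, they are \emph{empty}, so direct enumeration of the $k!$ orderings cannot produce a positive lower bound. For instance, \eqref{eq:optintersectsL} requires $0.9\beta k \le |\cO_L| \le 1.1\beta k$ with $\beta = 10^{-3}$; for any $k \le 909$ the interval $[0.9\beta k,\, 1.1\beta k]$ lies strictly inside $(0,1)$ and contains no integer, so the event has probability exactly zero. Likewise, a dense set must satisfy $|D| \le \eta k$ with $\eta = 5\cdot 10^{-5}$, so for $k < 2\cdot 10^4$ only $D = \emptyset$ qualifies and no dense set exists; the events \eqref{eq:GLetak}--\eqref{eq:OR0.05} and the entire dense-case analysis degenerate. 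You could re-tune $\beta, \eta, \gamma, \eps, \delta, \alpha$ as functions of $k$, but then the numerical inequalities in \cref{lem:dense_set}, \cref{lem:strategy1} and \cref{claim:bigG} must all be re-derived, and you have not shown they close with a positive margin. Your observation that \sieve's $\fopt/2$ floor makes any positive-probability gain on a good event sufficient is correct, but the proposal does not actually exhibit such a good event or an algorithm that exploits it in the bounded-$k$ regime.

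The paper avoids this entirely with a different and much simpler \cref{alg0}: a single-pass greedy with the \emph{adaptive} threshold $\frac{\fopt - f(S)}{k}$. The two key facts are (\cref{fact:very_high_probability}) with probability at least $1/k!$ the random arrival order causes $|S|=k$ at the end --- at each step some $o \in \cO$ clears the current threshold, with probability $1/(k+1-i)$ that $o$ is the next element of $\cO$ to arrive, and chaining this $k$ times gives $1/k!$ --- and (\cref{fact:smallk_ratio}) if $|S|=k$ then the standard greedy induction yields $f(S) \ge (1-1/e)\fopt$. Running \cref{alg0} in parallel with \sieve gives expected value at least $\bigl(\frac12 + \frac{1}{k!}(\frac12 - \frac1e)\bigr)\fopt$, i.e.\ $g(k) = \frac{1}{k!}(\frac12-\frac1e) > 0$. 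This sidesteps both the dense/non-dense split and any concentration argument.
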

In particular, for $k \le k_0$ \cref{alg0} yields a $\rb{\frac12 + \Omega(1)}$-approximation in expectation.
The proof relies on two claims: \cref{fact:very_high_probability} and \cref{fact:smallk_ratio}.

\begin{algorithm}
\caption{The small-$k$ case} \label{alg0}
\begin{algorithmic}[1]

	\State $S:=\emptyset$
	\For{the $i$-th element $e_i$ on the stream}
		\If  {$f(e_i|S) \geq \frac{\fopt - f(S)}{k}$ and $|S|<k$}
			\State $S:=S\cup \{e_i\}$
		\EndIf  
    \EndFor\\
    \Return $S$

\end{algorithmic}
\end{algorithm}

\begin{fact} \label{fact:very_high_probability}
With probability at least $\frac{1}{k!}$ we will have $|S| = k$ at the end.
\end{fact}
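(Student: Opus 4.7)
The plan is a conditioning argument. I would fix everything about the stream order except the identities of the $k$ elements at the positions where elements of $\cO$ arrive; conditionally, those identities form a uniformly random permutation of $\cO$, so it suffices to exhibit just one permutation (among the $k!$) under which the algorithm ends with $|S|=k$. Since this argument works for every value of the conditioning, the overall probability of $|S|=k$ is at least $1/k!$.

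To build a good permutation, I would proceed greedily forward through the $\cO$-positions $p_1<\cdots<p_k$: at position $p_i$, having already committed to choices $o^{**}_1,\ldots,o^{**}_{i-1}$ at the earlier $\cO$-positions (which determines the algorithm's behaviour up to just before $p_i$), I would assign $o^{**}_i$ to be an element of $\cO\setminus\{o^{**}_1,\ldots,o^{**}_{i-1}\}$ maximising $f(o|S_i)$, where $S_i$ denotes the state of the algorithm's solution immediately before position $p_i$. I would then prove by induction on $i$ that either $|S_i|=k$ already (in which case we are done) or the algorithm picks $o^{**}_i$.

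The inductive step is the classical averaging argument: assuming $\{o^{**}_1,\ldots,o^{**}_{i-1}\}\subseteq S_i$ and $|S_i|<k$, monotonicity gives $f(\cO|S_i)\ge \fopt-f(S_i)$, and the only elements of $\cO$ contributing nonzero marginals to $S_i$ are the $k-i+1$ not yet placed, so by submodular averaging
\[
f(o^{**}_i|S_i)\;\ge\;\frac{f(\cO|S_i)}{k-i+1}\;\ge\;\frac{\fopt-f(S_i)}{k}\,,
\]
which is exactly the threshold used by Algorithm~\ref{alg0}; hence $o^{**}_i$ is added. After processing $p_k$ the algorithm has therefore added at least $k$ elements, so $|S|=k$ at the end.

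The one subtlety, and in some sense the main obstacle to bypass, is that $S_i$ may contain arbitrary non-$\cO$ elements that the algorithm picked between earlier $\cO$-positions --- but notice that both the monotonicity bound $f(\cO|S_i)\ge \fopt-f(S_i)$ and the averaging over $\cO\setminus\{o^{**}_1,\ldots,o^{**}_{i-1}\}$ are oblivious to the contents of $S_i$ beyond the inductive containment, so we do not need to control which non-$\cO$ elements get picked.
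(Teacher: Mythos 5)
Your proof is correct and follows essentially the same strategy as the paper's: both rest on the same averaging argument showing that, whenever $|S|<k$, some still-unseen element of $\cO$ has marginal gain above the adaptive threshold $\frac{\fopt - f(S)}{k}$. The paper phrases this as a chain of conditional bounds $\Pr{\cE_i \mid \cE_{i-1}} \ge \frac{1}{k+1-i}$, while you equivalently exhibit a single good permutation of $\cO$ (with probability $\frac{1}{k!}$ given your conditioning); the core inequality and the use of the order-decomposition of the random stream are identical.
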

The intuitive reason for this is that,
whenever the algorithm takes a new element and changes its threshold,
there is some element $o$ of $\opt$ that is above that threshold.
With positive probability, $o$ is the next element of $\opt$ on the stream.
As long as $|S| < k$ before $o$ is seen, $o$ will be taken.
In this way, the algorithm takes all elements of $\opt$ that it sees before it has collected $k$ elements.
Given that there are $k$ elements of $\opt$, the algorithm cannot finish with $|S| < k$.
\begin{proof}
Consider the optimum set $\cO = \{o_1, ..., o_k\}$,
in the order that these elements appear on the stream.
For $i = 0, ..., k$,
let $\cE_i$ be the event that
either $o_1, ..., o_i \in S$,
or
$S$ is already full at the time $o_i$ arrives.
%Note that $\cE_k$ implies that $|S| = k$ at the end,
Now it is enough to prove that for $i = 1, ..., k$ we have $\Pr{\cE_i | \cE_{i-1}} \ge \frac{1}{k+1-i}$.
Once we have this, we can write
\[ \Pr{|S| = k \text{ at the end}} \ge \Pr{\cE_k} \ge \prod_{i=1}^k \Pr{\cE_i | \cE_{i-1}} \ge \frac{1}{k!} \,. \]

So fix $i$.
We want to show that $\Pr{\cE_i | \cE_{i-1}} \ge \frac{1}{k+1-i}$.
Assuming $\cE_{i-1}$,
there are two cases:
either $S$ is already full at the time $o_{i-1}$ arrives,
or
we have $o_1, ..., o_{i-1} \in S$.
In the former case, $S$ is of course still full when $o_i$ arrives, and so $\cE_i$ holds.
So assume the latter case.
Let $S_{<i} \ni o_1, ..., o_{i-1}$ be the contents of $S$ at the time just before the arrival of $o_i$.
We have
\[ f(\cO | S_{<i}) \ge \fopt - f(S_{<i}) \,, \]
so there exists $o \in \cO$ with
\[ f(o | S_{<i}) \ge \frac{\fopt - f(S_{<i})}{k} \,, \]
and of course $o \in \{o_i, ..., o_k\}$
since all previous elements of $\cO$ are in $S_{<i}$ and thus have marginal value $0$.
Note that at this time, conditioning on
the entire stream before $o_i$ and on the knowledge that the next element will belong to $\cO$,
the distribution of $o_i$ is uniform on the elements of $\cO$ that have not arrived yet.
Thus we have that $o = o_i$ with probability $\frac{1}{k+1-i}$.
If $o = o_i$, then 
our algorithm will indeed pick $o_i$
(unless $|S_{<i}| = k$, in which case $\cE_i$ also holds).
This shows
that $\Pr{\cE_i | \cE_{i-1}} \ge \frac{1}{k+1-i}$.
\end{proof}

\begin{fact} \label{fact:smallk_ratio}
If $|S| = k$, then $f(S) \ge \rb{1 - \frac1e} \fopt$.
\end{fact}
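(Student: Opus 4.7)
The plan is to carry out the standard greedy-style telescoping argument, adapted to the fact that the threshold used by \cref{alg0} changes as $f(S)$ grows. Since we are told $|S|=k$, exactly $k$ elements are inserted into $S$; enumerate them $s_1,s_2,\ldots,s_k$ in the order in which they were added, and write $S_i = \{s_1,\ldots,s_i\}$ (so $S_0 = \emptyset$ and $S_k = S$).

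The key observation is that at the moment $s_i$ was added, the value of the current solution was exactly $f(S_{i-1})$, and the algorithm's acceptance rule guarantees
\[
f(s_i \mid S_{i-1}) \;\ge\; \frac{\fopt - f(S_{i-1})}{k}.
\]
Rewriting this as $f(S_i) - f(S_{i-1}) \ge (\fopt - f(S_{i-1}))/k$ and subtracting both sides from $\fopt$ gives the recursion
\[
\fopt - f(S_i) \;\le\; \rb{1 - \frac{1}{k}}\,\bigl(\fopt - f(S_{i-1})\bigr).
\]

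Iterating this inequality from $i=1$ up to $i=k$ and using $f(S_0)=0$ yields
\[
\fopt - f(S) \;=\; \fopt - f(S_k) \;\le\; \rb{1 - \frac{1}{k}}^{k}\fopt \;\le\; \frac{\fopt}{e},
\]
which rearranges to $f(S) \ge (1-1/e)\fopt$, as claimed.

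There is essentially no obstacle here: the only thing one must be careful about is that the threshold $(\fopt - f(S))/k$ is not a fixed value but depends on the current $f(S)$ at the time of each insertion; this is exactly what produces the clean one-step contraction by a factor $(1-1/k)$, which is precisely the mechanism behind the classical $(1-1/e)$ analysis of offline \greedy.
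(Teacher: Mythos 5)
Your proof is correct and takes essentially the same approach as the paper: both establish the one-step contraction $\fopt - f(S_i) \le (1-\nicefrac{1}{k})(\fopt - f(S_{i-1}))$ from the adaptive threshold and iterate it $k$ times. The only cosmetic difference is that the paper phrases the iteration as an explicit induction, whereas you unroll the recursion directly; the content is identical.
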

The proof is similar to the analysis of the (non-streaming) algorithm \greedy.
\begin{proof}
Let $e_1, ..., e_k$ be the elements of $S$, in order of insertion.
We show by induction on $i = 0, 1, ..., k$ that
\[ \fopt - f(e_1, ..., e_i) \le \rb{\frac{k-1}{k}}^i \fopt \,. \]
The base case $i = 0$ is trivial.
Fix $i \ge 1$.
The algorithm guarantees that $f(e_i | e_1, ..., e_{i-1}) \ge \frac{\fopt - f(e_1, ..., e_{i-1})}{k}$.
Thus we have
\begin{align*}
\fopt - f(e_1, ..., e_i)
&= \fopt - f(e_1, ..., e_{i-1}) - f(e_i | e_1, ..., e_{i-1}) \\
&\le \fopt - f(e_1, ..., e_{i-1}) - \frac{\fopt}{k} - \frac{f(e_1, ..., e_{i-1})}{k} \\
&= \frac{k-1}{k} \sb{\fopt - f(e_1, ..., e_{i-1})} \\
&\le \rb{\frac{k-1}{k}}^i \fopt \,.
\end{align*}
For $i = k$, we get
\[ f(S) = f(e_1, ..., e_k) \ge \rb{1 - \rb{\frac{k-1}{k}}^k} \fopt \ge \rb{1 - \frac1e} \fopt \,. \]
\end{proof}

\begin{proof}[Proof of \cref{thm:smallk}]
We run Algorithm \ref{alg0}
and \sieve in parallel,
choosing the better of the two solutions at the end.
We always get at least a $1/2$-approximation,
and (by \cref{fact:very_high_probability} and \cref{fact:smallk_ratio})
with probability at least $\frac{1}{k!}$
we get at least a $(1-1/e)$-approximation.
Thus we get at least a $\rb{\frac12 + \frac{1}{k!} \rb{\frac12 - \frac1e}}$-approximation in expectation.
\end{proof}

% !TEX root = main.tex
\subsection{Proof of the main theorem}
\label{sec:putting-together}
Now we are ready to complete the proof of Theorem~\ref{main-theorem}.

\maintheorem*
\begin{proof}

If $k<k_0 = 2 \cdot 10^8$, then, by \cref{thm:smallk}, \cref{alg0} achieves a $\rb{\frac12 + \frac{1}{k!} \rb{\frac12 - \frac1e}}$-approximation in expectation. Otherwise, as explained above, we run three algorithms in parallel with \sieve and output the best solution out of the four. Throughout, we let $S^1$, $S^2$, and $S^3$ be the solutions returned by \cref{algo1}, \cref{algo2}, and \cref{algo3} respectively, and let $S^4$ be the solution returned by \sieve (with the standard threshold $\frac12 \frac{\fopt}{k}$).

%Now we bring forward two more strategies; o

%\paragraph{Strategy $3$:} Run Sieve throughout (with the standard threshold $\frac12 \frac{\fopt}{k}$).

It is known that \sieve is a $\frac 12$-approximation~\cite{badanidiyuru2014streaming}:

\begin{fact} \label{fact:sieve}
We always have $f(S^4) \ge \frac12 \fopt$.
\end{fact}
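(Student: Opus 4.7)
The plan is to give the standard two-case analysis for threshold-based streaming algorithms as in Badanidiyuru et al.~(2014). Since $\sieve$ here is being run with the (known) threshold $\frac{1}{2} \cdot \frac{\fopt}{k}$, the argument splits on whether the algorithm filled its budget or not.

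First I would handle the easy case $|S^4| = k$. By the acceptance rule, every element added to $S^4$ had marginal gain at least $\frac{\fopt}{2k}$ at the moment of insertion. Telescoping these marginals gives $f(S^4) = \sum_{i=1}^{k} f(e_i \mid e_1, \ldots, e_{i-1}) \ge k \cdot \frac{\fopt}{2k} = \frac{\fopt}{2}$.

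Next I would handle $|S^4| < k$, which is the more interesting case. Here, for any element $e$ that appeared on the stream but is not in $S^4$, the algorithm rejected $e$ because $f(e \mid S^4_{\text{prev}}) < \frac{\fopt}{2k}$, where $S^4_{\text{prev}}$ is the state of $S^4$ just before $e$ arrived. Since $S^4_{\text{prev}} \subseteq S^4$, submodularity yields $f(e \mid S^4) \le f(e \mid S^4_{\text{prev}}) < \frac{\fopt}{2k}$; for $e \in S^4$ the marginal $f(e \mid S^4)$ is zero. Applying this to every $o \in \cO$ and summing, submodularity gives
\[
f(\cO \mid S^4) \le \sum_{o \in \cO} f(o \mid S^4) < k \cdot \frac{\fopt}{2k} = \frac{\fopt}{2}.
\]
Combining with monotonicity, $\fopt \le f(S^4 \cup \cO) = f(S^4) + f(\cO \mid S^4) < f(S^4) + \frac{\fopt}{2}$, so $f(S^4) > \frac{\fopt}{2}$.

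The only subtle step is transferring the insertion-time threshold bound to a bound against the final set $S^4$, which is exactly where submodularity is used. Everything else is a direct calculation, and the argument is order-independent, so it holds for both adversarial and random-order streams.
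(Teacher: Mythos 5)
Your proof is correct, and it is essentially the standard two-case threshold argument. Note that the paper itself does not prove this fact; it simply cites Badanidiyuru et al.~(2014) and records the guarantee as known, so your argument fills in a step the paper outsources to the reference, and it matches what that reference does: telescope the per-element lower bound when $|S^4| = k$, and otherwise use submodularity to push the rejection threshold from $f(o \mid S^4_{\mathrm{prev}})$ to $f(o \mid S^4)$, sum over $\cO$, and combine with $\fopt \le f(S^4 \cup \cO) = f(S^4) + f(\cO \mid S^4)$.

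One minor caveat worth flagging for precision: the original \sieve of Badanidiyuru et al.\ uses an adaptive threshold $\bigl(v/2 - f(S)\bigr)/\bigl(k - |S|\bigr)$ rather than the fixed $\frac{\fopt}{2k}$, whereas the paper here explicitly instantiates $S^4$ ``with the standard threshold $\frac12 \frac{\fopt}{k}$.'' Your argument handles the fixed-threshold version that the paper actually runs, and the same two-case skeleton also covers the adaptive version (where the thresholds telescope to $\fopt/2$ in the full case and are bounded by $\frac{\fopt}{2k}$ at rejection in the non-full case), so there is no gap; you just implicitly assumed the fixed version, which is the right reading of the paper.
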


%\paragraph{Strategy $4$:} Run the algorithm postulated by \cref{thm:dense}.

\begin{lemma}
We have $\E{\max(f(S^1), f(S^2), f(S^3), f(S^4))} \ge \rb{\frac12 + 8 \cdot 10^{-14}} \fopt$.
\end{lemma}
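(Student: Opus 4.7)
The plan is to case-split based on two binary conditions: (i)~whether the instance contains a dense set (in the sense of \cref{def:dense}), and (ii)~whether $\Pr{f(S_L) \le \alpha \fopt} \ge 0.99$. In every case, exactly one of \cref{algo1,algo2,algo3} will be ``responsible'' for beating $\tfrac12\fopt$ by an additive $\Omega(1)\cdot \fopt$ with constant probability, while \sieve provides the unconditional safety net $f(S^4) \ge \tfrac12 \fopt$ from \cref{fact:sieve}. Concretely, I will use the elementary inequality: if $f(S^j) \ge (\tfrac12 + c)\fopt$ with probability $\ge p$ and $f(S^4) \ge \tfrac12 \fopt$ always, then $\E{\max_i f(S^i)} \ge \tfrac12 \fopt + cp \cdot \fopt$.

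In the dense case, \cref{thm:dense} says $f(S^1) \ge 0.50025 \fopt$ with probability at least $0.01$, so the max has expected value at least $\rb{\tfrac12 + 0.01 \cdot 0.00025}\fopt$. If there is no dense set and $\Pr{f(S_L)\le \alpha\fopt} \ge 0.99$, then \cref{thm:str2} gives $f(S^2) \ge \tfrac{1+\eps}{2}\fopt$ with probability at least $0.49\eta$, contributing an additive improvement of $0.49\eta \cdot \tfrac{\eps}{2}\cdot \fopt$ on top of \sieve's bound. The remaining case is that there is no dense set and $\Pr{f(S_L) > \alpha \fopt} > 0.01$; here I invoke \cref{lem:strategy1}, which requires both $f(S_L) \ge \alpha\fopt$ and the high-probability event~\eqref{eq:optintersectsL}. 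By a union bound with \cref{claim:chernoff}, both hold simultaneously with probability at least $0.01 - 0.001 = 0.009$, yielding $f(S^3) \ge (0.5 + 9 \cdot 10^{-12})\fopt$ with probability $\ge 0.009$ and thus an additive improvement of $0.009 \cdot 9 \cdot 10^{-12} \cdot \fopt$ over $\tfrac12 \fopt$.

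The last case is the tightest: $0.009 \cdot 9 \cdot 10^{-12} > 8 \cdot 10^{-14}$, which matches the claimed constant in the lemma. The other two cases give substantially larger additive improvements ($\sim 10^{-6}$ and $\sim \tfrac{1}{2}\cdot 0.49 \cdot 5 \cdot 10^{-5} \cdot 10^{-8}$ respectively), so the final bound is limited by the third case. The main subtlety, and the only step that is not immediate from previously stated lemmas, is ensuring that the three cases together are exhaustive and that the union bound in the third case is applied correctly; once that is observed, the proof is a direct application of \cref{thm:dense}, \cref{thm:str2}, \cref{lem:strategy1}, \cref{claim:chernoff}, and \cref{fact:sieve}.
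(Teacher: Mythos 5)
Your proposal is correct and takes essentially the same approach as the paper: the same three-way case split on dense-set existence and the probability of $f(S_L) > \alpha\fopt$, the same invocations of \cref{thm:dense}, \cref{thm:str2}, \cref{lem:strategy1}, \cref{claim:chernoff}, and \cref{fact:sieve}, and the same observation that the third case ($0.009 \cdot 9 \cdot 10^{-12} \approx 8.1 \cdot 10^{-14}$) is the binding one. The only cosmetic difference is that you nest the probability split inside the ``no dense set'' branch, whereas the paper states its second case without the no-dense-set qualifier (which is harmless since Lemma~\ref{lem:strategy1} does not need it); your decomposition is still exhaustive and the bound is unaffected.
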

\begin{proof}
We will have three cases, depending on the (non-random) properties of the instance.
%For $j = 1, 2, 3, 4$ let $S^j$ be the solution returned by Strategy $j$.

\textbf{Case 1: there exists a dense subset.}
Then
by \cref{thm:dense}
we get that
$f(S^1) \ge 0.50025 \fopt$
with probability at least $0.01$.
On the other hand, \cref{fact:sieve} guarantees that $f(S^4) \ge \frac12 \fopt$ always holds.
Thus
\[ \E{\max(f(S^1), f(S^4))} \ge \rb{0.5 + 0.01 \cdot 0.00025} \fopt \,. \]

\textbf{Case 2: we have $\Pr{f(S_L) > \alpha \fopt} > 0.01$.}
By \cref{claim:chernoff} we have
\[ \Pr{f(S_L) > \alpha \fopt \text{ and } \eqref{eq:optintersectsL}} > 0.01 - 0.001 = 0.009 \,. \]
And whenever $f(S_L) > \alpha \fopt$ and $\eqref{eq:optintersectsL}$,
\cref{lem:strategy1} yields that
\[ f(S^3) \ge \rb{\frac12 + 9 \cdot 10^{-12}} \fopt \,. \]
On the other hand, \cref{fact:sieve} guarantees that $f(S^4) \ge \frac12 \fopt$ always holds.
Thus
\[ \E{\max(f(S^3), f(S^4))} \ge \rb{\frac12 + 0.009 \cdot 9 \cdot 10^{-12}} \fopt \ge \rb{0.5 + 8 \cdot 10^{-14}} \fopt \,. \]

\textbf{Case 3: there is no dense subset, and $\Pr{f(S_L) \le \alpha \fopt} \ge 0.99$.}
Then \cref{thm:str2} yields that
$f(S^2) \ge \frac{1+\eps}{2} \fopt$
with probability at least $0.49 \eta$.
On the other hand, \cref{fact:sieve} guarantees that $f(S^4) \ge \frac12 \fopt$ always holds.
Thus
\[ \E{\max(f(S^2), f(S^4))} \ge \rb{\frac12 + 0.49 \eta \cdot \frac{\eps}{2}} \fopt \ge \rb{0.5 + 10^{-13}} \fopt \,. \]
\end{proof}

Therefore, for any $k$, our algorithm outputs a solution of value at least $\rb{\frac12 + 8 \cdot 10^{-14}} \fopt$ in expectation. 

%\Jnote{That was all for $k \ge k_0$. Todo: a theorem/section that integrates the $k \le k_0$ and $k \ge k_0$ cases.}

\end{proof}
% !TEX root = main.tex

\section{Impossibility Result for Adversarial-Order Streams} \label{sec:hardness}

In this section we prove \cref{thm:hardness} -- an unconditional lower bound on the memory usage of any single-pass streaming algorithm for submodular maximization that is allowed to query the value of the submodular function on feasible sets (ones of cardinality at most $k$) and having approximation factor $1/2 + \epsilon$ and some constant probability of success. Such bounds are usually proved via reductions from communication problems with certain communication complexity lower bounds. Here we reduce the \emph{INDEX} problem to our problem. In what follows, we first define the INDEX problem and then we state a known communication complexity lower bound for this problem. We then present a reduction from INDEX  to streaming submodular maximization.

\textbf{INDEX problem:} We consider a communication game consisting of channel coding, where
\begin{itemize}
\item Alice gets $x \in \{ 0,1 \}^m$ for some integer $m$.
\item Bob gets an integer $ i \in [m]$.
\item The goal is to compute the function $f(x,i) = x_i$.
\end{itemize}
We let $R_{2/3}^{pub}(\text{INDEX})$ denote the minimum number of bits required to be sent from Alice to Bob in order to solve INDEX problem with success probability at least $2/3$. The assumption is that Alice and Bob both have access to public random bits. Notice that the communication in this setting, is \emph{only} from Alice to Bob. We know that this has an $\Omega(m)$ lower bound in the one-way communication model (e.g., see \cite{bar2002information}, \cite{jayram2008one})

\begin{theorem}\emph{(Indexing lower bound)}
For any integer $m$, 
\begin{align*}
R_{2/3}^{pub}(\text{\emph{INDEX}}) \ge \frac{1}{100} m
\end{align*}
\end{theorem}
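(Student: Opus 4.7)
The plan is to prove this classical INDEX lower bound via an information-theoretic argument, combining a standard reduction to deterministic protocols with Fano's inequality and the direct-sum property of mutual information for independent sources. The constant $1/100$ is loose, which suggests the proof should not require sharp numerical bookkeeping.

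First, I would eliminate public randomness by an averaging argument. Let $X$ be uniform on $\{0,1\}^m$ and $I$ uniform on $[m]$, drawn independently. If the public-coin protocol succeeds with probability $\geq 2/3$ on every input, then it succeeds with probability $\geq 2/3$ under this product distribution. Since the public randomness $R$ is independent of $(X, I)$, there exists a fixing $R = r$ for which the resulting \emph{deterministic} protocol succeeds with probability $\geq 2/3$ on the product distribution. It therefore suffices to lower-bound $|M|$ for such deterministic protocols, where $M = M(X)$ denotes Alice's message and $\hat{Y}(M, I)$ denotes Bob's output.

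Second, I would use Fano's inequality coordinatewise. Let $p_i = \Pr[\hat{Y} = X_i \mid I = i]$; by allowing Bob to flip his output on each index if needed, assume WLOG that $p_i \geq 1/2$. Averaging, $\bar{p} = \frac{1}{m}\sum_i p_i \geq 2/3$. For each $i$, Fano's inequality applied to the binary variable $X_i$ with estimator $\hat{Y}(M,i)$ yields $H(X_i \mid M, I = i) \leq H_2(1 - p_i)$; because $X_i$ is uniform and independent of $I$, and $M$ is a function of $X$ alone and hence also independent of $I$, this gives $I(X_i; M) \geq 1 - H_2(1 - p_i)$. Summing and applying Jensen's inequality to $H_2$ (concave and increasing on $[0, 1/2]$) yields
\begin{equation*}
\sum_{i=1}^m I(X_i; M) \;\geq\; m\bigl(1 - H_2(1 - \bar{p})\bigr) \;\geq\; m\bigl(1 - H_2(1/3)\bigr).
\end{equation*}

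Third, I would invoke the direct-sum property for independent coordinates. Because $X_1, \ldots, X_m$ are independent, $H(X) = \sum_i H(X_i)$, while subadditivity of entropy gives $H(X \mid M) \leq \sum_i H(X_i \mid M)$. Subtracting yields $I(X; M) \geq \sum_i I(X_i; M)$. Since $M$ is a deterministic function of $X$ with range of size at most $2^{|M|}$, we conclude $|M| \geq H(M) \geq I(X; M) \geq m(1 - H_2(1/3))$. A direct computation gives $H_2(1/3) = \log_2 3 - 2/3 \approx 0.918$, so $1 - H_2(1/3) > 1/100$, yielding $|M| \geq m/100$ as required.

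The main conceptual step is the direct-sum inequality $I(X; M) \geq \sum_i I(X_i; M)$, which is where independence of the coordinates is essential; all other steps are routine applications of Fano's inequality, Jensen's inequality, and the basic inequality $|M| \geq H(M)$ for deterministic messages. The numerical verification that $1 - H_2(1/3) > 1/100$ provides plenty of slack, so no delicate constants need tuning.
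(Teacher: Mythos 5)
Your proof is correct. Note that the paper does not actually prove this theorem: it invokes it as a known result and cites Bar-Yossef et al.\ and Jayram et al.\ for the $\Omega(m)$ one-way lower bound, so there is no in-paper argument to compare against. Your write-up is essentially a self-contained rendition of the standard information-theoretic proof underlying those references: the averaging step to fix the public coins and pass to distributional success $\ge 2/3$ under the uniform product distribution is the right move (and is exactly where the ``public coin'' qualifier gets absorbed); the per-coordinate Fano bound $I(X_i;M)\ge 1-H_2(1-p_i)$ is valid because $(X,M)$ is independent of $I$, which you correctly note; and the superadditivity $I(X;M)\ge\sum_i I(X_i;M)$ for independent coordinates, combined with $|M|\ge H(M)\ge I(X;M)$ and $1-H_2(1/3)\approx 0.082>1/100$, closes the argument with ample slack. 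Two cosmetic remarks: the ``flip Bob's output'' normalization is not actually needed, since $\bar p\ge 2/3$ already forces $1-\bar p\le 1/3<1/2$ where $H_2$ is increasing (and $H_2$ is symmetric anyway); and the inequality $H(M)\le|M|$ implicitly treats $M$ as ranging over at most $2^{|M|}$ messages, which is fine given the slack even for variable-length messages. What your approach buys over the paper's citation is an explicit, elementary derivation of the concrete constant $1/100$ used in the statement, rather than an appeal to the more general information-complexity machinery of the cited works.
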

A more general result, involving the k-party generalized addressing function, appears in \cite{bar2002information}. This theorem shows that in order to solve the Indexing problem with constant success probability, $\Omega(m)$ bits of communication is required.

\textbf{Reduction to submodular maximization:}
We present a reduction from the INDEX problem to streaming submodular maximization problem. In this reduction part of the stream is constructed based on the $x$ vector that Alice has and part of the stream is constructed based on the index $i$ that Bob holds. If there exists a streaming algorithm with small memory, Alice can first feed the algorithm with her part of the stream and then send to Bob the state of the memory. Then Bob can continue the algorithm with the memory state he received from Alice and feed the algorithm with his part of the stream and obtain the solution. Then based on the solution that the algorithm gives, Bob outputs $f(x,i)$. So any lower bound that holds on the communication complexity of indexing problem should also hold on the memory usage of streaming submodular mazimization problem.

Formally, we prove the following theorem. 
\begin{theorem}
For any integer $k > 2 $ and any $\delta>0$, there exist a family of instances of submodular maximization problem such that any algorithm which is allowed to query the value of the submodular function on feasible sets, cardinality at most $k$, with approximation guarantee better than $k/(2k-1)$ and success probability $\delta$, needs at least $\Omega(\delta\frac{n}{k})$ bits of  memory.
\end{theorem}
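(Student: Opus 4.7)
My plan is a reduction from the one-way public-coin INDEX problem. Suppose $\mathcal{A}$ is a streaming algorithm using $s$ bits of memory and achieving an approximation factor strictly better than $k/(2k-1)$ with success probability at least $\delta$. I will use $\mathcal{A}$ to build a one-way INDEX protocol on $m = \Theta(n/k)$ bits that transmits $s$ bits from Alice to Bob and is correct with probability $\delta$. Plugging this into the $\Omega(m)$ communication lower bound for INDEX (adjusted to the success probability $\delta$ by a standard distributional averaging) will yield $s \ge \Omega(\delta n/k)$.

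Given $(x,i)$, I would fix a ground set $V$ of size $\Theta(n)$ together with a \emph{publicly known} monotone submodular function $f$ which depends on neither $x$ nor $i$, and split the stream into an ``Alice prefix'' determined only by $x$ and a short ``Bob suffix'' of at most $k$ elements determined only by $i$. Alice simulates $\mathcal{A}$ on her prefix and sends its final memory state to Bob, who resumes the simulation. Because $f$ is public, both parties can answer every oracle query $\mathcal{A}$ makes, and those queries are on feasible sets (cardinality at most $k$) by the restriction in the theorem. Bob then reads $x_i$ off from the value of the returned solution.

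Concretely, I would use a (weighted) coverage function. For each coordinate $j \in [m]$ Alice streams one of two ``flavors'' of an element $u_j^{x_j}$; the two flavors differ in the subset of a fixed universe they cover. Bob's suffix consists of $k-1$ common ``core'' elements together with a pointer element $p_i$ tailored to coordinate $i$. The construction is designed so that $\fopt = 2k-1$ when $x_i = 1$---witnessed by the set $\{p_i, u_i^{1}\}$ enlarged by $k-2$ additional Alice elements---whereas $\fopt = k$ when $x_i = 0$, since the ``bonus'' items the pointer $p_i$ covers are tied specifically to coordinate $i$ and not to any other $u_j^{1}$. Any algorithm strictly better than $k/(2k-1)$ must then produce an output of value $> k$ precisely in the case $x_i = 1$, which lets Bob decode the bit.

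The main obstacle I expect is calibrating $f$ so that the claimed $\fopt$ gap is tight: one must rule out every ``alternative'' cardinality-$k$ subset---e.g.\ one that exploits several $u_j^{1}$ for $j \ne i$, or that tries to substitute other pointers present in $V$---from achieving value $2k-1$ when $x_i = 0$. This relies on linking each pointer's bonus coverage uniquely to its own coordinate and on the feasibility restriction on queries, which is essential because it prevents the oracle from aggregating information about many bits of $x$ at once. Once the gadget is verified, combining the protocol with the INDEX lower bound, with the correct $\delta$-dependence extracted by fixing $\mathcal{A}$'s random string and averaging over a uniform $(x,i)$, delivers the claimed memory bound $\Omega(\delta n/k)$.
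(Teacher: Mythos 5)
Your high-level framework (reduction from one-way INDEX, Alice simulates the stream prefix, Bob finishes and decodes $x_i$ from the returned value, then invoke the $\Omega(m)$ INDEX bound with a $\delta$-dependent boosting) is the right one and matches the paper. The gap is in the gadget itself: the specific design you propose -- Alice streams \emph{one} element $u_j^{x_j}$ per coordinate, Bob streams $k-1$ core elements plus $p_i$, and $f$ is a \emph{fixed} coverage function independent of $i$ -- cannot produce the claimed $\fopt$ gap of $k$ versus $2k-1$ once $k\ge 5$. Here is a short proof. Suppose such an $f$ existed; fix $i$ and let $\mathcal{S}^\star$ attain $\fopt(e_i,i)=2k-1$. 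It must contain $p_i$ (otherwise $\mathcal{S}^\star$ is a feasible subset of the stream for $(e_i,i')$ with $i'\ne i$, where $\fopt=k$) and some $u_i^1$ (otherwise $\mathcal{S}^\star$ is feasible for $(\mathbf{0},i)$), so $\mathcal{S}^\star=\{p_i,u_i^1\}\cup T$ with $|T|\le k-2$. Because $\{p_i\}\cup T$ is feasible in the $(\mathbf 0,i)$ instance and $\{u_i^1\}\cup T$ is feasible in the $(e_i,j)$ instance for any $j\ne i$, both have value at most $k$; submodularity then gives $f(\mathcal{S}^\star)+f(T)\le 2k$, hence $f(T)\le 1$ and $f(\{p_i,u_i^1\})\ge 2k-2$. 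Since $f(\{p_i\})\le k$ this forces $f(\{u_i^1\})\ge f(u_i^1\mid\{p_i\})\ge k-2$. On the other hand, for $j\ne i$ and $l\notin\{i,j\}$ the pairs $\{p_i,u_j^1\}$ and $\{u_i^1,u_j^1\}$ are feasible in the $(e_j,i)$ and $(e_i+e_j,l)$ instances respectively (both with $\fopt=k$), so submodularity applied to $A=\{p_i,u_j^1\}$ and $B=\{u_i^1,u_j^1\}$ gives $f(\{p_i,u_i^1,u_j^1\})+f(\{u_j^1\})\le 2k$, while monotonicity gives $f(\{p_i,u_i^1,u_j^1\})\ge 2k-2$; therefore $f(\{u_j^1\})\le 2$. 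By symmetry this must also hold for $j=i$, contradicting $f(\{u_i^1\})\ge k-2$ whenever $k\ge 5$. So no monotone submodular $f$ -- coverage or otherwise, and the argument works even if $f$ were allowed to depend on $i$ as long as its restriction to feasible sets of Alice's elements does not -- can realize the gap under your element layout.

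The paper avoids this in two ways that you should incorporate. First, each coordinate $j$ is represented by \emph{$k$ Alice elements} $u_j^1,\dots,u_j^k$ (or $\bar u_j^1,\dots,\bar u_j^k$), so no single Alice query of size at most $k$ can force any one element to carry value near $k-1$; this is why the submodularity obstruction above disappears. Second, the submodular function $f$ is \emph{not} independent of $i$: it is $f(\mathcal S)=|U_i\cap\mathcal S|+k$ if $w_i\in\mathcal S$ and $|U_i\cap\mathcal S|+\min(k,|V_i\cap\mathcal S|)$ otherwise, where $U_i,V_i$ depend on $i$. What makes this legal is that the restriction of $f$ to feasible subsets of Alice's stream collapses to $f(\mathcal S)=|\mathcal S|$, which carries no information about $i$ -- and this collapse is precisely where the feasibility restriction on queries is used (a query of size $k+1$ would distinguish $U_i$ from $V_i$ and leak $i$ to Alice). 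Your remark that feasibility ``prevents the oracle from aggregating information about many bits of $x$'' is therefore aimed at the wrong party; the point is to blind Alice to $i$, not Bob to $x$. Finally, note that with $k$ Alice elements per coordinate the stream length is $n=km+1$, which is exactly why the bound comes out as $\Omega(\delta n/k)$ rather than $\Omega(\delta n)$; your one-element-per-coordinate layout is inconsistent with the $m=\Theta(n/k)$ you announce at the start.
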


\begin{proof}
We show a reduction from any instance of the INDEX problem to an instance of streaming submodular maximization problem. Let $\mathcal{U}$ be a universe of size $|\mathcal{U}| = (2k+1) m$. We assign $k$ elements to each of the $x_j$'s that Alice has and one element to the index $i$ that Bob has.

\begin{align*}
	\mathcal{U} = \{ \bar{u}_j^l ; j \in [m] , l \in [k] \} \cup \{ u_j^l ; j \in [m] , l \in [k] \} \cup\{ w_i ; i \in [m] \}
\end{align*} 
Let us now construct the stream:
\begin{enumerate}
	\item For every $j \in [m]$ if $x_j =1$ then Alice inserts ${u}_j^l$ for all $l \in [k]$ into the stream. Otherwise, Alice inserts $\bar{u}_j^l$ into the stream.
	\item Afterwards Bob adds  $w_i$ to the stream. Recall that $i$ is the index that is given to bob in the Indexing problem. 
	
\end{enumerate}

Therefore, the length of the stream is $n = km+1$ where $km$ elements of it are at Alice's side and one element is at Bob's side. It remains to define the submodular function $f$. For simplicity, we define $f$ only for the elements that are on the stream. First note that by design, only one of the $w_i$'s can be present in the stream which is the one that correspond to the index $i$ that Bob holds. Now, let ${V}_i = \{u_j^l, \bar{u}_j^l ; j \in [m]\backslash \{i\}, l \in [k]\} \cup\{ \bar{u}_i^l ; l \in [k] \}$ and ${U}_i = \{{u}_i^l ; l \in [k]\}$. For any $\mathcal{S}$, $f(\mathcal{S})$ is defined as follows:

\begin{align}\label{func-submodul}
	f(\mathcal{S}) = |U_i \cap \mathcal{S}|+
	\begin{cases}
		\min (k, |V_i\cap \mathcal{S}| )  & w_i \notin \mathcal{S} \\ 
		k & \text{otherwise}
	\end{cases}
\end{align}
%f(\mathcal{S}) &= 
%|\mathcal{S} \cap \{ u_i^l ; l \in [k-1] \} |\nonumber\\
%	&+\max \left( (k-1) |\mathcal{S} \cap \{ w_i\}| , \max ({k-1, |\mathcal{S} \cap \{ \bar{u}_j^l ; j \in [m] \backslash \{i\}, l \in [k-1] \}}|) \right) \label{func-submodul}

\begin{observation}\label{Obs-sumodularity}
	The function $f(\cdot)$ as defined in \eqref{func-submodul} is monotone and submodular.
\end{observation}

Now note that because in Alice's side of the stream  $w_i$ is not present and by the assumption that the algorithm is only allowed to query the function value on feasible sets, from Alices point of view $f$ collapses to the following function:
$$f(\mathcal{S}) = |\mathcal{S}|$$
for every set $\mathcal{S}$ a subset of the stream in Alice's side such that $|\mathcal{S}| \le k$. Therefore, this function reveals no information about the index $i$ to Alice.
Let ANS be the solution that the algorithm returns. Then Bob outputs $x_i = 1$ if ANS$ > k$ and $x_i=0$ otherwise. 

Let us now compute the value of the optimum solution to the submodular instance that we constructed (denoted by $\fopt$), depending on the answer to the given instance of the INDEX problem.

\begin{itemize}
	\item{if $x_i = 0$:} Then for any subset $\mathcal{S}$ of the stream we have $\mathcal{S} \cap U_i = \emptyset$ hence by definition $f(\mathcal{S}) \le k$, and in fact $\fopt = k$.
	\item{if $x_i = 1$:} Then $f(\mathcal{S}) = 2k-1$, for $\mathcal{S}=\{w_i\} \cup \{u_i^l| l \in[k-1] \}$ so $\fopt \ge 2k-1$ and in fact, $\fopt = 2k-1$.
\end{itemize}
Therefore, any algorithm for submodular maximization problem that has an approximation guarantee better than $k/(2k-1)$ and works with any constant probability $\delta > 0$, should also use memory at least $\frac{\delta}{10} R_{2/3}^{pub}(${INDEX}$)$. The reason is that we can run $\frac{10}{\delta}$ instances of submodular maximization independently and then take the max at Bob's side. Because each of them has approximation guarantee $k/(2k-1)$ with probability $\delta$ independently, their maximum will have approximation guarantee $k/(2k-1)$ with probability at least $1 - (1-\delta)^\frac{10}{\delta} \ge 1 - e^{-10} \ge 2/3$. Therefore the streaming submodular maximization has to use $\frac{\delta}{10} R_{2/3}^{pub}(${INDEX}$) = \Omega(\delta m) = \Omega( \delta \frac{n}{k})$ space.
\end{proof}
Our reduction shows that even estimating the value of $\fopt$ to within a factor better than $k/(2k-1)$ with any constant probability requires memory $\Omega(\frac{n}{k})$.

% !TEX root = main.tex
\section{Hard Example for \sieve with Random Arrival Order}

In this section we show that there exists a randomly ordered stream on which $\Sieve$ outputs a set $S$ of expected value at most $(1/2 + o(1)) \fopt$. We start by showing this claim for an algorithm $\cA$ similar to $\Sieve$. Then, in Theorem~\ref{theorem:sieve-bound}, we show that there is a collection of elements that, when presented as a randomly ordered stream, makes $\Sieve$ and $\cA$ behave identically with probability at least $1 - \delta$, for any fixed $\delta > 0$.

Let $\cA$ be an algorithm for submodular maximization in the streaming setting that takes a set of thresholds $\cT$ as an auxiliary parameter. The algorithm $\cA$ instantiates the following greedy procedure:
\begin{itemize}
  \item For each threshold $\tau \in \cT$ \textbf{in parallel}: Let $S_{\tau} = \emptyset$. Then, while $|S_{\tau}| < k$, do the following for each arriving element $e$:
    \begin{itemize}
      \item If the arriving element $e$ satisfies $f(e | S_{\tau}) \geq \tau$, add $e$ to $S_{\tau}$.
    \end{itemize}
	\item Output $\arg \max_{S_{\tau} : \tau \in \cT} f\rb{S_{\tau}}$.
\end{itemize}
We will show that there exists an optimal solution $\opt$ and a collection $\cM$ of elements with the following property. If the elements of $\cM$ are presented in a random order to $\cA$, then for every $\tau \in \cT$ we have that $f(S_{\tau}) \le (1/2 + o(1)) \fopt$ with high probability. In the rest of the section we exhibit one such collection $\cM$.

\begin{claim}\label{claim:constructing-X-and-stream}
	Let $\opt = \{e_1, \ldots, e_k\}$ and $f(e_i) = \fopt / k$, for every $i$. Let $\cT$ be the set of thresholds used by the algorithm $\cA$. Then, there exists a stream of length $O\rb{k (k^2 |\cT| / \delta)^{|\cT|}}$ on which, when the stream is presented in a random order, the algorithm $\cA$ outputs set $S := \arg \max_{S_{\tau} : \tau \in \cT} f(S_{\tau})$ such that $f(S) \le (1/2 + o(1)) \fopt$ with probability at least $1 - \delta$, for any fixed $\delta > 0$. Furthermore, for every $x \in S_{\tau}$ and every $Y \subseteq S_{\tau} \setminus \{x\}$ it holds that $f(x | Y) = \tau$.
\end{claim}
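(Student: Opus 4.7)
The plan is to construct a monotone submodular function $f$ and a multiset $\cM$ of ground-set elements, tailored to the given threshold set $\cT$, so that running $\cA$ on any random permutation of $\cM$ yields a solution of value at most $(1/2 + o(1))\fopt$ with probability at least $1-\delta$. The ground set consists of $\cO = \{e_1,\ldots,e_k\}$ (the optimum, with $f(e_i) = \fopt/k$) plus, for each $\tau \in \cT$, a dedicated family $D_\tau$ of ``decoy'' elements whose marginal value is exactly $\tau$. In this way, if branch $\tau$ of $\cA$ ends up with $S_\tau \subseteq D_\tau$ and $|S_\tau| \le k$, then automatically $f(S_\tau) = |S_\tau| \cdot \tau \le k\tau$, and the structural condition $f(x|Y) = \tau$ demanded in the claim holds.

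For $f$ itself I would take a weighted-coverage construction, which is guaranteed to be monotone submodular: each $e_i \in \cO$ covers a unique item of weight $\fopt/k$, and each decoy $d \in D_\tau$ covers one of $k$ items of weight $\tau$ dedicated to that threshold (so the total contribution of $D_\tau$ caps at $k\tau$, exactly matching the desired marginal structure). The sizes $|D_\tau|$ are chosen in a nested, multi-scale manner as explained below; their calibration together with a careful choice of the active $\tau$ in $\cT$ also lets us arrange that the ``best'' branch yields exactly $\fopt/2$ while all others yield strictly less, so that a uniform bound of $(1/2 + o(1))\fopt$ holds across branches.

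Next I would carry out the probabilistic analysis. Order the thresholds as $\tau_1 < \tau_2 < \cdots < \tau_L$ where $L = |\cT|$. For branch $\tau_j$ the elements that pass the threshold are $\cO$ together with all decoys in $D_{\tau_i}$ for $i \ge j$; hence the enemy is \emph{cross-contamination} by higher-tier decoys. I would choose $|D_{\tau_j}|$ so much larger than $k + \sum_{i > j} |D_{\tau_i}|$ that in a uniformly random ordering the first $k$ threshold-passing elements all lie in $D_{\tau_j}$ with probability at least $1 - \delta/|\cT|$. A standard Markov-type estimate on random interleavings shows that a ratio $|D_{\tau_j}|/(k + \sum_{i>j}|D_{\tau_i}|) = \Omega(k^2|\cT|/\delta)$ suffices, which recursively forces $|D_{\tau_j}| = \Theta\bigl(k \cdot (k^2|\cT|/\delta)^{L-j+1}\bigr)$ and total stream length $O\bigl(k\,(k^2|\cT|/\delta)^{|\cT|}\bigr)$, matching the claim. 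A union bound over $\tau \in \cT$ then yields simultaneous fooling with probability at least $1 - \delta$, in which case $S_\tau \subseteq D_\tau$, $|S_\tau| \le k$, and $f(S_\tau) \le k\tau \le (1/2 + o(1))\fopt$ for every branch; taking the best branch gives $f(S) \le (1/2 + o(1))\fopt$.

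The main obstacle is exactly this \emph{cross-contamination}, which is what drives the exponential blow-up in stream length: one must arrange the decoys so that branch $\tau_j$'s $k$-slot budget is spent on $D_{\tau_j}$ rather than on the more tempting higher-tier decoys, and the multiplicative factor $k^2|\cT|/\delta$ at each level is the price for guaranteeing this uniformly. A secondary obstacle is verifying that the weighted-coverage function remains monotone submodular after all tiers are added, and that the conditioning used in the union bound is compatible with the independence assumed by the Markov estimates (which may require decomposing the random ordering into the arrival times of each $D_{\tau_j}$ and arguing conditionally on the positions of previously-handled tiers). Once these are discharged, the structural equality $f(x|Y) = \tau$ in the claim follows from the fact that $S_\tau \subseteq D_\tau$ with $|Y \cap D_\tau| < k$, completing the proof.
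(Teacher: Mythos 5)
Your overall architecture — one decoy family per threshold, geometric multiplicities so that each low-threshold branch saturates on its own decoys before seeing higher-tier ones, union bound over $\cT$, and the $(k^2|\cT|/\delta)^{|\cT|}$ stream-length bookkeeping — is the same as the paper's. The gap is in the construction for the range $\fopt/(2k) < \tau \le \fopt/k$. You cap each $D_\tau$ at $k$ weight-$\tau$ item classes, so $f$ restricted to $D_\tau$ tops out at $k\tau$; but for these $\tau$ the cap $k\tau$ lies strictly between $\fopt/2$ and $\fopt$, and a branch that fills all $k$ slots with decoys achieves $f(S_\tau) = k\tau > \fopt/2$, so the claimed bound $f(S_\tau) \le (1/2 + o(1))\fopt$ simply does not follow. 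The paper instead sets $|X_\tau| = t = \fopt/(2\tau) < k$ so that $f(X_\tau) = \fopt/2$ exactly, which means the branch does \emph{not} fill up after taking one copy of $X_\tau$ and one must now block all further additions. This is where the paper's nested coverage geometry does real work: it arranges $f(e_i \mid X_\tau) = (\fopt - \fopt/2)/k < \tau$ for every $e_i \in \opt$, and $f(x \mid X_\tau) = 0$ for decoys $x$ coming from higher-threshold tiers, so that once $X_\tau$ is collected nothing else clears the threshold even though $k - t$ slots remain. Your plan does not build this overlap structure (the phrase ``careful calibration'' is doing all the work), and the event you condition on — ``the first $k$ threshold-passing elements lie in $D_\tau$'' — is not the right one here, since the branch only consumes $t < k$ decoys and the remaining danger comes from $\opt$ and the higher tiers, which you have not arranged to have small marginal against $X_\tau$. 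This is the missing idea, and it is the crux of the middle case; without it the claim fails for any threshold in $(\fopt/(2k), \fopt/k]$.
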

\begin{proof}
We split the proof into two parts. First, for every $\tau$, we exhibit set $X_{\tau}$ that, as we will see later, constitutes set $S_{\tau}$. In the second part we compose sets $X_{\tau}$ to obtain a random stream having the desired properties.

\paragraph{First part: exhibiting $X_{\tau}$.}

	We consider three cases with respect to the value $\tau$, and for each of them give a construction of set $X_{\tau}$. For this part of the proof, we assume that the stream consists only of set $X_{\tau}$ and $\opt$ presented in that order.
	
	\begin{itemize}
		\item Case $\tau \le \fopt / (2 k)$.
		Let $X_{\tau} = \{x_1, \ldots, x_k\}$ such that $f(x_i) = \tau$, for every $i$, and $f(X_{\tau}) = k \tau$. Clearly, $\cA$ will collect all the $k$ elements of $X_{\tau}$, and hence will not select any element of $\opt$, i.e. $S_{\tau} = X_{\tau}$. Note that $f(X_{\tau}) = k \tau \le \fopt / 2$.
	
	\item Case $\fopt / (2 k) < \tau \le \fopt / k$.
		Let $X_{\tau} = \{x_1, \ldots, x_t\}$, where $t = \fopt / (2 \tau)$ (for the sake of clarity, we assume that $t$ is an integer and remove this assumption at the end of the proof). In addition, we define $X_{\tau}$ so that: $f(X_{\tau}) = t \tau = \fopt / 2$; $f(x_i) = f(X_{\tau}) / t = \tau$; and, $f(e_i| X_{\tau}) = (\fopt - f(X_{\tau})) / k < \tau$. It is easy to see that such set $X_{\tau}$ exists. Hence, in this case, we have $S_{\tau} = X_{\tau}$ and therefore $f(S_{\tau}) = \fopt / 2$.
		
	\item Case $\fopt / k < \tau$.
		In this case, we simply set $X_{\tau} = \emptyset$. Then, as $f(e_i) < \tau$ for every $i \in \opt$, we have $S_{\tau} = S_{\tau} \cap \opt = \emptyset$.
		
	\end{itemize}
	
	\textbf{Additional property of $X_{\tau}$ sets.} Let $\fopt / (2 k) < \tau \le \fopt/k$. From the definition we have $f(X_{\tau}) = \fopt / 2$. For each such $\tau$ design $X_{\tau}$ so that it ``covers'' the same half of $\opt$. For each $\tau' \le \fopt / (2 k)$ design $X_{\tau'}$ so that it covers a subarea of $X_{\tau}$. Then, for $\tau_1 < \tau_2$ and $\fopt / (2 k) < \tau_2 \le \fopt/k$ we have
			\begin{equation}\label{eq:property-X-four}
				f(x | X_{\tau_2}) = 0, \forall x \in X_{\tau_1}.
			\end{equation}
			
			%Also, observe that if the stream consists only of $X_{\tau}$ and $\opt$ (in that order), then it will hold
			%\begin{equation}\label{eq:property-X-and-OPT}
				%f(S_{\tau}) \le \fopt / 2.
			%\end{equation}
	
\paragraph{Second part: composing a random stream.}
	First, observe that for $\tau > \fopt / k$ the algorithm $\cA$ will not collect any element from $\opt$ and also $X_{\tau} = \emptyset$. Therefore, such threshold $\tau$ does not affect the outcome of $\cA$, and for the rest of the proof and w.l.o.g. we assume that for every $\tau \in \cT$ it holds $\tau \le \fopt / k$. Then, from our construction, we have $k/2 \le |X_{\tau}| \le k$.
	
	Let $\fopt / k \ge \tau_1 > \tau_2 > \ldots > \tau_{|\cT|}$ be the thresholds of $\cT$. Let $\cM$ be a multiset of elements that consists of the following:
	\begin{itemize}
		\item The multiset $\cM$ contains $\opt$.
		\item For each $i$, $\cM$ contains $(k^2 |\cT| / \delta)^i$ copies of $X_{\tau_i}$.
	\end{itemize}
	Let $\cM_R$ be a random stream consisting of the elements of $\cM$. For the sake of brevity, define $X_{\tau_0} := \opt$. Then, we have the following. For every $i \ge 1$ and any element $x \in X_{\tau_i}$ it holds
	\begin{eqnarray*}
		& & \Pr{\text{any element of the copies of } X_{\tau_{i - 1}} \text{ appears before all the copies of } x \text{ in } \cM_R} \\
		& \le & \frac{k (k^2 |\cT| / \delta)^{i - 1}}{(k^2 |\cT| / \delta)^i} \\
		& \le & \frac{\delta}{k |\cT|}.
	\end{eqnarray*}
	Therefore, by union-bound, one copy of $X_{\tau_i}$ appears before any of the elements of $X_{\tau_{i - 1}}$ (taking into account of its copies) in $\cM_R$ with probability at least $1 - \delta / |\cT|$. Furthermore, this claim holds for all the $i$ simultaneously with probability at least $1 - \delta$. Let $\cE$ be event that one copy of $X_{\tau_i}$ appears before any of the elements of $X_{\tau_j}$, for all $j < i$. Our discussion implies $\Pr{\cE} \ge 1 - \delta$. In the rest of the proof, assume that $\cE$ was realized.
	
	Now, consider a threshold $\tau_i \in \cT$. First, no element from $X_{\tau_j}$, for $j > i$, will be chosen to $S_{\tau_j}$ by $\cA$ as $\tau_j > \tau_i$. Next, we distinguish two cases.
	\begin{itemize}
		\item Case $\tau_i \le \fopt/(2k)$.
		In this case, we have $|X_{\tau_i}| = k$ by the design of $X_{\tau_i}$. Hence, assuming $\cE$, we have that $S_{\tau_i}$ will contain $k$ elements before any element of $X_{\tau_j}$, for any $j < i$, is seen in the stream. Therefore, $S_{\tau_i} = X_{\tau_i}$, and $f(S_{\tau_i}) \le \fopt/2$.
		
		\item Case $\fopt/(2k) < \tau_i \le \fopt/ k$.
		In this case and assuming $\cE$, $\cA$ will select $X_{\tau_i}$ to $S_{\tau_i}$ before any element of $X_{\tau_j}$, for any $j < i$, is seen in the stream. Now, by the properties of sets $X_{\tau}$, including property~\eqref{eq:property-X-four}, for any $x \in X_{\tau_j}$ and any $j < i$, we have
		\begin{equation}\label{eq:X-overlaps}
			f(x|X_{\tau_i}) = 0,
		\end{equation}
		and hence no such $x$ will be added to $S_{\tau_i}$. Furthermore, the algorithm $\cA$ executed on $X_{\tau_i}$ and $\opt$ (in that order) outputs set $S_{\tau_i}$ such that $f(S_{\tau_i}) \le \fopt / 2$, as desired.
	\end{itemize}

	\textbf{Removing the assumption that $\fopt / (2 \tau)$ is integral.} Recall that this assumption was made in the case $\fopt / (2 k) < \tau \le \fopt / k$. We start by redefining $t$ as $t = \ceil{\fopt / (2 \tau)}$. First, notice that in this case $f(X_{\tau}) \le (1/2 + o(1)) \fopt$ and, furthermore, all the other aforementioned properties hold except property~\eqref{eq:property-X-four}. The only place where we need property~\eqref{eq:property-X-four} is in our analysis of the second part to derive equation~\eqref{eq:X-overlaps}, i.e. to show that once $X_{\tau_i}$ is collected then no element from $X_{\tau_j}$, for any $j < i$, will be added to the set $S_{\tau_i}$. But, to achieve that, instead of equation~\ref{eq:X-overlaps} the following weaker property suffices
	\begin{equation}\label{eq:X-overlaps-weaker}
		f(x|X_{\tau_i}) \le \fopt / (2k) < \tau_i.
	\end{equation}
	Now we exhibit a collection of sets $X_{\tau}$ such that, even in the case $\fopt / (2 \tau)$ is not integral, the collection have all the desired properties (with property~\eqref{eq:X-overlaps-weaker} replacing property~\eqref{eq:X-overlaps}.
	
	Let $f$ be a cover function in the $2$-dimensional space. Assume that $\opt$ is a rectangle. Divide that rectangle into $2 \cdot k! \cdot |\cT|$ small rectangles all of the same area. Let $A$ be the set of a half of those rectangles. Observe that $f(A) = \fopt / 2$. Next, we define every $X_{\tau}$ so that $f(A | X_{\tau}) = 0$ as follows. All the rectangles of $A$ are (arbitrarily) covered by the elements of $X_{\tau}$ so that every element of $X_{\tau}$ covers $|A| / |X_{\tau}|$ rectangles of $A$, and no two elements of $X_{\tau}$ overlap. Observe that $|A|$ is divisible by any positive integer being at most $k$, and hence is divisible by $|X_{\tau}|$. The remaining value $\tau - \fopt / (2 |X_{\tau}|)$ of every element of $X_{\tau}$ that is not contained within $A$ is arbitrarily allocated in the part of $\opt$ outside of $A$, under the condition that no two elements of $X_{\tau}$ overlap. But now, for any $X_{\tau_i}$ and $X_{\tau_j}$ such that $\fopt/(2k) < \tau_i, \tau_j \le \fopt / k$, and for any $x \in X_{\tau_j}$ we have
	\[
		f(x|X_{\tau_i}) \le \tau - \fopt / (2 |X_{\tau_j}|) \le \fopt / k - \fopt (2 k) \le \fopt / (2 k),
	\]
	as desired.
		
	This concludes the proof.
\end{proof}

Now we use Claim~\ref{claim:constructing-X-and-stream} to conclude that $\Sieve$ outputs a set of value at most $(1/2 + o(1)) \fopt$.
\begin{restatable}{theorem}{theoremsievebound}\label{theorem:sieve-bound}
	There is a stream on which, even when the stream is presented in a random order, $\Sieve$ outputs set $S$ such that with probability at least $1 - \delta$ it holds $f(S) \le (1/2 + o(1)) \fopt$, for any fixed $\delta > 0$.
\end{restatable}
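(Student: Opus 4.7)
The plan is to instantiate Claim~\ref{claim:constructing-X-and-stream} with $\cT$ equal to the set of thresholds that \sieve can ever activate on the constructed instance, and then verify that \sieve's per-threshold sets coincide with those of $\cA$ under the favorable ordering event of Claim~\ref{claim:constructing-X-and-stream}. First I would observe that on the constructed stream, the maximum singleton value is $m^{\star}=\fopt/k$ (attained on $\cO$, while every $X_{\tau}$ element has value $\tau\le\fopt/k$). Hence, regardless of what \sieve has seen so far, its active thresholds are always of the form $(1+\eps)^{i}/(2k)$ and lie within $[\fopt/(2k^{2}),\fopt/k]$. I would let $\cT_{\sieve}$ denote this finite set of size $O(\nicefrac{\log k}{\eps})$.

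Next I would apply Claim~\ref{claim:constructing-X-and-stream} with $\cT:=\cT_{\sieve}$ to obtain the multiset $\cM$ and let $\cM_{R}$ be its random-order presentation. Let $\cE$ be the event defined in that proof: one copy of each $X_{\tau_{i}}$ appears before any copy of $X_{\tau_{i-1}}$, so the packets appear in the order of increasing $\tau$. We have $\Pr[\cE]\ge 1-\delta$, and under $\cE$ the per-threshold sets $S_{\tau}^{\cA}$ each have value at most $(1/2+o(1))\fopt$. The crux is then to track \sieve's execution on $\cM_{R}$ and argue $S_{\tau}^{\sieve}=S_{\tau}^{\cA}$ for every $\tau\in\cT_{\sieve}$. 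Conditioning on $\cE$: (i) the running maximum $m$ increases monotonically through $\tau_{|\cT|}<\tau_{|\cT|-1}<\ldots<\tau_{1}<\fopt/k$; (ii) threshold $\tau$ becomes active exactly when $m$ first reaches $\tau$, which happens at the arrival of the first $X_{\tau}$ element (no earlier element has value $\ge \tau$); (iii) $\tau$ stays active for the remainder of the stream, because $m\le \fopt/k\le 2k\tau$ whenever $\tau\ge \fopt/(2k^{2})$, which is guaranteed by our choice of $\cT_{\sieve}$; (iv) before activation, every element seen has singleton value $<\tau$, so its marginal to the (empty) $S_{\tau}^{\cA}$ is also $<\tau$ and $\cA$ rejects it too; after activation \sieve and $\cA$ apply the same acceptance rule to the same stream.

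Combining these observations gives $f(S_{\tau}^{\sieve})=f(S_{\tau}^{\cA})\le(1/2+o(1))\fopt$ for every $\tau\in\cT_{\sieve}$ under $\cE$, and since \sieve outputs $\arg\max_{\tau}S_{\tau}^{\sieve}$, the desired bound follows with probability at least $1-\delta$. I expect the main obstacle to be the careful bookkeeping in step three: in particular, certifying that no element which \emph{would} have been admitted to some $S_{\tau}$ can be missed during the (zero-length) interval between threshold activation and its triggering arrival, and that \sieve's geometric threshold set never drifts outside the range in which Claim~\ref{claim:constructing-X-and-stream} gives its upper bound. Both points reduce to the range containment $\cT_{\sieve}\subseteq[\fopt/(2k^{2}),\fopt/k]$ and the monotone growth of $m$ under $\cE$, so once those are established cleanly the theorem follows.
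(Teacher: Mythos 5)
Your proposal contains a genuine gap at step (iv), which is precisely the step that the paper's proof is designed to handle. You write that ``after activation \sieve and $\cA$ apply the same acceptance rule to the same stream,'' but this is not true in general. For a given guess $v$, \sieve does not use a fixed per-guess threshold: it admits $e$ to $S_v$ iff $f(e \mid S_v) \ge (v/2 - f(S_v)) / (k - |S_v|)$, a quantity that \emph{changes} after every insertion. By contrast, $\cA$ (as defined in the paper) uses a fixed threshold $\tau = v/(2k)$ throughout. So the two algorithms do not a priori apply the same rule, and the claimed equality $S_\tau^{\sieve} = S_\tau^{\cA}$ does not follow from the bookkeeping you describe.

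The missing ingredient is the last sentence of Claim~\ref{claim:constructing-X-and-stream}, which guarantees that on the constructed stream every element $x$ ever added to $S_\tau$ satisfies $f(x \mid Y) = \tau$ for \emph{every} $Y \subseteq S_\tau \setminus \{x\}$. The paper's proof uses exactly this to show that \sieve's adaptive threshold is an invariant: if every accepted element so far contributed exactly $\tau = v/(2k)$, then
\[
\frac{v/2 - f(S_v)}{k - |S_v|} \;=\; \frac{v/2 - |S_v|\,\tau}{k - |S_v|} \;=\; \tau ,
\]
so the adaptive rule never drifts away from $\tau$, and only then does \sieve coincide with the fixed-threshold algorithm $\cA$. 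Without invoking this exact-marginal-gain property, your argument that the two executions agree is unsupported; with it, the rest of your reduction (choice of $\cT_{\sieve}$, range containment, activation timing under $\cE$) is reasonable and consistent with the paper's approach, though the paper treats the activation bookkeeping much more lightly since the invariant already forces agreement.
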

\begin{proof}
	Algorithm $\Sieve$ considers a list $\cG$ of guesses of the value of an optimal solution. We point out that $|\cG|$ does not depend on the length of the stream. For each of the guesses $v \in \cG$, the algorithm maintains set $S_v$ that adds an element $e$ to the set if $f(e | S_v) \ge (v/2 - f(S_v)) / (k - |S_v|)$. Let $\tau := v / (2 k)$. Then, as long as every added $e$ has marginal gain exactly $\tau$, we have 
	\[
		\frac{v/2 - |S_v| \tau}{k - |S_v|} = \frac{v/2 - |S_v| v / (2 k)}{k - |S_v|} = \tau.
	\]
	In other words, if every element added to $S_v$ has marginal gain equal to $\tau$, then the threshold $\Sieve$ considers to add a new element to $S_v$ remains the same. But this is exactly how the algorithm $\cA$ will behave, assuming that such stream is presented to $\Sieve$. By Claim~\ref{claim:constructing-X-and-stream}, there exists a stream which when presented randomly has this desired property with probability at least $1 - \delta$. This now shows that there is a stream on which, when given randomly, the algorithms $\Sieve$ and $\cA$ behave exactly the same with probability at least $1 - \delta$, and hence $\Sieve$ outputs set $S$ such that $f(S) \le (1/2 + o(1)) \fopt$.
\end{proof}
% !TEX root = main.tex
\section{\ppass Algorithm}
\label{sec:multi-pass}

In this section, we present a multi-pass algorithm for the \SEXYP problem. We assume that the value $\fopt$ of the optimum solution $\opt$ is known in advance. We remove this assumption in Appendix \ref{sec:removing-opt}. Our algorithm achieves $1-1/e-\eps$ approximation for arbitrary $\eps$ using $O(k)$ memory and $O(\frac{1}{\epsilon})$ passes over the data stream. In \cite{mcgregor2016better} the problem of maximum $k$-set coverage (a special case of \SEXYP) was studied. They give a $\rb{1-1/e-\eps}$-approximation algorithm with $O\rb{\frac{k}{\epsilon^2}}$ space and $O\rb{\frac{1}{\epsilon}}$ passes. 

Our \ppass algorithm (Algorithm~\ref{ppassalgori}) works as follows: We start with $S =\emptyset$, we pick an element in the $i$-th pass over the stream if $|S| < k$ and $f(e|S)$ is at least $T_i\cdot\frac{OPT}{k}$, where $T_i=(\frac{p}{p+1})^i$.   
\begin{algorithm}[h!] 
\caption{\label{ppassalgori} \ppass Algorithm}
\begin{algorithmic}[1]
	\State $S:=\emptyset$
	\For{$i=1$ to $p$}
		\For{the $j$-th element $e_j$ on the stream}
			\If  {$f(e_j|S) \geq {(\frac{p}{p+1})}^i\cdot \frac{\text{OPT}}{k}$ and $|S|<k$}
				\State $S:=S\cup \{e_j\}$
			\EndIf  
    	\EndFor
	\EndFor
    \Return $S$
\end{algorithmic}
\end{algorithm}

Let $S_i$ be the partial solution obtained  after the $i$-th  pass over the stream, for $1 \le i \le p$. Let $k_i=|S_i \setminus S_{i-1}|$ denote the number of elements picked by the algorithm in the $i$-th pass. Let us begin my showing some properties of the $S_i$ sets. We first show that if for some $1\leq i\leq n$, $S_i$ is not full, then $f(S_i)$ is quite big. Formally:
\begin{lemma} \label{not full}
For any $1\leq i \leq p$,  if $|S_i|<k$, then $f(S_i) \geq \fopt(1-T_i)$.
\end{lemma}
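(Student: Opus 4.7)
The plan is to exploit the fact that when $|S_i| < k$, the cardinality cap was never the reason the algorithm rejected an element during pass $i$. Therefore every element $e$ that appeared on the stream in pass $i$ but was not added must have failed the threshold test at the moment it was observed, i.e.\ $f(e \mid S_{\text{then}}) < T_i \cdot \fopt/k$, where $S_{\text{then}}$ is the partial solution at that moment. Since $S_{\text{then}} \subseteq S_i$, submodularity gives $f(e \mid S_i) \le f(e \mid S_{\text{then}}) < T_i \cdot \fopt/k$. In particular this holds for every $o \in \cO \setminus S_i$, because all elements of $\cO$ (being elements of $V$) appear on the stream during pass $i$.

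Given that bound on marginals, the rest is the standard ``complement'' inequality. Using monotonicity and then submodularity,
\begin{align*}
\fopt \;\le\; f(S_i \cup \cO) \;=\; f(S_i) + f(\cO \mid S_i) \;\le\; f(S_i) + \sum_{o \in \cO \setminus S_i} f(o \mid S_i) \;<\; f(S_i) + k \cdot T_i \cdot \tfrac{\fopt}{k},
\end{align*}
so rearranging yields $f(S_i) \ge (1 - T_i)\fopt$, as required.

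The only subtlety, and arguably the only ``hard'' step, is justifying the first bullet: that the threshold-test inequality holds for \emph{every} non-selected element, with no exception coming from the cardinality check. This needs the hypothesis $|S_i| < k$, because it implies $|S_{\text{then}}| < k$ for every intermediate moment of pass $i$, so the only reason an element could have been rejected is failing the marginal-gain threshold. Once this observation is in place, the rest of the argument is a one-line application of submodularity plus monotonicity, and no further calculation is required.
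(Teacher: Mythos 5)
Your proof is correct and follows essentially the same argument as the paper: bound the marginal gain of each $o \in \cO \setminus S_i$ by the pass-$i$ threshold (using $|S_i| < k$ to rule out cardinality-cap rejections, and submodularity to pass from the intermediate state to $S_i$), then apply monotonicity and submodularity to $f(S_i \cup \cO)$ and rearrange. Your write-up is a bit more careful than the paper's in spelling out why every $o \in \cO \setminus S_i$ satisfies the marginal bound, but the underlying reasoning is the same.
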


\begin{proof}
Since $|S_i|<k$, thus for any element $ o \in \cO \setminus S_i$,  we have $f(o|S_i) \leq T_i \cdot\frac{\fopt}{k}$, and for any element $o \in \cO \cap S_i$, $f(o|S_i)=0$. Therefore, \[f(\cO|S_i)=\sum_{o \in \fopt} f(o|S_i) \leq k\cdot \frac{T_i\cdot \fopt}{k}=T_i \cdot \fopt.\] Thus,
\[\fopt  \leq f(S_i\cup \cO) = f(S_i) + f(\cO|S_i) \Rightarrow\]  
\[
f(S_i) \geq \fopt\cdot(1-T_i).
\]
\end{proof}
This lemma shows that if $|S|\leq k$ at the end of our algorithm, then we get the desired approximation guarantee. Another important ingredient that we need in order to analyze our algorithm is understanding the case that we pick the expected (or slightly more) number of elements in each round of our algorithm. More precisely, we show that:

\begin{lemma} \label{full early}
For any $1\leq i \leq p-1$, if $\sum_{j=1}^i k_j \geq k\cdot \rb{\frac{i}{p}+\alpha}$, then $f(S_i)\geq \fopt\cdot \rb{1-{(\frac{p}{p+1})}^{i}+\alpha {(\frac{p}{p+1})}^{i+1}}$.
\end{lemma}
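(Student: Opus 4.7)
My plan is induction on $i$, combining Lemma~\ref{not full} with the pass-$i$ marginal-gain guarantee of the algorithm. Write $q = p/(p+1)$ so that $T_i = q^i$; the algebraic identity
\[
  \frac{q^i}{p} \;=\; q^{i-1}(1-q) \;=\; q^{i-1} - q^i,
\]
which follows from $1-q = 1/(p+1)$, will collapse the key calculations in both the base case and the inductive step.

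For the base case $i=1$, each of the $k_1$ elements of $S_1$ was added with marginal gain at least $T_1 \fopt/k = q\fopt/k$, so $f(S_1) \ge k_1 q \fopt / k \ge (1/p + \alpha) q \fopt = (1 - q + \alpha q)\fopt$, which is at least the target $(1 - q + \alpha q^2)\fopt$ since $q \le 1$.

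For the inductive step I would set $x = |S_{i-1}|/k$ and split into two cases. If $x \ge (i-1)/p$, apply the inductive hypothesis at index $i-1$ with $\alpha' = x - (i-1)/p \ge 0$, obtaining $f(S_{i-1}) \ge (1 - q^{i-1} + \alpha' q^i)\fopt$. If instead $x < (i-1)/p$, then $|S_{i-1}| < k$ and Lemma~\ref{not full} (at $i-1$) directly gives $f(S_{i-1}) \ge (1 - q^{i-1})\fopt$. In either case I add the contribution of pass $i$: the hypothesis $\sum_{j=1}^{i} k_j \ge (i/p + \alpha) k$ yields $k_i \ge (i/p + \alpha - x)k$, and each such element has marginal gain at least $q^i \fopt/k$, giving $f(S_i) \ge f(S_{i-1}) + (i/p + \alpha - x) q^i \fopt$. (When this last term is non-positive, one simply falls back on $f(S_i) \ge f(S_{i-1})$; the inductive bound for $f(S_{i-1})$ alone then suffices by monotonicity in $\alpha'$.) In both cases the $x$-dependent terms cancel and the identity above turns the expression into
\[
  f(S_i) \;\ge\; \rb{1 - q^i + \alpha q^i} \fopt \;\ge\; \rb{1 - q^i + \alpha q^{i+1}} \fopt,
\]
as claimed.

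The main subtlety I foresee is arranging the case split so that the two avenues (inductive bonus versus Lemma~\ref{not full} slack) combine cleanly with the pass-$i$ marginal term. A pure sum-of-marginals argument $f(S_i) \ge \sum_j k_j q^j \fopt / k$ is too weak on its own: an adversary could concentrate all of $\sum_j k_j$ in pass $i$, yielding only $\approx q^i \fopt$, which is smaller than $1 - q^i$ for large $p$. Lemma~\ref{not full} is what supplies the missing $1 - q^i$ mass, and the inductive hypothesis is what carries this strength forward when $|S_{i-1}|$ already meets or exceeds its ``expected'' $(i-1)/p$ share.
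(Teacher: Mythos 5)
Your proof is correct and follows essentially the same strategy as the paper: induction combined with the telescoping identity $q^{i-1} - q^i = q^i/p$ (where $q = p/(p+1)$), falling back to Lemma~\ref{not full} when the prefix $S_{i-1}$ is short and otherwise passing the surplus $\alpha'$ through the inductive hypothesis. The only difference is cosmetic: you split on $|S_{i-1}|$ relative to $(i-1)k/p$, while the paper splits into three cases on $k_{i+1}$ relative to $k(1/p+\alpha)$ and $k/p$; both organizations make the $x$-dependent terms cancel and land on the same bound $\bigl(1 - q^i + \alpha q^i\bigr)\fopt \ge \bigl(1 - q^i + \alpha q^{i+1}\bigr)\fopt$.
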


\begin{proof}
We prove the lemma by induction.

\textbf{Base case}: $i=1$. If $k_1 \geq k\cdot \rb{\frac{1}{p}+\alpha}$, then we have \[f(S_1) \geq k_1\cdot \frac{T_1\cdot \fopt}{k} = k\cdot \rb{\frac{1}{p}+\alpha}\cdot \frac{p}{p+1}\cdot \frac{ \fopt}{k} = \fopt \cdot \rb{1-\frac{1}{p+1}+\alpha \frac{p}{p+1}}.\]

\textbf{Induction hypothesis}: If $\sum_{j=1}^i k_j \geq k\cdot \rb{\frac{i}{p}+\alpha}$, then \[f(S_i)\geq \fopt\cdot \rb{1-\rb{\frac{p}{p+1}}^{i}+\alpha \cdot \rb{\frac{p}{p+1}}^{i+1}}.\] 

\textbf{Induction step}: If $\sum_{j=1}^{i+1} k_j \geq k(\frac{i+1}{p}+\alpha)$, then we have \[f(S_{i+1})\geq \fopt\cdot \rb{1-\rb{\frac{p}{p+1}}^{i+1}+\alpha \cdot \rb{\frac{p}{p+1}}^{i+2}}.\]

Let us consider three following cases,
\begin{enumerate}
\item If $k_{i+1}=k\cdot \rb{\frac{1}{p}+\beta}$ for $\beta \geq \alpha$.

Since $|S_i|<k$, by Lemma \ref{not full}, we have $f(S_i) \geq \fopt\cdot(1-{(\frac{p}{p+1})}^i)$. Thus we have
\begin{align*}
f(S_{i+1}) &\geq f(S_i)+k_{i+1} \cdot \frac{T_{i+1}\cdot \fopt}{k} \\
&\geq \fopt\cdot \rb{1-{(\frac{p}{p+1})}^i} +k\cdot\rb{\frac{1}{p}+\beta} \rb{\frac{p}{p+1}}^{i+1}\cdot\frac{\fopt}{k} \\
&= \fopt\cdot \rb{1-\rb{\frac{p}{p+1}}^{i+1}+\beta \rb{\frac{p}{p+1}}^{i+1} }\\
&\geq \fopt\cdot \rb{1-\rb{\frac{p}{p+1}}^{i+1}+\alpha \rb{\frac{p}{p+1}}^{i+1}}.
\end{align*}

\item If $k_{i+1}=k(\frac{1}{p}+\beta)$ for $0 \leq \beta \leq \alpha$.

Thus, we have $\sum_{j=1}^{i} k_j \geq k\cdot \rb{\frac{i+1}{p}+\alpha} - k\cdot\rb{\frac{1}{p}+\beta} \geq k\cdot\rb{\frac{i}{p}+(\alpha-\beta)}$. Therefore, by induction hypothesis, we have \[f(S_i)\geq \fopt\cdot \rb{1-\rb{\frac{p}{p+1}}^{i}+(\alpha-\beta) \rb{\frac{p}{p+1}}^{i+1}}.\]
Thus,
\begin{align*}
f(S_{i+1}) &\geq f(S_i)+k_{i+1} \cdot \frac{T_{i+1}\cdot \fopt}{k} \\
&\geq \fopt\cdot \rb{1-\rb{\frac{p}{p+1}}^{i}+(\alpha-\beta) \rb{\frac{p}{p+1}}^{i+1}} +k\cdot\rb{\frac{1}{p}+\beta} \rb{\frac{p}{p+1}}^{i+1}\cdot\frac{\fopt}{k} \\
&= \fopt\cdot \rb{1-\rb{\frac{p}{p+1}}^{i+1}+\alpha \rb{\frac{p}{p+1}}^{i+1}}.
\end{align*}

\item If $k_{i+1}=k\cdot \rb{\frac{1}{p}-\beta}$ for $0 \leq \beta \leq \frac{1}{p}$.

Thus, we have $\sum_{j=1}^{i} k_j \geq k\cdot \rb{\frac{i+1}{p}+\alpha} - k\cdot\rb{\frac{1}{p}-\beta} \geq k\cdot{\frac{i}{p}+(\alpha+\beta)}$. Therefore, by induction hypothesis, we have \[f(S_i)\geq \fopt\cdot \rb{1-\rb{\frac{p}{p+1}}^{i}+(\alpha+\beta) \rb{\frac{p}{p+1}}^{i+1}}.\]
Thus,
\begin{align*}
f(S_{i+1}) &\geq f(S_i)+k_{i+1} \frac{T_{i+1}\cdot \fopt}{k} \\
&\geq \fopt\cdot \rb{1-\rb{\frac{p}{p+1}}^{i}+(\alpha+\beta) \rb{\frac{p}{p+1}}^{i+1}} +k\cdot(\frac{1}{p}-\beta) \rb{\frac{p}{p+1}}^{i+1}\cdot\frac{\fopt}{k} \\
&= \fopt\cdot \rb{1-\rb{\frac{p}{p+1}}^{i+1}+\alpha \rb{\frac{p}{p+1}}^{i+1}}.
\end{align*}
\end{enumerate}
Therefore in all cases we proved 
 \[f(S_{i+1})\geq \fopt\cdot \rb{1-\rb{\frac{p}{p+1}}^{i+1}+\alpha \rb{\frac{p}{p+1}}^{i+2}}.\]

\end{proof}

Now we are ready to prove the main result of this section:
\begin{theorem}
$f(S_p) \geq \fopt\cdot\rb{1-\rb{\frac{p}{p+1}}^p}$.
\end{theorem}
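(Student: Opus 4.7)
The plan is to combine the two preparatory lemmas (Lemma~\ref{not full} and Lemma~\ref{full early}) through a case split on $|S_p|$, and then, in the full case, a further split on how quickly the set $S$ filled up during the final pass. The key algebraic identity that will appear repeatedly is
\[
1 - \left(\tfrac{p}{p+1}\right)^{p-1} + \tfrac{1}{p}\left(\tfrac{p}{p+1}\right)^{p} \;=\; 1 - \left(\tfrac{p}{p+1}\right)^{p-1}\!\left(1 - \tfrac{1}{p+1}\right) \;=\; 1 - \left(\tfrac{p}{p+1}\right)^{p},
\]
which says precisely that "finishing pass $p$ with $k/p$ extra elements at threshold $T_p$" exactly converts the Lemma~\ref{not full} bound for $S_{p-1}$ into the target bound for $S_p$.

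First I would handle the easy case $|S_p| < k$: Lemma~\ref{not full} applied with $i = p$ gives $f(S_p) \ge \fopt(1 - T_p) = \fopt(1-(p/(p+1))^p)$ immediately. So from now on assume $|S_p| = k$, and write $k_p = |S_p| - |S_{p-1}|$ for the number of elements picked in the final pass. Each such element contributes at least $T_p \cdot \fopt/k = (p/(p+1))^p \cdot \fopt/k$ to $f$, so
\[
f(S_p) \;\ge\; f(S_{p-1}) \;+\; \tfrac{k_p}{k}\left(\tfrac{p}{p+1}\right)^{p}\fopt.
\]

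Now split on $|S_{p-1}|$. If $|S_{p-1}| \ge k\cdot\tfrac{p-1}{p}$, write $|S_{p-1}| = k\bigl(\tfrac{p-1}{p} + \alpha\bigr)$ for some $\alpha \in [0,1/p]$, so that $k_p = k\bigl(\tfrac{1}{p}-\alpha\bigr)$. Applying Lemma~\ref{full early} with $i = p-1$ gives
\[
f(S_{p-1}) \;\ge\; \fopt\left(1 - \left(\tfrac{p}{p+1}\right)^{p-1} + \alpha \left(\tfrac{p}{p+1}\right)^{p}\right),
\]
and then
\[
f(S_p) \;\ge\; \fopt\left(1 - \left(\tfrac{p}{p+1}\right)^{p-1} + \alpha \left(\tfrac{p}{p+1}\right)^{p} + \bigl(\tfrac{1}{p}-\alpha\bigr)\left(\tfrac{p}{p+1}\right)^{p}\right) \;=\; \fopt\left(1 - \left(\tfrac{p}{p+1}\right)^{p}\right),
\]
using the identity above. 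Otherwise $|S_{p-1}| < k\cdot\tfrac{p-1}{p} < k$, so Lemma~\ref{not full} applies and yields $f(S_{p-1}) \ge \fopt\bigl(1 - (p/(p+1))^{p-1}\bigr)$, while $k_p > k/p$. Plugging these bounds in gives
\[
f(S_p) \;>\; \fopt\left(1 - \left(\tfrac{p}{p+1}\right)^{p-1}\right) + \tfrac{1}{p}\left(\tfrac{p}{p+1}\right)^{p}\fopt \;=\; \fopt\left(1 - \left(\tfrac{p}{p+1}\right)^{p}\right),
\]
by the same identity. This covers all cases.

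I do not expect any real obstacle: Lemma~\ref{full early} was designed to handle precisely the tradeoff between "filling $S$ early at high thresholds" and "filling $S$ late at a lower threshold", and the final step is just the algebraic identity that the parameters $T_i = (p/(p+1))^i$ were chosen to satisfy. The only place to be slightly careful is ensuring that the boundary subcase $|S_{p-1}| < k(p-1)/p$ is not missed, since Lemma~\ref{full early} does not apply with a negative $\alpha$; in that regime one simply falls back to Lemma~\ref{not full}, which is in fact stronger than what an $\alpha = 0$ application of Lemma~\ref{full early} would give.
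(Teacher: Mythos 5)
Your proof is correct and matches the paper's argument essentially step for step: the same three cases (empty slack via Lemma~\ref{not full} at $i=p$; full with $k_p$ small via Lemma~\ref{full early} at $i=p-1$; full with $k_p$ large via Lemma~\ref{not full} at $i=p-1$), driven by the same telescoping identity in the thresholds $T_i=(p/(p+1))^i$. The only cosmetic difference is that you place the boundary subcase $k_p=k/p$ with the Lemma~\ref{full early} branch while the paper places it with the Lemma~\ref{not full} branch; both work since at the boundary the two lemmas give the same bound.
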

\begin{proof}

Let us now consider the following cases:

\begin{enumerate}
\item If $|S|=|S_p|<k$, then by Lemma \ref{not full}, we have,
\[f(S_i) \geq \fopt\cdot (1-T_p) = \fopt\cdot\rb{1-\rb{\frac{p}{p+1}}^p}.\]
\item If $|S|=k$ and $k_p \geq \frac{k}{p}$.

Then $|S_{p-1}|<k$, thus,  using Lemma \ref{not full}, we get,
\begin{align*}
f(S_p) &\geq f(S_{p-1})+k_p \cdot \frac{T_p\cdot \fopt}{k} \\
&\geq \fopt\cdot \rb{1-\rb{\frac{p}{p+1}}^{p-1}} +\frac{k}{p}\cdot \rb{\frac{p}{p+1}}^{p}\cdot\frac{\fopt}{k} \\
&\geq \fopt\cdot\rb{1-\rb{\frac{p}{p+1}}^p}.
\end{align*}

\item If $|S|=k$ and $k_p < \frac{k}{p}$.

Let $k_p=k\cdot (\frac{1}{p}-\alpha)$, for $0\leq \alpha < \frac{1}{p}$. Thus, we have \[\sum_{i=1}^{p-1} k_i \geq k - k\cdot \rb{\frac{1}{p}-\alpha} \geq k\cdot \rb{\frac{p-1}{p}+\alpha}.\]
Therefore using Lemma \ref{full early}, we get,
\begin{align*}
f(S_p) &\geq f(S_{p-1})+k_p \cdot \frac{T_p\cdot \fopt}{k} \\
&\geq \fopt
\cdot \rb{1-\rb{\frac{p}{p+1}}^{p-1}+\alpha \rb{\frac{p}{p+1}}^{p}} + k\cdot\rb{\frac{1}{p}-\alpha} \cdot \rb{\frac{p}{p+1}}^{p}\cdot\frac{\fopt}{k} \\
&\geq \fopt\cdot\rb{1-\rb{\frac{p}{p+1}}^{p}}.
\end{align*}
\end{enumerate}
Therefore in all cases we proved, 
\[ f(S_p) \geq \fopt\cdot\rb{1-\rb{\frac{p}{p+1}}^{p}}.\]
\end{proof}

% !TEX root = main.tex
\section{Removing Assumption that OPT is Known}\label{sec:removing-opt} 

Any algorithm we introduced so far works under the assumption of knowing the value of the optimum solution in advance. Let $A$ be the representative of one of our algorithms, and $v$ be the estimation of the optimum solution. Observe that all of our algorithms work as follows: It first starts with an empty set $S =\emptyset$ and adds element $e$ to $S$ if $|S| < k$ and $f(e|S) \ge T \frac{v}{k}$ where $T$ is some fixed or adaptive constant depending on the algorithm. Denote by $A(v)$ the output of the algorithm given value $v$ as an estimation of the optimum solution. We proved that for any of our algorithms there is a constant $c$ such that $f(A(v)) \geq c\cdot \fopt$ if $v=\fopt$. It is also easy to see that, if $\alpha \cdot \fopt \leq v \leq \fopt$, for any $0 \leq \alpha \leq 1$, then $f(A(v)) \geq c\cdot v \geq c \cdot \alpha \cdot \fopt$.  
%Thus,  we need to estimate $\fopt$ up to a constant factor. 

To that end, we use the same approach as explained in \cite{badanidiyuru2014streaming}. Let $O=\{(1+\epsilon)^j|j\in \mathbb{Z}\}$ thus, there exists a value $v \in O$ such that $\frac{\fopt}{1+\epsilon}\leq v \leq \fopt $. Let $S_v=A(v)$ and $S=\text{argmax}_{v \in O} f(S_v)$. Therefore, $f(S) \geq  f(S_v) \geq \frac{c}{1+\epsilon}  \cdot \fopt \ge c\cdot(1-\epsilon) \cdot \fopt$. We wish to run a copy of the algorithm $A$ for any $v \in O$ in parallel, and output the best solution, however $|O|=\infty$.

To deal with this, we keep track of the maximum value element of the stream at any time. Let $m_i=\max_{1\leq j \leq i} f(\{e_j\})$ denote the maximum value element of the stream after observing $e_1, e_2, \ldots, e_i$. Clearly $m_i \leq \fopt \leq k\cdot m_i$. Also notice that the algorithm $A$ given  value $v$ as an estimation for $\fopt$, picks an element $e$ from the stream only if $f(e|S) \geq T\cdot \frac{v}{k}$. %Thus, at time $i$ we have \[v \leq \frac{k}{T} \cdot f(e|S)\leq \frac{k}{T} \cdot f(e) \leq \frac{k\cdot m_i}{T} .\] 

Therefore it suffices to keep the estimations in $O_i$ within the range $[m_i, \frac{k\cdot m_i}{T}]$. Hence, we define $O_i=\{(1+\epsilon)^j | j\in \mathbb{Z}, m_i\leq (1+\epsilon)^j \leq \frac{k \cdot m_i}{T}\}$. Thus for all $v \in O_i$, we know that any element with the marginal value at least $T\cdot \frac{v}{k}$ appears only after updating $O_i$. Hence for any $v \in O_i\setminus O_{i-1}$, we can start with the empty set $S_v=\emptyset$. Any time $m_i$ gets updated, we delete all $S_v$'s which $v \notin O_i$. We run $|O_i|$ copies of the algorithm $A$ in parallel for any $v \in O_i$.

\begin{algorithm}
\caption{\label{rem-opt} Guessing OPT}
\begin{algorithmic}[1]
	
	\State $m= 0$
	\For{$i=1$ \text{to} $n$}
		\State $m=\max (m, f(\{e_i\})$
        \State $O_i=\{(1+\epsilon)^j | j\in \mathbb{Z}, m\leq (1+\epsilon)^j \leq \frac{k \cdot m}{T}\}$
        \State Delete all $S_v$'s such that $v \notin O_i$
        %\State $O=\{(1+\epsilon)^i|i\in \mathbb{Z}\}$
		\State For each $v \in O_i \setminus O_{i-1}$ set $S_v=0$
        \For{$v \in O_i$}
         	\State $S_v=A(v)$
        \EndFor
    \EndFor \\
    \Return $\arg \max_{v \in O_n} f(S_v)$
\end{algorithmic}
\end{algorithm}

Therefore memory, and the update time of the new algorithm increases by the factor $|O_i|=\log_{1+\epsilon} \frac{k}{T} = O\rb{\frac{\log \frac{k}{T}}{\epsilon}}$, and it outputs  $c\cdot(1-\epsilon)$-approximate solution.

%\end{proof}

\end{document}